\def\eqref#1{equation~\ref{#1}}
\def\1{\bm{1}}
\def\eps{{\epsilon}}
\DeclareMathAlphabet{\mathsfit}{\encodingdefault}{\sfdefault}{m}{sl}
\SetMathAlphabet{\mathsfit}{bold}{\encodingdefault}{\sfdefault}{bx}{n}
\DeclareMathOperator*{\argmax}{arg\,max}
\newtheorem{theorem}{Theorem}
\newtheorem{lemma}{Lemma}
\newtheorem{definition}{Definition}[section]
\newcommand{\calH}{{\cal H}} 
\newcommand{\calS}{{\cal S}}
\newcommand{\goto}{\rightarrow}
\title{Improved algorithm and bounds for successive projection}
\author{Jiashun Jin \& Gabriel Moryoussef\\
Department of Statistics\\
Carnegie Mellon University\\
Pittsburgh, PA 15213, USA \\
\texttt{\{jiashun, gmoryous\}@andrew.cmu.edu} \\
\AND
Zheng Tracy Ke \& Jiajun Tang \& Jingming Wang \\
Department of Statistics \\
Harvard University \\
Cambridge, MA 02138, USA \\
\texttt{\{zke,jiajuntang,jingmingwang\}@fas.harvard.edu} 
}
\begin{document}

\maketitle


\begin{abstract} 
Given a $K$-vertex simplex in a $d$-dimensional  space, suppose 
we measure $n$ points on the 
simplex with noise (hence, some of the observed points fall outside the simplex). 
Vertex hunting is the problem of estimating the $K$ vertices of the simplex. A popular vertex hunting algorithm is   
successive projection algorithm (SPA). However,  SPA is observed to perform unsatisfactorily under strong noise or outliers.  We propose pseudo-point SPA  (pp-SPA). It uses a projection step and a denoise step to generate pseudo-points and feed them into SPA for vertex hunting. 
We derive error bounds for pp-SPA, leveraging on extreme value theory of (possibly) high-dimensional random vectors. The results suggest that pp-SPA has faster rates and better numerical performances than SPA. 
Our analysis includes an improved non-asymptotic bound for the original SPA, which is of independent interest. 
\end{abstract}

\tableofcontents

\section{Introduction}
Fix $d \geq 1$ and suppose we observe $n$ vectors $X_1, X_2, \ldots, X_n$ in $\mathbb{R}^d$, where 
\begin{equation} \label{model1} 
X_i = r_i  + \eps_i,  \qquad \eps_i \stackrel{iid}{\sim} N(0,  
\sigma^2 I_d). 
\end{equation} 
The Gaussian assumption is for technical simplicity and can be relaxed. 
For an integer $1 \leq K \leq d+1$,  we assume that there is a  
simplex with $K$ vertices ${\cal S}_0$ on the hyperplane ${\cal H}_0$ such that 
each $r_i$ falls within the simplex (note that a simplex with $K$ vertices always falls on a 
 $(K-1)$-dimensional hyperplane of $\mathbb{R}^d$).  In other words, let 
$v_1, v_2, \ldots, v_K\in\mathbb{R}^d$  be the vertices of the simplex and let $V = [v_1, v_2, \ldots, v_K]$.   
We assume that for each $1 \leq i \leq n$, there is a $K$-dimensional weight vector $\pi_i$ (a weight vector is vector where all entries are non-negative with a  unit sum) such that 
\begin{equation} \label{model2}
r_i = \sum_{k = 1}^K \pi_i(k) v_k =   V \pi_i. 
\end{equation} 
Here, $\pi_i$'s are unknown but are of major interest, and to estimate $\pi_i$, the key is vertex hunting (i.e., estimating the $K$ vertices of the simplex ${\cal S}_0$).  
In fact, once the vertices are estimated, we can estimate $\pi_1, \pi_2, \ldots, \pi_n$ by the relationship of $X_i \approx r_i = V \pi_i$.  Motivated by these, the primary interest of this paper is  vertex hunting (VH). The problem may arise in many application areas. 
{\it (1) Hyper-spectral unmixing}: 
Hyperspectral unmixing \citep{bioucas2012hyperspectral} is the problem of separating the pixel spectra from a hyperspectral image into a collection of constituent spectra. 
$X_i$ contains the spectral measurements of pixel $i$ at $d$ different channels, $v_1,\ldots,v_K$ are the constituent spectra (called {\it endmembers}), and $\pi_i$ contains the fractional {\it abundances} of endmembers at pixel $i$. It is of great interest to identify the endmembers and estimate the abundances. 
{\it (2) Archetypal analysis}.  Archytypal analysis \citep{cutler1994archetypal} is a useful tool for representation learning.  Take its application in genetics for example \citep{satija2015spatial}. Each $X_i$ is the gene expression of cell $i$, and each $v_k$ is an archetypal expression pattern. Identifying these archetypal expression patterns is useful for inferring a transcriptome-wide map of spatial patterning. 
{\it (3) Network membership estimation}.   Let $A \in \mathbb{R}^{n, n}$ be the adjacency matrix of an undirected network with $n$ nodes and $K$ communities.  Let $(\hat{\lambda}_k, \hat{\xi}_k)$ be the $k$-th eigenpair of $A$, and write $\widehat{\Xi} = [\hat{\xi}_1, \hat{\xi}_2, \ldots,  \hat{\xi}_K]$. 
Under certain network models (e.g., \cite{huang2023pcabm, airoldi2008mixed, JiZhuMM, SCORE-Review, GRDPG}), 
there is a $K$-vertex simplex in $\mathbb{R}^K$ such that 
for each $1 \leq i \leq n$, the $i$-th row of $\widehat{\Xi}$ falls (up to noise corruption) 
inside the simplex, and vertex hunting is an important step in community analysis. 
{\it (4) Topic modeling}. Let $D \in \mathbb{R}^{n, p}$ be the frequency of word counts of $n$ text documents,  where $p$ is the dictionary size. If $D$ follows the Hoffman's model  with $K$ topics, then there is also 
simplex in the spectral domain \citep{ke2017new}), so vertex hunting is useful.  

Existing vertex hunting approaches can be roughly divided into two lines:  constrained optimizations and stepwise algorithms. In the first line, one 
proposes an objective function and estimates the vertices by solving an optimization problem.  The minimum volume transform (MVT) \citep{craig1994minimum},  archetypal analysis (AA) \citep{cutler1994archetypal,javadi2020nonnegative},  and N-FINDER \citep{winter1999n} are approaches of this line. In the second line, one uses a stepwise algorithm which iteratively identifies one vertex of the simplex at a time.  This includes the 
 popular successive projection algorithm (SPA) \citep{SPA}.   SPA is a stepwise greedy algorithm.  It does not require an objective function (how to select the objective function may be a bit subjective),  is computationally efficient, and has a theoretical guarantee.  This makes SPA especially interesting.

{\bf Our contributions}. 
Our primary interest is to improve SPA.   Despite many good properties aforementioned, SPA is a greedy algorithm, which is vulnerable to noise and outliers, and may be significantly inaccurate.  Below, we list two reasons why SPA may underperform.  
First, typically in the literature (e.g., \cite{SPA}),  one apply the SPA directly to the $d$-dimensional data points $X_1, X_2, \ldots, X_n$, regardless of what $(K, d)$ are. However, since the true vertices $v_1,\ldots,v_K$ lie on a $(K-1)$-dimensional hyperplane, if we directly apply SPA to $X_1, X_2, \ldots, X_n$, the resultant hyperplane formed by the estimated simplex vertices is likely to deviate from the true hyperplane, due to noise corruption. This will cause inefficiency of SPA. Second, since the SPA is a greedy algorithm, it tends to be biased outward bound. When we apply SPA,  it is frequently found that most of the estimated vertices fall outside of true simplex (and some of them are faraway from the true simplex).    

\begin{wrapfigure}{r}{.45\linewidth}
        \includegraphics[width=.95\linewidth, trim=0 20 0 20]{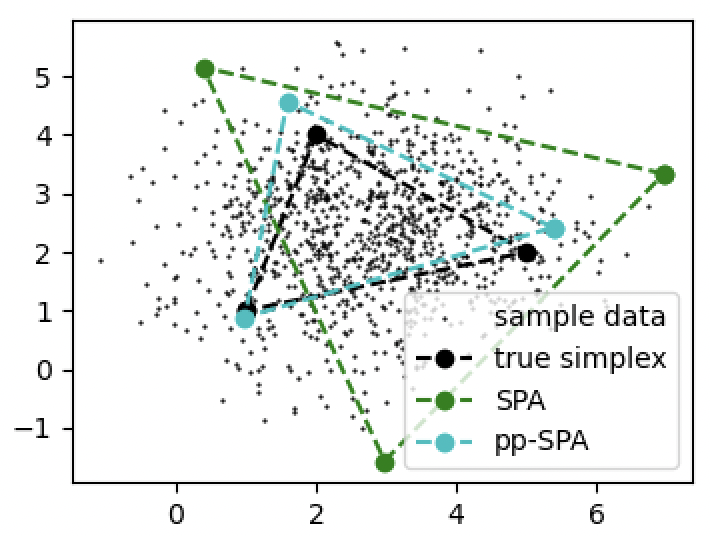}
        \caption{A numerical example ($d$=$2$, $K$=$3$).} \label{fig:triangle} 
    \end{wrapfigure}
For illustration, Figure \ref{fig:triangle} presents an example,  where $X_1, X_2, \ldots, X_n$ are generated from Model (\ref{model1}) with $(n, K, d, \sigma) = (1000, 3, 2, 1)$, and $r_i$  are uniform samples over $T$ ($T$ is the triangle with vertices $(1,1)$, $(2,4)$, and $(5,2)$). 
In this example, the true vertices (large black points) form a triangle (dashed black lines) on a $2$-dimensional hyperplane. The green and cyan-colored triangles are estimated by SPA and 
pp-SPA (our main algorithm to be introduced; since $d$ is equal to $K-1$, the hyperplane projection is skipped), 
respectively. In this example, the estimated simplex by SPA is significantly biased outward bound, suggesting a large room for improvement. Such outward bound bias of SPA is related to the design of the algorithm and is frequently observed \citep{gillis2019successive}.

To fix the issues,  we propose pseudo-point SPA (pp-SPA) 
as a new approach to vertex hunting. It contains two novel ideas as follows. First, since the simplex ${\cal S}_0$ is on 
the hyperplane $\calH_0$, we first use all data $X_1, \ldots, X_n$ to estimate the hyperplane, and then project all these points  to the hyperplane. Second, since SPA is vulnerable to noise and outliers, 
a reasonable idea is to add a denoise step before we apply SPA. 
We propose a {\it pseudo-point (pp) approach}  for denoising, 
where for each data point, we replace it by a pseudo point, computed as  the average of {\it 
all of its neighbors within a radius of $\Delta$}.  Utilizing information in the 
nearest neighborhood  is a known idea in classification \citep{HTF},  and the well-known $k$-nearest neighborhood (KNN) algorithm is such an approach. 
However,  KNN or similar ideas were never used as a denoise step for vertex hunting. 
Compared with KNN, the idea of pseudo-point approach is motivated by the underlying geometry and is for a different 
purpose. For these reasons, the idea is new at least to some extent.


We have two theoretical contributions.  
First, \cite{gillis2013fast} derived a non-asymptotic error bound for SPA,  but the bound is not always tight.  
Using a very different proof, we derive a sharper non-asymptotic bound 
for SPA. The improvement is substantial 
in the following case.   Recall that $V = [v_1, v_2, \ldots, v_K]$ and let $s_k(V)$ be the $k$-th largest singular value of $V$. The bound in \cite{gillis2013fast} is proportional to $1/s_K^2(V)$, while our 
our bound is proportional to $1/s_{K-1}^2(V)$. 
Since all vertices lie on a $(K-1)$-dimensional hyperplane, 
$s_{K-1}(V)$ is bounded away from $0$, as long as the volume of true simplex is lower bounded. However,  
 $s_K(V)$ may be  $0$ or nearly $0$; in this case,
the bound in \cite{gillis2013fast} is too conservative, but our bound 
is still valid. 
%
%
%
Second, we use our new non-asymptotic bound to derive the rate for pp-SPA, 
and show that the rate is much faster than the rate of SPA, especially when $d \gg K$. 
Even when $d=O(K)$, the bound we get for pp-SPA 
is still sharper than the bound of the original SPA. The main reason is that,  
for those points far away outside the true simplex, the corresponding 
pseudo-points we generate  are much closer to the true simplex. 
This greatly reduces the {\it outward bound biases} of SPA (see Figure~\ref{fig:triangle}). 



{\bf Related literature}. 
It was observed that SPA is susceptible to outliers, motivating several variants of SPA \citep{gillis2015semidefinite, mizutani2018efficient, gillis2019successive}. For example, \cite{bhattacharyya2020finding,bakshi2021learning,nadisic2023smoothed} modified SPA by incorporating smoothing at each iteration. In contrast, our approach involves generating all pseudo points through neighborhood averaging before executing all successive projection steps. Additionally, we exploit the fact that the simplex resides in a low-dimensional hyperplane and apply a hyperplane projection step prior to the denoising and successive projection steps. Our theoretical results surpass those existing works for several reasons: (a) we propose a new variant of SPA; (b) our analyses build upon a better version of the non-asymptotic bound than the commonly-used one in \cite{gillis2013fast}; and (c) we incorporate delicate random matrix and extreme value theory in our analysis. 


\section{A new vertex hunting algorithm} \label{sec:method} 
The successive projection algorithm (SPA) \citep{SPA} is a popular vertex hunting method. This is an iterative algorithm that estimates one vertex at a time. At each iteration, it first projects all points to the orthogonal complement of those previously found vertices and then takes the point with the largest Euclidean norm as the next estimated vertex. See Algorithm~\ref{alg:SPA} for a detailed description.  
\begin{algorithm}[htb!]
\caption{The (orthodox) Successive Projection Algorithm (SPA)} \label{alg:SPA}
{\bf Input}: $X_1, X_2, \ldots, X_n$, and $K$.

Initialize $u={\bf 0}_p$ and $y_i=X_i$, for $1\leq i\leq n$.  For $k=1,2,\ldots,K$, 
\begin{itemize}\itemsep 0pt
\item Update $y_i$ to $(I_d-uu')y_i$. Obtain $i_k=\argmax_{1\leq i\leq n}\|y_i\|$. Update $u =\|y_{i_k}\|^{-1} y_{i_k}$.  
\end{itemize}
{\bf Output}: $\hat{v}_k=X_{i_k}$, for $1\leq k\leq K$. 
\end{algorithm}

We propose pp-SPA as an improved version of the (orthodox) SPA, containing two main ideas: a {\it hyperplane projection} step and a {\it pseudo-point denoise} step. 
We now discuss  the two steps separately.

Consider the {\it hyperplane projection} step first. In our model (\ref{model2}), the noiseless points $r_1,\ldots,r_n$ live in a $(K-1)$-dimensional hyperplane. However, with noise corruption, the observed data $X_1,\ldots,X_n$ are not exactly contained in a hyperplane. Our proposal is to first use data to find a `best-fit' hyperplane and then project all data points to this hyperplane. 
Fix $d\geq K\geq 2$. Given a point $x_0\in\mathbb{R}^d$ and a projection matrix $H\in\mathbb{R}^{d\times d}$ with rank $K-1$, the $(K-1)$-dimensional hyperplane associated with $(x_0, H)$ is $\calH = \{x\in\mathbb{R}^d: (I_d - H)(x - x_0) = 0\}$. 
For any $x \in \mathbb{R}^d$,   
the Euclidean distance between $x$ and the hyperplane is equal to $\|(I_d - H)(x - x_0)\|$. Given 
$X_1, X_2, \ldots, X_n$, we aim to find a hyperplane to minimize the sum of square distances:  
\begin{equation} \label{hplane1} 
\min_{(x_0,H)}\{S(x_0, H)\}, \quad\mbox{where}\quad S(x_0, H) = \sum_{i = 1}^n \|(I_d - H) (X_i - x_0)\|^2. 
\end{equation} 
Let $Z = [Z_1,\ldots,Z_n]$, where $Z_i=X_i-\bar{X}$ and $\bar{X} = \frac{1}{n} \sum_{i =1}^n X_i$. 
For each $k$, let $u_k\in\mathbb{R}^d$ be the $k$th left singular vector of $Z$. Write $U=[u_1,\ldots,u_{K-1}]$. 
The next lemma is proved in the appendix. 
\begin{lemma}  \label{lemma:projection} 
$S(x_0,H)$ is minimized by  $x_0 = \bar{X}$ and $H = UU'$. 
\end{lemma}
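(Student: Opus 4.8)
The plan is to decouple the joint minimization over $(x_0, H)$, first eliminating $x_0$ and then solving a constrained eigenvalue problem in $H$. Throughout I treat $H$ as an orthogonal projection (symmetric and idempotent), so that $P := I_d - H$ is also an orthogonal projection and $\|(I_d-H)(x-x_0)\|$ is genuinely the Euclidean distance to $\calH$. Writing $S(x_0,H) = \sum_{i=1}^n (X_i - x_0)' P (X_i - x_0)$, I would first fix $H$ and minimize over $x_0$. Since $S$ is a convex quadratic in $x_0$ with Hessian $2nP \succeq 0$, setting the gradient $\nabla_{x_0} S = -2n\,P(\bar{X} - x_0)$ to zero gives the stationarity condition $P(\bar{X} - x_0) = 0$. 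The key observation is that $x_0 = \bar{X}$ satisfies this for \emph{every} admissible $H$; hence the inner minimum over $x_0$ is attained at $x_0=\bar X$ regardless of $H$, and the joint minimization reduces to minimizing $S(\bar X, H)$ over $H$ alone.

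With $x_0 = \bar{X}$ we have $X_i - x_0 = Z_i$, so $S(\bar X, H) = \sum_{i=1}^n \|(I_d - H)Z_i\|^2$. Next I would use the Pythagorean decomposition for complementary orthogonal projections, $\|Z_i\|^2 = \|H Z_i\|^2 + \|(I_d - H)Z_i\|^2$, to rewrite
\begin{equation*}
S(\bar X, H) = \sum_{i=1}^n \|Z_i\|^2 - \sum_{i=1}^n \|H Z_i\|^2 = \trace(ZZ') - \trace(H Z Z'),
\end{equation*}
where I used $\sum_i \|HZ_i\|^2 = \sum_i Z_i' H Z_i = \trace(H\sum_i Z_iZ_i') = \trace(HZZ')$ (valid because $H'H = H$). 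Since $\trace(ZZ')$ does not depend on $H$, minimizing $S(\bar X,H)$ is equivalent to \emph{maximizing} $\trace(H\, ZZ')$ over all rank-$(K-1)$ orthogonal projections $H$.

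Finally, I would invoke the Ky Fan maximum principle (the variational characterization of the top eigenspace): for the symmetric positive semidefinite matrix $M = ZZ'$ with eigenvalues $\lambda_1 \geq \lambda_2 \geq \cdots \geq \lambda_d \geq 0$, the maximum of $\trace(HM)$ over rank-$r$ orthogonal projections equals $\sum_{j=1}^r \lambda_j$ and is achieved by the projection onto the span of the top $r$ eigenvectors of $M$. Taking $r = K-1$ and recalling that the leading eigenvectors of $ZZ'$ are exactly the leading left singular vectors $u_1,\ldots,u_{K-1}$ of $Z$, the maximizer is $H = UU'$ with $U = [u_1,\ldots,u_{K-1}]$, which completes the argument. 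The main obstacle is the last step: the set of rank-$(K-1)$ orthogonal projections is a non-convex manifold, so naive Lagrange/calculus arguments do not directly apply, and the constrained maximization must be handled through the Ky Fan / Courant–Fischer eigenvalue inequalities rather than by differentiation. A minor point worth noting is that the minimizer in $x_0$ is not unique (any $x_0 \in \bar X + \mathrm{range}(H)$ works), but $x_0=\bar X$ is a canonical valid choice, and the minimizing $H$ is unique precisely when $\lambda_{K-1} > \lambda_K$.
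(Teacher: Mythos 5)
Your proof is correct, and it reaches the same pivotal identity as the paper --- namely that after setting $x_0=\bar X$ the objective equals $\trace(ZZ') - \trace(HZZ')$, so the problem reduces to maximizing $\trace(HZZ')$ over rank-$(K-1)$ orthogonal projections --- but you get there and finish differently. The paper sets up a joint Lagrangian in $(x_0,Q)$ with $H=QQ'$, derives first-order conditions to conclude that the columns of the optimal $Q$ must be singular vectors of $Z$, and then selects the leading ones by inspecting the objective $\trace(D_Z^2)-\|D_ZU_Z'Q\|_{\rm F}^2$. You instead decouple cleanly: the inner minimization over $x_0$ is a convex quadratic solved by $\bar X$ uniformly in $H$, and the outer maximization is dispatched in one stroke by the Ky Fan maximum principle. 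Your route is arguably tighter on two points where the paper's argument is informal: the Lagrangian there uses a scalar multiplier for the matrix constraint $Q'Q=I_{K-1}$ (it should be a symmetric matrix of multipliers), and stationarity on the non-convex Stiefel manifold only identifies critical points, so an additional comparison among them is still needed --- which is exactly the gap the Ky Fan inequality closes rigorously. The trade-off is that you import a named variational theorem, whereas the paper's computation is self-contained modulo those informalities. Your closing remarks on the non-uniqueness of the optimal $x_0$ and the condition $\lambda_{K-1}>\lambda_K$ for uniqueness of $H$ are correct and go slightly beyond what the paper states.
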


For each $1 \leq i \leq n$, we first project each $X_i$ to $\tilde{X}_i$ and then transform $\tilde{X}_i$ to $Y_i$, where 
\begin{equation} \label{DefinetildeX}
\tilde{X}_i: = \bar{X} +  H(X_i - \bar{X}), \qquad  Y_i:  = U' \tilde{X}_i; \qquad  \mbox{note that $H=UU'$ and $Y_i \in \mathbb{R}^{K-1}$}. 
\end{equation} 
These steps reduce noise. To see this, we note that the true simplex lives in a hyperplane with a projection matrix $H_0=U_0U_0'$. It can be shown that $U\approx U_0$ (up to a rotation) and $Y_i \approx r_i^*  + U_0'  \eps_i$, with $r_i^*=U_0'\bar{X}+U_0' r_i$. These points $r_i^*$ still live in a simplex (in dimension $(K-1)$). 
Comparing this with the original model $X_i = r_i + \eps_i$, we see that 
$U'_0 \eps_i$ are iid samples from $N(0, \sigma^2 I_{K-1})$, 
and $\eps_i$ are iid samples from $N(0, \sigma^2 I_d)$. 
Since $K-1\ll d$ in may applications, the projection 
may significantly reduce the dimension of the noise variable. 
Later in Section~\ref{subsec:ppSPA}, we see that this implies a 
significant improvement in the convergence rate.

Next, consider the {\it neighborhood denoise} step. Fix an $\Delta > 0$ and an integer $N \geq 1$. Define the $\Delta$-neighborhood of $Y_i$ by $
B_{\Delta}(Y_i) = \{ x\in\mathbb{R}^{K-1}: \|x-Y_i\|\leq \Delta\}$. 
When there fewer than $N$ points in $B_{\Delta}(Y_i)$ (including $Y_i$ itself), 
remove $Y_i$ for the vertex hunting step next. 
Otherwise, replace $Y_i$ by the average of all points in $B_{\Delta}(Y_i)$ (denoted by 
$Y_i^*$). The main effect of the denoise effect is on the points that are {\it  far outside the simplex}. 
For these points, we either delete them for the vertex hunting step (see below),  
or replace it by a point closer to the simplex. This way, 
we pull all these points ``towards" the simplex, and thus reduce the 
estimation error in the subsequent vertex hunting step.  

Finally, we apply the (orthodox) successive projection algorithm (SPA) to $Y_1^*, Y_2^*,\cdots, Y_n^*$ and let  
$\hat{v}_1, \hat{v}_2, \ldots, \hat{v}_K$ be the estimated vertices. Let $\hat{V} = [\hat{v}_1, \hat{v}_2, 
\ldots, \hat{v}_K]$.  See Algorithm \ref{alg:PPSPA}.  
 
\begin{algorithm}[htb!]
\caption{Pseudo-Point Successive Projection Algorithm (pp-SPA)} \label{alg:PPSPA}
{\bf Input}: $X_1, X_2, \ldots, X_n \in \mathbb{R}^d$, the number of vertices $K$, and tuning parameters $(N, \Delta)$.

\begin{description}
\item[Step 1] {\it (Projection)}. Obtain $\bar{X} = \frac{1}{n}\sum_{i = 1}^n X_i$ and $Z = X-\bar{X}{\bf 1}_n'$. Let $U = [u_1,  \ldots, u_{K-1}]$ contain the first $(K-1)$ singular vectors of $Z$. For $1 \leq i \leq n$, let $Y_i = U' X_i\in\mathbb{R}^{K-1}$.
\item[Step 2] {\it (Denoise)}. Let $B_{\Delta}(Y_i)= \{ x\in\mathbb{R}^{K-1}: \|x-Y_i\|\leq \Delta\}$ denote the $\Delta$-neighborhood of $Y_i$. 
\begin{itemize}
\item If there are fewer than $N$ points (including $Y_i$ itself) in $B_{\Delta}(Y_i)$, delete this point.  
\item Otherwise, replace $Y_i$ by $Y_i^*$, which is the average of all points in $B_{\Delta}(Y_i)$. 
\end{itemize}
\item[Step 3] {\it (VH)}. Let ${\cal J}\subset\{1,\ldots,n\}$ be the set of retained points in Step 2. Apply Algorithm~\ref{alg:SPA} to $\{Y_i^*\}_{i\in {\cal J}}$ to get $\hat{v}_1^*,\hat{v}_2^*,\ldots,\hat{v}_K^*\in\mathbb{R}^{K-1}$. Let $\hat{v}_k= (I_d - H) \bar{X} + U\hat{v}_k^*\in\mathbb{R}^d$, $1\leq k\leq K$. 
\end{description}
{\bf Output}: The estimated vertices $\hat{v}_1,\ldots,\hat{v}_K$. 
\end{algorithm}

{\bf Remark 1}: The complexity of the orthodox SPA is $O(ndK)$. Regarding the complexity of pp-SPA,  it applies SPA on $(K-1)$-dimensional pseudo-points, so the complexity is $O(nK^2)$. To obtain these pseudo points, we need a projection step and a denoise step. The projection step extracts the first $(K-1)$ singular vectors of a matrix $Z (n\times d)$. Performing the whole SVD decomposition would result in $O(\min(n^2d, nd^2))$ time complexity. However, faster approach exists such as the truncated SVD which would decrease this complexity to $O(ndK)$. In the denoise step, we need to find the $\Delta$-neighborhoods for all $n$ points $Y_1,Y_2,\ldots,Y_n$. This can be made computationally efficient using the KD-Tree. The construction of KD-Tree takes $O(n\log n)$, and the search of neighbors typically takes $O\bigl(n^{(2-\frac{1}{K-1})} + nm\bigr)$, where $m$ is the maximum number of points in a neighborhood. 
 
{\bf Remark 2}: Algorithm~\ref{alg:PPSPA} has tuning parameters $(N, \Delta)$, where $\Delta$ is the radius of the neighborhood, and $N$ is used to prune out points far away from the simplex. For $N$, we typically take $N = \log(n)$ in theory and $N = 3$ in practice. Concerning $\Delta$, we use a heuristic choice $\Delta = \max_{i} \|Y_i-\bar{Y}\|/5$, where $\bar{Y} = \frac{1}{n}\sum_{i=1}^n Y_i$. It works satisfactorily in simulations.

{\bf Remark 3} {\it (P-SPA and D-SPA)}: We can view pp-SPA as a generic algorithm, where we may either replace the projection step by a different dimension reduction step, or replace the denoise step by a different denoise idea, or both.  In particular, it is interesting to consider two special cases: (i) {\it P-SPA}, which skips the denoise step and only uses the projection and VH steps; 
(ii) {\it D-SPA}, which skips the projection step and only uses the denoise and VH steps. 
We analyze these algorithms, together with pp-SPA (see Table~\ref{tb:order} and Section~\ref{sec:Proof-main} of the appendix). In this way, we can better  
understand the respective improvements of the projection step and the denoise step.

%
%
%

\section{An improved bound for SPA} \label{subsec:SPA}
Recall that $V = [v_1, v_2, \ldots, v_K]$, whose columns are the $K$ vertices of the true simplex ${\cal S}_0$. 
Let 
\begin{equation} \label{old-g-beta}
\gamma(V) = \max_{1 \leq k \leq K} \{\|v_k\|\},  \qquad g(V) = 1+80\frac{\gamma^2(V)}{s_K^2(V)}, \qquad \beta(X)  = \max_{1 \leq i \leq n} \{\|\eps_i\|\}.   
\end{equation}

\begin{lemma}[\cite{gillis2013fast}, orthodox SPA] \label{lemma:gill} 
Consider $d$-dimensional vectors $X_1, \ldots, X_n$, where $X_i = r_i + \eps_i$, $1 \leq i \leq n$ and $r_i$ satisfy model (\ref{model2}). 
For each $1 \leq k \leq K$ there is an $i$ such that $\pi_i=e_k$. Suppose  $\max_{1\leq i\leq n}\|\eps_i\|\leq\frac{s_K(V)}{1+80\gamma^2(V)/s_K^2(V)}\min\{\frac{1}{2\sqrt{K-1}}, \frac{1}{4}\}$. 
Apply the orthodox SPA to $X_1, \ldots, X_n$ and let $\hat{v}_1,\hat{v}_2,\ldots,\hat{v}_K$ be the output. Up to a permutation of these $K$ vectors,
\[
\max_{1\leq k\leq K}\{ \|\hat{v}_k - v_k\|\}\leq \Bigl[1+80\frac{\gamma^2(V)}{s_K^2(V)}\Bigr]\max_{1\leq i\leq n}\|\eps_i\| := g(V) \cdot \beta.   
\]
\end{lemma}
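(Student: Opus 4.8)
The plan is to prove the bound by induction on the iteration count $k$, maintaining the invariant that after $k$ steps the selected points $X_{i_1},\ldots,X_{i_k}$ are, up to a permutation, each within $g(V)\cdot\beta$ of a distinct true vertex. The engine of the whole argument is a single geometric fact: for any orthogonal projection $P$ and any simplex point $r_i = V\pi_i$, one has $\|P r_i\| = \|\sum_k \pi_i(k)\, P v_k\| \le \max_k \|P v_k\|$, since $\pi_i$ is a weight vector. Thus in the noiseless world the point of largest projected norm is always a vertex, and SPA is exact; the real work is to show that this mechanism survives the additive perturbation $\eps_i$.

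First I would dispose of the base case. At step $1$ SPA selects $i_1 = \argmax_i \|X_i\|$. Writing $\gamma = \gamma(V)$, the geometric fact gives $\|r_i\| \le \gamma$ with equality only at a vertex, so from $\|X_{i_1}\| \ge \max_k \|X_{j_k}\| \ge \gamma - \beta$ (using the separability hypothesis that some $r_{j_k} = v_k$) I get $\|r_{i_1}\| \ge \gamma - 2\beta$. The remaining task is the quantitative converse: a simplex point whose norm is within $O(\beta)$ of the maximum must be within $g(V)\cdot\beta$ of a vertex. This is where $s_K(V)$ enters, through the strong convexity of $\pi \mapsto \|V\pi\|^2$ on the simplex, whose curvature is controlled below by $s_K^2(V)$; it is precisely this conversion of a small function-value gap into a small argument gap that produces the amplification factor $\gamma^2(V)/s_K^2(V)$.

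For the inductive step I would compare the projection $P_{k-1}$ that SPA actually uses (onto the orthogonal complement of the estimated $\hat{v}_1,\ldots,\hat{v}_{k-1}$) with the ideal projection $P_{k-1}^\star$ onto the orthogonal complement of the corresponding true vertices. A Davis--Kahan / Wedin-type bound shows $\|P_{k-1} - P_{k-1}^\star\|$ is at most of order (accumulated vertex error)$/s_K(V)$, since the already-selected true vertices have smallest singular value bounded below by $s_K(V)$. Applying this to vectors of norm at most $\gamma$ contributes an additional error of order $\gamma/s_K(V)$ times the previous error, on top of the direct noise $\beta$ and the base-case amplification. I would then check two things: that each fresh vertex still satisfies $\|P_{k-1}^\star v_k\| \ge c\, s_K(V)$ for an absolute constant $c>0$, so a not-yet-selected vertex has genuinely large projected norm; and that interior points and previously selected vertices have strictly smaller projected norm, so SPA is forced to pick a fresh vertex. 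The noise condition $\max_i\|\eps_i\| \le \tfrac{s_K(V)}{g(V)}\min\{\tfrac{1}{2\sqrt{K-1}},\tfrac14\}$ is calibrated exactly to keep the perturbations inside this regime, and the factor $\min\{1/(2\sqrt{K-1}),1/4\}$ absorbs the dimension-dependent slack in the projection estimates.

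The main obstacle is the bookkeeping in the inductive step: one must simultaneously lower-bound the projected norm of every remaining vertex and upper-bound that of every interior point and previously selected vertex, while the projection operator itself is known only approximately. Closing this recursion with an explicit, step-independent constant -- collapsing the per-step amplifications into the single factor $1 + 80\,\gamma^2(V)/s_K^2(V)$ rather than one that grows with $K$ -- is the delicate part, and is where the specific constant $80$ is forced. Everything else (the triangle inequalities, the convexity estimate, and the Wedin bound) is routine once the correct quantities to track have been identified.
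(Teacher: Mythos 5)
First, a point of reference: the paper does not actually prove Lemma~\ref{lemma:gill}. It is imported from \cite{gillis2013fast} as a known result, and the paper's own analysis (Appendix~B) is devoted to the sharper Theorem~\ref{thm:SPA-maintext}, whose proof is explicitly described as taking a different, geometric route from the matrix-norm argument of Gillis--Vavasis. So the comparison here is between your sketch, the original Gillis--Vavasis strategy, and the paper's replacement machinery. Your proposal is a faithful reconstruction of the original strategy: induction over SPA iterations; the observation that $\|V\pi\|\le\max_k\|v_k\|$ for weight vectors so that noiseless SPA picks vertices; and the conversion of a small gap in $\|V\pi\|^2$ into a small gap in $\pi$ via the $2s_K^2(V)$-strong convexity of $\pi\mapsto\|V\pi\|^2$ (Hessian $2V'V\succeq 2s_K^2(V)I_K$), which is exactly where the amplification $\gamma^2(V)/s_K^2(V)$ originates. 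Your perturbation step is also sound in outline, since every column submatrix of $V$ has smallest singular value at least $s_K(V)$. The paper's own proof of the improved theorem replaces this global strong-convexity step with a local geometric one: Lemma~\ref{lem:norm} quantifies the drop of $\|\sum_i w_i x_i\|$ below $\sum_i w_i\|x_i\|$ in terms of pairwise distances, and Lemma~\ref{lem:simplex} lower-bounds $\|v_k-v_\ell\|$ by $\sqrt{2}\,s_{K-1}(V)$ using the fact that $\pi-\tilde{\pi}$ is orthogonal to ${\bf 1}_K$. That orthogonality is precisely what the strong-convexity route cannot see, which is why your argument is intrinsically tied to $s_K(V)$ and can only reproduce the weaker cited bound, not the paper's improvement --- an acceptable outcome here, since the weaker bound is what the statement asserts.

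The one genuine gap is the inductive step, which you flag as ``the delicate part'' but do not close. Showing that the recursion terminates with a single $K$-independent factor $1+80\gamma^2(V)/s_K^2(V)$ --- rather than an error that compounds like $(\gamma/s_K)^k$ through the successive deflation operators --- is the entire technical content of the Gillis--Vavasis proof, not bookkeeping. One must verify, uniformly over iterations, that the discrepancy between the actual deflation (onto the orthocomplement of the noisy $X_{i_1},\ldots,X_{i_{k-1}}$) and the ideal one does not degrade the lower bound $\|P_{k-1}^\star v_k\|\ge c\,s_K(V)$ on fresh vertices nor the gap separating them from interior points, and that the noise condition $\beta\le\frac{s_K(V)}{g(V)}\min\{\frac{1}{2\sqrt{K-1}},\frac14\}$ suffices for this at every step simultaneously. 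As written, the proposal is a correct roadmap with the hardest segment left unpaved.
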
 
Lemma \ref{lemma:gill} is among the best known results for SPA, but 
this bound is still not satisfying. 
One issue is that $s_K(V)$ depends on the location (i.e., center) of ${\cal S}_0$, but how well we can do vertex hunting should not depend on its location. We expect that vertex hunting is difficult only if ${\cal S}_0$ has a small volume (so the simplex is nearly flat). 
To see how these insights connect to singular values of $V$,    
let $\bar{v} =K^{-1}\sum_{k = 1}^K v_k$ be the center of ${\cal S}_0$, define $\tilde{V} = [v_1 - \bar{v},  \ldots, v_K - \bar{v}]$,  
and let $s_k(\tilde{V})$ be the $k$-th singular value of $\tilde{V}$.  
The next lemma is proved in the appendix:
\begin{lemma} \label{lemma:SPA} 
$\mathrm{Volume}({\cal S}_0)=\frac{\sqrt{K}}{(K-1)!}\prod_{k=1}^{K-1}s_{k}(\tilde{V})$, $s_{K-1}(V)\geq s_{K-1}(\tilde{V})$, and $s_K(V) \leq \sqrt{K} \|\bar{v}\|$.    
\end{lemma}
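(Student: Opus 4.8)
The plan is to route all three assertions through the $K\times K$ centering matrix $J = I_K - \frac{1}{K}\mathbf{1}_K\mathbf{1}_K'$, the orthogonal projection onto $\mathbf{1}_K^\perp$. The starting observation is that $\tilde V = VJ$, since the $k$-th column of $VJ$ is $v_k - \frac1K\sum_j v_j = v_k - \bar v$. Because $J$ is symmetric idempotent ($J'=J=J^2$), this gives $\tilde V\tilde V' = VJV'$, and hence the single decomposition
\[
VV' \;=\; VJV' + V(I_K-J)V' \;=\; \tilde V\tilde V' + \tfrac1K(V\mathbf{1}_K)(V\mathbf{1}_K)' \;=\; \tilde V\tilde V' + K\,\bar v\,\bar v',
\]
where I used $V\mathbf{1}_K=\sum_k v_k=K\bar v$. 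This identity drives the last two claims directly.

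For $s_{K-1}(V)\ge s_{K-1}(\tilde V)$, the rank-one correction $K\bar v\bar v'$ is positive semidefinite, so $VV'\succeq\tilde V\tilde V'$ in the Loewner order; by Weyl's monotonicity theorem the ordered eigenvalues obey $\lambda_k(VV')\ge\lambda_k(\tilde V\tilde V')$ for every $k$, and taking square roots yields $s_k(V)\ge s_k(\tilde V)$ for all $k$, in particular at $k=K-1$. For $s_K(V)\le\sqrt K\,\|\bar v\|$, I would instead work with the $K\times K$ Gram matrix $V'V$ (which sidesteps any constraint between $d$ and $K$) and use the Rayleigh characterization $s_K^2(V)=\lambda_{\min}(V'V)=\min_{\|u\|=1}\|Vu\|^2$. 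Evaluating the quotient at the unit test vector $u=\frac{1}{\sqrt K}\mathbf{1}_K$ gives $\|Vu\|^2=\frac1K\|V\mathbf{1}_K\|^2=K\|\bar v\|^2$, so the minimum is at most $K\|\bar v\|^2$, which is the claim after a square root.

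The volume identity is the part I expect to require the most care. I would first invoke translation invariance to replace the vertices by $w_k:=v_k-\bar v$ (the columns of $\tilde V$), which sum to zero, and then pass to an orthonormal basis $Q\in\mathbb{R}^{d\times(K-1)}$ of their span, writing $\tilde V=QC$ with $C\in\mathbb{R}^{(K-1)\times K}$; since $Q'Q=I_{K-1}$ the singular values are preserved, $s_k(\tilde V)=s_k(C)$, and the simplex is realized in $\mathbb{R}^{K-1}$ with vertex matrix $C$ still satisfying $C\mathbf{1}_K=0$. In that ambient dimension the standard determinant formula reads $\mathrm{Volume}(\mathcal S_0)=\frac{1}{(K-1)!}|\det A|$ with the appended-row matrix $A=\bigl[\begin{smallmatrix}\mathbf{1}_K'\\ C\end{smallmatrix}\bigr]$ (obtained from the edge-vector determinant by subtracting the first column). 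The crucial step is that the zero-sum constraint makes $AA'$ block diagonal: $AA'=\bigl[\begin{smallmatrix} K & (C\mathbf{1}_K)'\\ C\mathbf{1}_K & CC'\end{smallmatrix}\bigr]=\mathrm{diag}(K,\,CC')$, so $(\det A)^2=\det(AA')=K\det(CC')=K\prod_{k=1}^{K-1}s_k^2(C)$, and substituting $s_k(C)=s_k(\tilde V)$ gives $\mathrm{Volume}(\mathcal S_0)=\frac{\sqrt K}{(K-1)!}\prod_{k=1}^{K-1}s_k(\tilde V)$. The main obstacle is bookkeeping rather than depth: justifying the reduction to $\mathbb{R}^{K-1}$ and the appended-row formula, and tracking that the factor $\sqrt K$ arises precisely from the $(1,1)$ block $\mathbf{1}_K'\mathbf{1}_K=K$ produced by centering. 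I would treat the non-degenerate case $\mathrm{rank}(\tilde V)=K-1$ first and then note that in the flat case both sides vanish, so the identity persists by continuity.
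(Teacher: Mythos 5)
Your proof is correct and follows essentially the same route as the paper's: the decomposition $VV' = \tilde V\tilde V' + K\bar v\bar v'$ (via $\tilde V\mathbf{1}_K=0$) combined with Weyl/Loewner monotonicity for the singular-value claims, and the appended-row determinant whose Gram matrix becomes block diagonal for the volume identity. The only cosmetic differences are that you bound $s_K(V)$ by a Rayleigh quotient at $u=K^{-1/2}\mathbf{1}_K$ where the paper applies Weyl's inequality to $VV'$, and you reduce to $\mathbb{R}^{K-1}$ via an orthonormal basis of the span rather than an explicit rotation.
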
 
Lemma \ref{lemma:SPA} yields several observations. First, as we shift the location of ${\cal S}_0$ so that its center gets close to the origin, $\|\bar{v}\|\approx 0$, and $s_K(V)\approx 0$. In this case, the bound in Lemma~\ref{lemma:gill} becomes almost useless. Second,  the volume of ${\cal S}_0$ is determined by the first $(K-1)$ singular values of $\tilde{V}$, irrelevant to the $K$th singular value. Finally, if the volume of ${\cal S}_0$ is lower bounded, then we immediately get a lower bound for $s_{K-1}(V)$. These observations motivate us to modify $g(V)$ in (\ref{old-g-beta}) to a new quantity that depends on $s_{K-1}(V)$ instead of $s_K(V)$; see (\ref{new-g-beta}) below.

\begin{wrapfigure}{r}{.4\linewidth}
\centering
        \includegraphics[width=.65\linewidth]{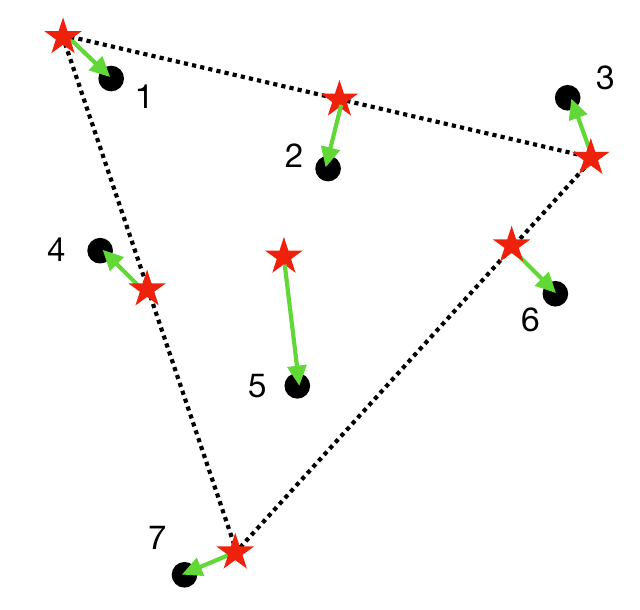}
        \caption{A toy example to show the difference between $\beta(X)$ and $\beta_{\text{new}}(X,V)$, where $\beta(X)=\max_i \|\epsilon_i\|$, and $\beta_{\text{new}}(X, V)\leq \max_{i\notin\{2,5\}}\|\epsilon_i\|$. } \label{fig:toy} 
    \end{wrapfigure}
Another issue of the bound in Lemma~\ref{lemma:gill} is that $\beta(X)$ depends on the maximum of $\|\epsilon_i\|$, which is too conservative. Consider a toy example in Figure~\ref{fig:toy}, where ${\cal S}_0$ is the dashed triangle, the red stars represent $r_i$'s and the black points are $X_i$'s. We observe that $X_2$ and $X_5$ are deeply in the interior of ${\cal S}_0$, and they should not affect the performance of SPA. We hope to modify $\beta(X)$ to a new quantity that does not depend on $\|\epsilon_2\|$ and $\|\epsilon_5\|$. 
One idea is to modify $\beta(X)$ to $\beta^*(X, V)=\max_i\mathrm{Dist}(X_i, \calS_0)$, where $\mathrm{Dist}(\cdot, \calS_0)$ is the Euclidean distance from a point to the simplex. For any point inside the simplex, this Euclidean distance is exactly zero. 
Hence, for this toy example, $\beta^*(X, V)\leq \max_{i\notin\{1,2,5\}}\|\epsilon_i\|$. 
However, we cannot simply replace $\beta(X)$ by $\beta^*(X, V)$, because $\|\epsilon_1\|$ also affects the performance of SPA and should not be left out. Note that $r_1$ is the only point located at the top vertex. When $X_1$ is far away from $r_1$, no matter whether $X_1$ is inside or outside $\calS_0$, SPA still makes a large error in estimating this vertex.  
This inspires us to define $\beta^{\dag}(X, V)=\max_{k} \min_{\{i:r_i=v_k\}}\|\epsilon_i\|$. When $\beta^{\dag}(X, V)$ is small, it means for each $v_k$, there exists at least one $X_i$ that is close enough to $v_k$. To this end, let $\beta_{\text{new}}(X,V)=\max\{\beta^*(X,V), \beta^{\dag}(X,V)\}$. Under this definition, $\beta_{\text{new}}(X)\leq \max_{i\notin\{2,5\}}\|\epsilon_i\|$, which is exactly as hoped.

Inspired by the above discussions, we introduce (for a point $x\in\mathbb{R}^d$, $\mathrm{Dist}(x, \calS_0)$ is the Euclidean distance from $x$ to $\calS_0$; this distance is zero if $x\in \calS_0$)
\begin{eqnarray} \label{new-g-beta}
g_{\mathrm{new}}(V) &=& 1+\frac{30\gamma(V)}{s_{K-1}(V)}\max\Bigl\{1, \frac{\gamma(V)}{s_{K-1}(V)}\Bigr\}, \cr 
\beta_{\mathrm{new}}(X) &=& \max \bigl\{  \max_{1\leq i\leq n}\mathrm{Dist}(X_i, \calS_0),\;   \max_{1\leq k\leq K} \min_{\{i:r_i=v_k\}}\|X_i-v_k\|\bigr\}. 
\end{eqnarray}
\begin{theorem} \label{thm:SPA-maintext} 
Consider $d$-dimensional vectors $X_1, \ldots, X_n$, where $X_i = r_i + \eps_i$, $1 \leq i \leq n$ and $r_i$ satisfy model (\ref{model2}). 
For each $1 \leq k \leq K$ there is an $i$ such that $\pi_i=e_k$. Suppose for a properly small universal constant $c^*>0$, 
$\max\{1, \frac{\gamma(V)}{\sigma_{K-1}(V)}\}\beta_{\mathrm{new}}(X,V) \leq c^*\frac{s^2_{K-1}(V)}{\gamma(V)}$.
Apply the orthodox SPA to $X_1, \ldots, X_n$ and let $\hat{v}_1,\hat{v}_2,\ldots,\hat{v}_K$ be the output.  Up to a permutation of these $K$ vectors,
\[
\max_{1\leq k\leq K}\{ \|\hat{v}_k - v_k\|\}\leq g_{\mathrm{new}}(V) \beta_{\mathrm{new}}(X, V). 
\]
\end{theorem}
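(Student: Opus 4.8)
The plan is to revisit the greedy induction underlying the analysis of SPA, but to replace the two crude ingredients of Lemma~\ref{lemma:gill} --- the global conditioning $s_K(V)$ and the global noise level $\beta(X)=\max_i\|\eps_i\|$ --- by sharper, geometry-aware quantities. Write $X=V\Pi+E$ with $\Pi=[\pi_1,\dots,\pi_n]$ and $E=[\eps_1,\dots,\eps_n]$. SPA maintains, after $k$ rounds, an orthogonal projector $P_k$ onto the complement of the directions already selected, and chooses $i_{k+1}=\argmax_{1\le i\le n}\|P_k X_i\|$. I would prove by induction on $k$ the invariant that the points selected in the first $k$ rounds lie within $g_{\mathrm{new}}(V)\,\beta_{\mathrm{new}}(X,V)$ of $k$ \emph{distinct} true vertices. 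The whole argument then reduces to a single one-step lemma: conditioned on the invariant after $k$ rounds, the next selected $X_{i_{k+1}}$ is close to one of the $K-k$ not-yet-selected vertices, within the same error budget.

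For the one-step lemma, and in particular for the appearance of $s_{K-1}(V)$ in place of $s_K(V)$, the key observation is that the argmax objective only ever compares points through \emph{weight differences}. Since every clean point $r_i=V\pi_i$ is a convex combination, $\|P_k r_i\|$ is convex in $\pi_i$ over the weight simplex and is maximized at a vertex; the robustness of the greedy pick is governed by how fast this objective decreases as $\pi_i$ moves from a vertex $e_k$ into the simplex, i.e.\ along directions $w$ with $\mathbf 1^\top w=0$. Because $V w=\tilde V w$ for such $w$, the relevant restricted-invertibility bound $\|V(\pi-\pi')\|\gtrsim s\,\|\pi-\pi'\|$ is controlled by the smallest singular value of $V$ transverse to the weight constraint, never by $s_K(V)$, which is exactly the quantity that collapses when the origin lies in the affine hull of $\calS_0$. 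I would then invoke Lemma~\ref{lemma:SPA} to certify that this transverse non-degeneracy is bounded below in terms of $s_{K-1}(V)$ and the simplex volume (hence location-independent), and propagate it through $P_k$ to obtain the one-step estimate with denominator $s_{K-1}(V)$ and the stated $\max\{1,\gamma(V)/s_{K-1}(V)\}$ dichotomy.

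To handle $\beta_{\mathrm{new}}(X,V)$ I would decompose each point relative to $\calS_0$: write $X_i=\mathrm{proj}_{\calS_0}(X_i)+\delta_i$, where $\mathrm{proj}_{\calS_0}(X_i)\in\calS_0$ is the metric projection and $\|\delta_i\|=\mathrm{Dist}(X_i,\calS_0)\le\beta^*(X,V)$. The base point $\mathrm{proj}_{\calS_0}(X_i)$ is itself a genuine simplex point, so it feeds the convexity/one-step argument \emph{without error}; only the $\delta_i$ component, uniformly bounded by $\beta^*$, can perturb the argmax. This is precisely what lets interior points with large $\|\eps_i\|$ but zero distance to $\calS_0$ (such as $X_2,X_5$ in Figure~\ref{fig:toy}) drop out of the bound. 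The second term $\beta^{\dag}(X,V)=\max_k\min_{\{i:r_i=v_k\}}\|X_i-v_k\|$ enters in the opposite direction: to certify that each new vertex is actually \emph{reachable} by the greedy maximizer, I need at least one observation genuinely near every $v_k$, and $\beta^{\dag}$ is exactly the best such witness. Taking $\beta_{\mathrm{new}}=\max\{\beta^*,\beta^{\dag}\}$ balances the ``no spurious extreme points'' and ``every vertex represented'' requirements.

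I expect the main obstacle to be the bookkeeping of error propagation across the $K$ rounds while keeping the invariant tight enough that the constant stays at $30$ rather than degrading geometrically in $k$. Each $P_k$ is built from \emph{approximate} vertices, so its deflation directions carry the errors of previous rounds; one must show that these do not compound when passing the restricted-invertibility bound of the preceding paragraph through the perturbed projector. This is where the smallness hypothesis $\max\{1,\gamma(V)/s_{K-1}(V)\}\,\beta_{\mathrm{new}}(X,V)\le c^*\,s_{K-1}^2(V)/\gamma(V)$ does the work, guaranteeing that every perturbed projector stays within a controlled neighborhood of its ideal counterpart throughout the run. The base case ($k=0$) is also delicate, since before any deflation the argmax depends on absolute norms and hence on the location of $\calS_0$; I would dispatch it by the same affine reduction, noting that differences of candidate norms are again governed by $s_{K-1}(V)$ and $\beta_{\mathrm{new}}$ rather than by $s_K(V)$.
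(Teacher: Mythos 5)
Your plan is essentially the paper's own proof: an induction over the SPA iterations using barycentric coordinates, with $s_{K-1}(V)$ entering through restricted invertibility of $V$ on the mean-zero subspace ($Vw=\tilde Vw$ when ${\bf 1}'w=0$), eigenvalue interlacing to show the deflation projectors do not degrade this quantity, metric projection onto $\calS_0$ to reduce the noise term to $\beta^*$, and the $\beta^{\dag}$ witnesses to guarantee each vertex is reachable by the argmax. The one ingredient you only gesture at --- the explicit quantitative bound on how much the norm of a genuinely mixed convex combination falls below the maximal vertex norm (the paper's Lemma~\ref{lem:norm}, applied inside simplicial neighborhoods of the near-maximal vertices) --- is what turns your qualitative ``convexity forces the argmax to a vertex'' step into the stated constant, but the strategy is the same.
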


Note that $g_{\mathrm{new}}(V)\leq g(V)$ and $\beta_{\mathrm{new}}(X,V)\leq \beta(X)$. The non-asymptotic bound in Theorem~\ref{thm:SPA-maintext} is always better than the bound in Lemma~\ref{lemma:gill}. We use an example to illustrate that the improvement can be substantial. Let $K=d=3$, $v_1=(20,20,10)$, $v_2 = (20,30,10)$, and $v_3 = (30,22,10)$. We put $r_1, r_2, r_3$ at each of the three vertices, $r_4, r_5, r_6$ at the mid-point of each edge, and $r_7$ at the center of the simplex (which is $\bar{v}$).  
We sample $\epsilon_1^*,\epsilon_2^*,\ldots,\epsilon_7^*$ i.i.d., from the unit sphere in $\mathbb{R}^3$. Let $\epsilon_i=0.01\epsilon_i^*$, for $1\leq i\leq 6$, and $\epsilon_7=0.05\epsilon_i^*$. By straightforward calculations, 
$g(V)=4.3025\times 10^4$, $g_{\mathrm{new}}(V)= 6.577 \times 10^2$, $\beta(X)= 0.05$, $\beta_{new}(X,V)=0.03$. 
Therefore, the bound in Lemma~\ref{lemma:gill} gives $\max_k \|\hat{v}_k - v_k\|\leq 2151.3$, while the improved bound in Theorem~\ref{thm:SPA-maintext} gives $\max_k \|\hat{v}_k - v_k\|\leq 18.7$. A more complicated version of this example can be found in Section~\ref{supp:Example} of the supplementary material.

The main reason we can achieve such a significant improvement is that our proof idea is completely different from the one in \cite{gillis2013fast}. The proof in \cite{gillis2013fast} is driven by {\it matrix norm inequalities} and does not use any geometry.  
This is why they need to rely on quantities such as $s_K(V)$ and $\max_{i}\|\epsilon_i\|$ to control the norms of various matrices in their analysis. It is very difficult to modify their proof to obtain Theorem~\ref{thm:SPA-maintext}, as the quantities in (\ref{new-g-beta}) are insufficient to provide strong matrix norm inequalities.  In contrast, our proof is guided by {\it geometric insights}. We construct a {\it simplicial neighborhood} near each true vertex and show that the estimate $\hat{v}_k$ in each step of SPA must fall into one of these simplicial neighborhoods.

\section{The bound for pp-SPA and its improvement over SPA}  \label{subsec:ppSPA}  
We focus on the orthodox SPA in 
Section \ref{subsec:SPA}. 
In this section, we show that we can further improve the bound significantly if we use   pp-SPA for vertex hunting.  
Recall that we have also introduced P-SPA and D-SPA in Section~\ref{sec:method} as simplified versions of pp-SPA. We establish error bounds for P-SPA, D-SPA, and pp-SPA, under the Gaussian noise assumption in (\ref{model1}).
%
A high-level summary is in Table~\ref{tb:order}. Recall that P-SPA, D-SPA, and pp-SPA all create pseudo-points and then feed them into SPA. Different ways of creating pseudo-points only affect the term $\beta_{\mathrm{new}}(X, V)$ in the bound in Theorem~\ref{thm:SPA-maintext}. Assuming that $g_{\mathrm{new}}(V)\geq C$, the order of $\beta_{\mathrm{new}}(X,V)$ fully captures the error bound. Table~\ref{tb:order} lists the sharp orders of $\beta_{\mathrm{new}}(X, V)$ (including the constant).

\begin{table}[h!]
\caption{The sharp orders of $\beta_{\mathrm{new}}(X, V)$ (settings: $K\geq 3$, $d$ satisfies (\ref{cond:Kd}), $s_{K-1}(V) >C$, and $m$ satisfies the condition in Theorem~\ref{thm:main2}). P-SPA and D-SPA use {\it the projection only} and {\it the denoise only}, respectively. The constant $c_0\in (0,1)$ comes from $m$, and the constant $a_1>2$ is as in Lemma~\ref{lemma:chi2}. } \label{tb:order}
\centering 
\scalebox{0.9}{
\begin{tabular}{l | ccccccccc}
\hline
& $d\ll \log(n)$ & $d= a_0\log (n)$ & $\log(n)\ll d\ll n^{1-\frac{2(1-c_0)}{K-1}}$ & $d\gg n^{1-\frac{2(1-c_0)}{K-1}}$\\
\hline
SPA & $\sqrt{2\log(n)}$ & $\sqrt{a_1\log (n)}$ & $\sqrt{d}$ & $\sqrt{d}$\\
P-SPA & $\sqrt{2\log(n)}$ & $\sqrt{2\log(n)}$ &  $\sqrt{2\log(n)}$ & $\sqrt{2\log(n)}$\\
D-SPA & $\sqrt{2c_0\log(n)}$ & NA & NA & NA\\
pp-SPA&  $\sqrt{2c_0\log(n)}$ &$\sqrt{2c_0\log(n)}$ &  $\sqrt{2c_0\log(n)}$ & $\sqrt{2\log(n)}$ \\
\hline
\end{tabular}}
\end{table}


The results suggest that pp-SPA always has a strictly better error bound than SPA. When $d\gg\log(n)$, the improvement is a factor of $o(1)$; the larger $d$, the more improvement. When $d = O(\log(n))$, the improvement is a constant factor that is strictly smaller than $1$. In addition, by comparing P-SPA and D-SPA with SPA, we have some interesting observations: 
\begin{itemize}
\item {\it The projection effect}. From the first two rows of Table~\ref{tb:order}, the error bound of P-SPA is never worse than that of SPA. In many cases,  P-SPA leads to a significant improvement. When $d\gg \log(n)$, the rate is faster by a factor of $\sqrt{\log(n)/d}$ (which is a huge improvement for high-dimensional data). When $d\asymp \log(n)$, there is still a constant factor of improvement. 
\item {\it The denoise effect}. We compare the error bounds for P-SPA and pp-SPA, where the difference is caused by the denoise step. 
In three out of the four cases of $d$ in Table~\ref{tb:order}, pp-SPA strictly improves P-SPA by a constant factor $c_0<1$.  

We note that pp-SPA applies denoise to the projected data in $\mathbb{R}^{K-1}$. We may also apply denoise to the original data in $\mathbb{R}^d$, which gives D-SPA. By Table~\ref{tb:order}, when $d\ll \sqrt{\log(n)}$, D-SPA improves SPA by a constant factor. However, for $d\gg \log(n)$, we always recommend applying denoise to the projected data. In such cases, the leading term in the extreme value of chi-square (see Lemma~\ref{lemma:chi2}) is $d$, so the denoise is not effective if applied to original data.    
\end{itemize}  
Table~\ref{tb:order} and the above discussions are for general settings. In a slightly more restrictive setting (see Theorem~\ref{thm:main1} below), both projection and denoise can improve the error bounds by a factor of $o(1)$.

We now present the rigorous statements. Owing to space constraint, we only state the error bounds of pp-SPA in the main text. The error bounds of P-SPA and D-SPA can be found in the appendix.  

\subsection{Some useful preliminary results} 
\label{subsec:prelim} 
Recall that $V = [v_1, \ldots, v_K]$ and $r_i  = V \pi_i$, $1 \leq i \leq n$. Let $\bar{v}$, $\bar{r}$, and $\bar{\pi}$ be the empirical means of $v_k$'s, $r_i$'s,  and $\pi_i$'s, respectively.   Introduce 
$\tilde{V} =  [v_1 - \bar{v},  \ldots, v_K - \bar{v}]$, $R = n^{-1/2} [r_1 - \bar{r},  \ldots, r_n - \bar{r}]$, and $G = (1/n) \sum_{i = 1}^n (\pi_i - \bar{\pi}) (\pi_i - \bar{\pi})'$. 
Lemma \ref{lemma:G} relates singular values of $R$ to those of $G$ and $V$ and is proved in the appendix ($A \preceq B$: $B-A$ is positive  semi-definite. Also,  $\lambda_k(G)$ is the $k$-th largest (absolute value)  eigenvalue of $G$, $s_k(V)$ is the $k$-th largest singular value of $V$; same below).  
\begin{lemma} \label{lemma:G} 
The following statements are true: (a) $RR' = VGV'$,  (b) $\lambda_{K-1}(G) \cdot \tilde{V} \tilde{V}' \preceq V G V' \preceq \lambda_1(G)\cdot \tilde{V} \tilde{V}'$, and (c) $\lambda_{K-1}(G)\cdot  s_{K-1}^2(\tilde{V}) \preceq \sigma_{K-1}^2(R)  \preceq  \lambda_1(G) \cdot s_{K-1}^2(\tilde{V})$.  
\end{lemma}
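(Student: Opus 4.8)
The plan is to prove the three claims in the order stated, since (c) is a direct consequence of (a) and (b).

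For part (a) I would argue by direct computation. Since $r_i = V\pi_i$, averaging gives $\bar r = V\bar\pi$, and hence $r_i - \bar r = V(\pi_i - \bar\pi)$. Collecting the centered weight vectors into $\Pi = [\pi_1 - \bar\pi, \ldots, \pi_n - \bar\pi] \in \mathbb{R}^{K\times n}$, this reads $R = n^{-1/2}V\Pi$, so that $RR' = V(n^{-1}\Pi\Pi')V' = VGV'$, using $n^{-1}\Pi\Pi' = G$ by definition. This step is routine.

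The heart of the argument is part (b), and the key structural observation is that both $G$ and the centered vertex matrix $\tilde V$ act nontrivially only on the subspace orthogonal to the all-ones vector $\mathbf{1}_K$. Because each $\pi_i$ is a weight vector, $\mathbf{1}_K'(\pi_i - \bar\pi) = 0$, so $G\mathbf{1}_K = 0$; as $G \succeq 0$, its eigenvalues are $\lambda_1(G) \ge \cdots \ge \lambda_{K-1}(G) \ge \lambda_K(G) = 0$, with $\mathbf{1}_K$ spanning the last eigenspace. On the other side, writing $J = I_K - K^{-1}\mathbf{1}_K\mathbf{1}_K'$ for the centering projection, one has $\bar v = K^{-1}V\mathbf{1}_K$ and therefore $\tilde V = VJ$, so $\tilde V\tilde V' = VJV'$ (using $J^2 = J = J'$). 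I would then fix an arbitrary $x \in \mathbb{R}^d$, set $y = V'x$, and reduce both quadratic forms to $\mathbb{R}^K$: namely $x'VGV'x = y'Gy$ and $x'\tilde V\tilde V'x = y'Jy = \|Jy\|^2$. Decomposing $y = y_0 + y_1$ with $y_0$ parallel to $\mathbf{1}_K$ and $y_1 = Jy \perp \mathbf{1}_K$, the kernel property gives $y'Gy = y_1'Gy_1$; restricting $G$ to the invariant subspace $\mathbf{1}_K^\perp$, where its eigenvalues are exactly $\lambda_1(G), \ldots, \lambda_{K-1}(G)$, the Rayleigh-quotient bounds yield $\lambda_{K-1}(G)\|y_1\|^2 \le y_1'Gy_1 \le \lambda_1(G)\|y_1\|^2$. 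Since $\|y_1\|^2 = y'Jy = x'\tilde V\tilde V'x$ and $x$ is arbitrary, this is precisely the desired pair of matrix inequalities.

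For part (c) I would substitute (a) to replace $RR'$ by $VGV'$, sandwich it via (b), and invoke Weyl's monotonicity theorem ($A \preceq B \Rightarrow \lambda_k(A) \le \lambda_k(B)$ for every $k$, via Courant--Fischer). Applying this at index $K-1$ and using $\lambda_{K-1}(c\,\tilde V\tilde V') = c\,s_{K-1}^2(\tilde V)$ for a scalar $c \ge 0$, together with $\sigma_{K-1}^2(R) = \lambda_{K-1}(RR')$, produces $\lambda_{K-1}(G)\,s_{K-1}^2(\tilde V) \le \sigma_{K-1}^2(R) \le \lambda_1(G)\,s_{K-1}^2(\tilde V)$. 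The main obstacle is entirely in part (b): the inequalities hold only because $G$ and $\tilde V\tilde V'$ share the kernel direction $\mathbf{1}_K$, so the delicate points are to verify $G\mathbf{1}_K = 0$ from the weight-vector constraint and to confirm that restricting to $\mathbf{1}_K^\perp$ leaves exactly the top $K-1$ eigenvalues of $G$, so that the correct lower constant is $\lambda_{K-1}(G)$ rather than $\lambda_K(G) = 0$. Everything else is bookkeeping.
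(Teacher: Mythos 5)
Your proposal is correct and follows essentially the same route as the paper: part (a) by the identical factorization $R=n^{-1/2}V\Pi$, part (b) hinging on the key fact $G\mathbf{1}_K=0$ so that both sides of the inequality annihilate the $\mathbf{1}_K$ direction, and part (c) via Weyl's monotonicity. The only cosmetic difference is in (b), where you run a single Rayleigh-quotient argument on $\mathbf{1}_K^{\perp}$ using $\tilde V=VJ$, whereas the paper writes the two directions as explicit positive semi-definite matrix identities (conjugating $G-\lambda_{K-1}(G)I_K+K^{-1}\lambda_{K-1}(G)\mathbf{1}_K\mathbf{1}_K'$ by $V$, and $\lambda_1(G)I_K-G$ by $\tilde V$); the substance is the same.
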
 

To analyze SPA and pp-SPA,  we need precise results on the extreme values of chi-square variables. Lemma \ref{lemma:chi2} is proved in the appendix.  
\begin{lemma} \label{lemma:chi2} 
Let $M_n$ be the maximum of $n$ $iid$ samples from $\chi_d^2(0)$.  As $n \goto \infty$, (a) if $d\ll \log (n)$, then $M_n /(2 \log(n)) \goto 1$, (b) if $d \gg \log(n)$, then $M_n / d \goto 1$, and (c) if $d=a_0\log (n)$ for a constant  $a_0>0$, then $M_n / (a_1 \log (n) ) \goto 1$ where $a_1>2$ is unique solution of the equation $a_1 - a_0 \log (a_1) = 2 + a_0  - a_0 \log (a_0)$ 
(convergence in three cases are convergence  in probability). 
\end{lemma}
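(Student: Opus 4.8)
The plan is to treat this as a classical extreme-value problem and to locate $M_n$ via a matching pair of bounds: a first-moment (union) bound for the upper tail and an independence-based bound for the lower tail. Writing $Q_1,\dots,Q_n$ for the i.i.d.\ $\chi^2_d$ variables and $S_d(t)=\mathbb{P}(Q_1>t)$ for the common survival function, I would use
\[
\mathbb{P}(M_n>t)\le n\,S_d(t),\qquad \mathbb{P}(M_n\le t)=\bigl(1-S_d(t)\bigr)^n\le \exp\bigl(-n\,S_d(t)\bigr).
\]
Thus it suffices to exhibit, in each regime, levels $u_n^{\pm}=(1\pm\eps)u_n$ with $n\,S_d(u_n^{+})\goto 0$ and $n\,S_d(u_n^{-})\goto\infty$; the two displays then force $u_n^{-}\le M_n\le u_n^{+}$ with probability tending to one, and letting $\eps\downarrow 0$ yields the claimed ratios. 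This scheme delivers exactly convergence in probability, which is what the three statements assert.

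The crux is a sufficiently sharp expansion of $\log S_d(t)$. I would start from the exact identity $S_d(t)=\Gamma(d/2,t/2)/\Gamma(d/2)$, where $\Gamma(s,z)$ is the upper incomplete gamma function. Whenever the evaluation point lies to the right of the mode (i.e.\ $t/2>d/2-1$), a one-term Laplace / integration-by-parts estimate gives $\Gamma(s,z)=z^{s-1}e^{-z}(1+o(1))$, so with $s=d/2$, $z=t/2$ and Stirling for $\log\Gamma(s)$,
\[
\log S_d(t)=-\tfrac{t}{2}+\Bigl(\tfrac{d}{2}-1\Bigr)\log\tfrac{t}{2}-\log\Gamma\!\bigl(\tfrac{d}{2}\bigr)+O(\log\log n).
\]
Adding $\log n$ and optimizing over $t$ pins down $u_n$. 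In regime~(a) ($d\ll\log n$) the dominant balance is $\log n-t/2$, giving $u_n\sim 2\log n$; the $d$-dependent correction equals $\tfrac{d}{2}\log\!\bigl(2e\log n/d\bigr)$, which is $o(\log n)$ precisely because $(x/2)\log(1/x)\goto 0$ as $x=d/\log n\goto 0$. In regime~(c) ($d=a_0\log n$) I would substitute $t=a_1\log n$; the $\log\log n$ contributions then cancel between $(\tfrac{d}{2})\log\tfrac{t}{2}$ and $\log\Gamma(\tfrac{d}{2})$, and setting the surviving $O(\log n)$ coefficient of $\log n+\log S_d(t)$ to zero reproduces exactly $a_1-a_0\log a_1=2+a_0-a_0\log a_0$. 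A short convexity analysis of $a\mapsto a-a_0\log a$ (strictly convex, minimized at $a_0$) selects the relevant larger root, which one checks exceeds $\max\{2,a_0\}$.

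Regime~(b) ($d\gg\log n$) I would handle by concentration rather than the gamma asymptotics, since $\chi^2_d$ itself concentrates at $d$. The lower bound is immediate: $M_n\ge Q_1$ and $Q_1/d\goto 1$, so $M_n/d\ge 1-o_{\mathbb{P}}(1)$. For the upper bound I would invoke the Laurent--Massart deviation inequality $\mathbb{P}(Q_1-d\ge 2\sqrt{dx}+2x)\le e^{-x}$: choosing $x=\Theta(\eps^2 d)$ gives $S_d(d(1+\eps))\le e^{-\Theta(\eps^2 d)}$, whence $n\,S_d(d(1+\eps))\le e^{\log n-\Theta(\eps^2 d)}\goto 0$ because $\log n\ll d$. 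In regimes~(a) and~(c) the sharp transition across $u_n$ is guaranteed by $\tfrac{d}{dt}\log S_d(t)\approx -\tfrac12+\tfrac{d/2}{t}$, which stays bounded away from $0$ at $t\asymp u_n$; hence an $\eps$-relative change in $t$ multiplies $n\,S_d(t)$ by a factor $n^{\pm\Theta(\eps)}$, sending it to $0$ or $\infty$ as needed.

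The main obstacle is controlling the incomplete-gamma tail with enough uniformity in the critical regime~(c), where $s=d/2$ and $z=t/2$ diverge together at a fixed ratio $z/s\to a_1/a_0>1$; there the estimate $\Gamma(s,z)\sim z^{s-1}e^{-z}$ must be justified carefully (via the Laplace boundary term, or Temme's uniform asymptotics), and the Stirling remainder in $\log\Gamma(s)$ must be shown to be $o(\log n)$ so that it does not corrupt the constant $a_1$. Once the $\log\log n$ cancellation and the $o(\log n)$ control of all remainders are in place, $a_1$ is determined exactly, and the three cases all follow from the union/independence sandwich above.
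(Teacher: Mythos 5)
Your proposal is correct, and its computational core coincides with the paper's: both reduce the problem to locating the level $u_n$ at which $n\,\mathbb{P}(\chi^2_d>u_n)\asymp 1$, using an integration-by-parts (Mills-ratio) estimate for the upper incomplete gamma tail together with Stirling's formula, and then solving the resulting balance equation in each regime; in case (c) you arrive at exactly the paper's equation $a_1-a_0\log a_1=2+a_0-a_0\log a_0$, and your convexity discussion correctly isolates the larger root (the one exceeding $a_0$, which is also $>2$), a point the paper glosses over by calling the solution ``unique.'' Two genuine, if modest, differences: first, where the paper simply asserts ``by basic extreme value theory, $M_n/b_n\to 1$'' for the quantile $b_n$ solving $\mathbb{P}(z_1\ge b_n)=1/n$, you make this step rigorous with the two-sided sandwich $\mathbb{P}(M_n>t)\le n S_d(t)$ and $\mathbb{P}(M_n\le t)\le\exp(-nS_d(t))$, verifying that an $\epsilon$-perturbation of $u_n$ drives $nS_d$ to $0$ or $\infty$ --- this is a tightening of the paper's argument, not merely a restyling, since the quantile convergence does require exactly this tail-regularity check; second, in regime (b) you replace the paper's solution of $y-\log y-1=o(1)$ by the law of large numbers for the lower bound and a Laurent--Massart concentration bound for the upper bound, which is more elementary and sidesteps the gamma asymptotics where they are weakest ($t$ close to the mode). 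Your own flagged obstacle --- uniformity of $\Gamma(s,z)\sim z^{s-1}e^{-z}$ in regime (c) --- is handled in the paper by the explicit two-sided bound $2x_0^{d/2-1}e^{-x_0/2}\le\int_{x_0}^{\infty}x^{d/2-1}e^{-x/2}\,dx\le x_0\cdot x_0^{d/2-1}e^{-x_0/2}$ for $x_0\ge d$, whose multiplicative slack contributes only $O(\log\log n)$ to $\log S_d(t)$ when $t\asymp\log n$, exactly as your $O(\log\log n)$ remainder anticipates; so no gap remains.
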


\subsection{Regularity conditions and main theorems}

We assume 
\begin{equation}  \label{cond:Kd} 
 K = o(\log(n)/\log\log(n)), \qquad d = o(\sqrt{n}). 
\end{equation}  
These are mild conditions. In fact, in practice, the dimension of the 
true simplex is usually relatively low, so the first condition is 
mild. Also, when the (low-dimensional) true simplex 
is embedded in a high dimensional space, it is not preferable 
to directly apply vertex hunting. Instead, one would use tools such as 
PCA to significantly reduce the dimension first and then 
perform vertex hunting. For this reason, the second condition is also mild. 
Moreover, recall that $G = n^{-1} \sum_{i = 1}^n (\pi_i - \bar{\pi})(\pi_i - \bar{\pi})'$ 
is the empirical covariance matrix of the (weight vector) $\pi_i$ and 
$\gamma(V) = \max_{1 \leq k \leq K} \{\|v_k\|\}$. We assume for some constant $C>0$, 
\begin{equation} \label{cond:G}  
\lambda_{K-1}(G) \geq C^{-1}, \qquad \lambda_1(G) \leq C, \qquad \gamma(V) \leq C. 
\end{equation} 
The first two items are a mild balance condition on $\pi_i$ and the last one is a natural condition on $V$. 
Finally, in order for the (orthodox) SPA to perform well, we need 
\begin{equation} \label{cond:SPA2} 
\sigma \sqrt{\log(n)} / s_{K-1}(\tilde{V}) \goto 0. 
\end{equation} 
In many applications, vertex hunting is used as a module in the main algorithm, and the data points fed into VH are from previous steps of some algorithm and satisfy $\sigma=o(1)$ (for example, see \cite{MSCORE, ke2017new}). Hence, this condition is reasonable. 




We present the main theorems (which are used to obtain Table~\ref{tb:order}). In what follows, Theorem~\ref{thm:main2} is for a general setting, and Theorem~\ref{thm:main1} concerns a slightly more restrictive setting. For each setting, we will specify explicitly the theoretically optimal choices of thresholds $(t_n, \epsilon_n)$ in pp-SPA.

For $1 \leq k \leq K$, let $J_k=\{i: r_i=v_k\}$ be the set of $r_i$ located at vertex $v_k$, and let $n_k=|J_k|$, for $1\leq k\leq K$. Let $\Gamma(\cdot)$ denote the standard Gamma function. Define 
\begin{equation}\label{Definec2}
m = \min\{n_1, n_2, \ldots, n_K\}, \qquad c_2 = 0.5 (2e^{2})^{-\frac{1}{K-1}} \sqrt{2/(K-1)} \bigl[\Gamma (\frac{K+1}{2}) \bigr]^{\frac{1}{K-1}}. 
\end{equation}
Note that as $K\to\infty$, $c_2 \goto 0.5 / \sqrt{e}$. We also introduce
\begin{equation} \label{Definemalphan} 
\alpha_n  = \frac{\sqrt d}{\sqrt n s^2_{K-1}(\tilde{V})}  \bigl(1+   \sigma  \sqrt{\max\{d, 2\log (n)\}}\bigr), \qquad  b_n=  \frac{2 \sigma }{\sqrt n} \sqrt{\max\{d, 2\log (n)\}}.   
\end{equation} 
The following theorem is proved in the appendix. 
\begin{theorem}  \label{thm:main1}  
Suppose $X_1, X_2, \ldots, X_n$ are generated from model (\ref{model1})-(\ref{model2}) 
where $m \geq  c_1 n$ for a constant $c_1 > 0$ and conditions (\ref{cond:Kd})-(\ref{cond:SPA2}) hold.  Fix $\delta_n$ such that $ (K-1)/\log(n)  \ll \delta_n \ll 1$, and let 
$t_n = \sqrt{K-1} \bigl(\frac{\log(n)}{n^{1- \delta_n}}\bigr)^{\frac{1}{K-1}}$. 
We apply pp-SPA to $X_1, X_2, \ldots, X_n$ with $(N, \Delta)$ to be determined below.  Let $\hat{V} = [\hat{v}_1, \hat{v}_2, \ldots, \hat{v}_K]$, where $\hat{v}_1, \hat{v}_2, \ldots, 
\hat{v}_K$ are the estimated vertices.  
\begin{itemize} 
\item In the first case, $ \alpha_n \ll  t_n$. We take $N = \log(n)$ and 
$\Delta = c_3 t_n \sigma$ in pp-SPA, for a constant $c_3 \leq c_2$. Up to a permutation of $\hat{v}_1,\ldots,\hat{v}_K$, 
$\max_{1 \leq k \leq K} \{\|\hat{v}_k - v_k\|\}\leq  \sigma g_{\mathrm{new}}(V) [ \sqrt{\delta_n}\cdot   \sqrt{ 2\log (n)} + C \alpha_n] + b_n$.  
\item In the second case, $ t_n\ll  \alpha_n \ll 1$. We take $N = \log(n)$ and 
$\Delta = \sigma  \alpha_n$ in pp-SPA. Up to a permutation of $\hat{v}_1,\ldots,\hat{v}_K$, $
\max_{1 \leq k \leq K} \{\|\hat{v}_k - v_k\|\}\leq  \sigma g_{\mathrm{new}}(V) \cdot (1+ o_{\mathbb{P}}(1))  \sqrt{ 2\log (n) }$. 
\end{itemize} 
\end{theorem}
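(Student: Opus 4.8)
The plan is to obtain the bound by applying Theorem~\ref{thm:SPA-maintext} to the denoised, projected points $\{Y_i^*\}_{i\in\mathcal{J}}$ living in the $(K-1)$-dimensional projected space, and then propagating the resulting error back through the linear map $\hat v_k=(I_d-H)\bar X+U\hat v_k^*$. Writing $v_k^*=U'v_k$ for the images of the true vertices and $\calS_0^*$ for the projected simplex, I would decompose the total error as $\|\hat v_k-v_k\|\leq\|\hat v_k^*-v_k^*\|+(\text{subspace-estimation error})+(\text{centering error})$, where the first term is controlled by Theorem~\ref{thm:SPA-maintext} as $g_{\mathrm{new}}\cdot\beta_{\mathrm{new}}$ computed for the projected data, the second is the contribution of order $\alpha_n$, and the third is the term $b_n$. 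The two cases in the statement correspond to whether the denoise resolution $t_n\sigma$ or the subspace error $\sigma\alpha_n$ governs the choice of $\Delta$.

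First I would control the subspace-estimation error. Applying Lemma~\ref{lemma:G} together with condition~(\ref{cond:G}) gives $s_{K-1}(R)\geq c\,s_{K-1}(\tilde V)$ for a constant $c>0$, so the signal matrix $R$ has a spectral gap bounded below by a multiple of $s_{K-1}(\tilde V)$. Writing $Z=\sqrt n\,R+(\text{centered noise matrix})$ and bounding the noise operator norm by a multiple of $\sigma(\sqrt d+\sqrt n)$ via standard random-matrix concentration, a Davis--Kahan ($\sin\Theta$) argument yields that the principal angle between $\mathrm{col}(U)$ and $\mathrm{col}(U_0)$ is $O(\alpha_n)$; this both reconciles the definition of $\alpha_n$ in~(\ref{Definemalphan}) and shows $Y_i=v_k^*+U_0'\eps_i+O(\alpha_n)$ for $i\in J_k$. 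Because the projection is, up to the $O(\alpha_n)$ rotation, an isometry on the simplex, $s_{K-1}$ of the projected vertex matrix is bounded below by a constant multiple of $s_{K-1}(\tilde V)$, so by~(\ref{cond:G}) the quantity $g_{\mathrm{new}}$ for the projected data is $O(1)$, and it suffices to control $\beta_{\mathrm{new}}$.

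The heart of the argument is the denoise analysis, where I would bound $\beta_{\mathrm{new}}$ for $\{Y_i^*\}_{i\in\mathcal{J}}$ through its two constituent pieces. For the distance-to-simplex piece $\max_i\mathrm{Dist}(Y_i^*,\calS_0^*)$: a point far outside $\calS_0^*$ has a sparse $\Delta$-neighborhood, so a covering and counting argument combined with the extreme-value control of $\|U_0'\eps_i\|^2\sim\sigma^2\chi^2_{K-1}$ from Lemma~\ref{lemma:chi2} shows that all such points fail the ``at least $N$ neighbors'' test and are deleted, while the surviving pseudo-points, being averages over $\geq N$ neighbors, lie within $O(\sigma\sqrt{\delta_n}\,\sqrt{2\log(n)})$ (Case~1) or $O(\sigma\sqrt{2\log(n)})$ (Case~2) of the simplex. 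For the coverage piece $\max_k\min_{i\in J_k}\|Y_i^*-v_k^*\|$: since $m\geq c_1n$, the Gaussian cloud of the $n_k\geq c_1n$ points at $v_k^*$ contains, by a small-ball estimate calibrated through the constant $c_2$ in~(\ref{Definec2}), at least $N=\log(n)$ points inside $B_\Delta(v_k^*)$ with high probability, so a pseudo-point is formed there and its deviation from $v_k^*$ --- the average of $\geq N$ sub-Gaussian displacements of scale $\Delta$ --- is of the same order. The calibration of $t_n$, $\Delta=c_3t_n\sigma$, and $c_3\leq c_2$ is exactly what makes both pieces match the claimed rates.

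I expect the denoise step to be the main obstacle, for two reasons. First, both pieces of $\beta_{\mathrm{new}}$ require \emph{sharp} constants rather than mere rates, so the extreme-value asymptotics of Lemma~\ref{lemma:chi2} and the Gaussian small-ball probabilities must be tracked with their leading constants, and the radius $\Delta$ must be tuned so that the ``delete versus average'' threshold lands precisely at the right scale. Second, the subspace randomness in the projection step and the denoise randomness both depend on the same noise $\{\eps_i\}$; decoupling them --- e.g.\ by conditioning on a high-probability event for $U$ and then running the neighborhood analysis --- requires care to avoid circular dependence while preserving the sharp constants. Once $g_{\mathrm{new}}=O(1)$ and the two-part bound on $\beta_{\mathrm{new}}$ are in hand, combining them via Theorem~\ref{thm:SPA-maintext} and adding the $\alpha_n$ and $b_n$ contributions gives the two stated bounds directly.
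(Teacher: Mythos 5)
Your proposal matches the paper's proof in all essential respects: a Davis--Kahan bound on $\|U-U_0\|$ giving the $O(\alpha_n)$ subspace error (Lemma~\ref{lem:H}), a denoise analysis in dimension $K-1$ combining extreme-value bounds for $\chi^2_{K-1}$ with small-ball estimates calibrated by $c_2$ to control the two pieces of $\beta_{\mathrm{new}}$ (Theorems~\ref{thm:KNN1}--\ref{thm:KNN2}), an application of Theorem~\ref{thm:SPA-maintext} to the pseudo-points, and back-propagation through $\hat v_k=(I_d-H)\bar X+U\hat v_k^*$ producing the $\alpha_n$ and $b_n$ terms. The decoupling issue you flag is handled in the paper by running the denoise analysis on the ideal points $\tilde Y_i=U_0'X_i$ and transferring to $Y_i$ via nested neighborhoods $B(Y_i,\Delta)\subset B(\tilde Y_i,1.5\Delta)$, which is a minor variant of your conditioning idea and changes nothing structural.
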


To interpret Theorem~\ref{thm:main1}, we consider a special case where $K=O(1)$, $s_{K-1}(\tilde{V})$ is lower bounded by a constant, and we set $\delta_n=\log\log(n)/\log(n)$.  
By our assumption (\ref{cond:Kd}), $d=o(\sqrt{n})$. It follows that
$\alpha_n\asymp \max\bigl\{d, \sqrt{d\log(n)}\bigr\} /\sqrt{n}$, $b_n\asymp \sigma \sqrt{\max\{d,\, \log(n)\}/n}$, and $t_n\asymp [\log(n)]^{\frac{1}{K-1}}/n^{\frac{1-o(1)}{K-1}}$. 
We observe that $\alpha_n$ always dominates $b_n/\sigma$. 
Whether $\alpha_n$ dominates $t_n$ is determined by $d/n$. When $d/n$ is properly small so that $\alpha_n\ll  t_n$, using the first case in Theorem~\ref{thm:main1}, we get $
\max_k \{\|\hat{v}_k - v_k\|\}\leq  C \bigl(\sqrt{\log(\log(n))} + \max\bigl\{d, \sqrt{d\log(n)}\bigr\} /\sqrt{n}\bigr)=O(\sqrt{\log\log(n)})$. 
When $d/n$ is properly large so that $\alpha_n\gg t_n$, using the second case in Theorem~\ref{thm:main1}, we get $
\max_k \{\|\hat{v}_k - v_k\|\}=O\bigl(\sqrt{\log(n)}\bigr)$. 
We then combine these two cases and further plug in the constants in Theorem~\ref{thm:main1}. It yields
\begin{equation} \label{rate-ppspa}
\max_{1 \leq k \leq K} \{\|\hat{v}^{\text{ppspa}}_k - v_k\|\} \leq  \sigma g_{\mathrm{new}}(V)\cdot  \left\{
\begin{array}{ll}
\sqrt{\log\log(n)} & \text{ if $d/n$ is properly small};\\
\sqrt{[2+o(1)] \log (n)} & \text{ if $d/n$ is properly large}.
\end{array}
\right.
\end{equation}

%
%
It is worth comparing the error bound in Theorem~\ref{thm:main1} with that of the orthodox SPA (where we directly apply SPA on the original data points $X_1, X_2, \ldots, X_n$). Recall that $\beta(X)$ is as defined in (\ref{new-g-beta}). Note that $\beta(X)\leq \max_{1\leq i\leq n}\|\epsilon_i\|$, where $\|\epsilon_i\|^2$ are i.i.d. variables from $\chi^2_d(0)$. Combining Lemma~\ref{lemma:chi2} and Theorem~\ref{thm:SPA-maintext}, we immediately obtain that for the (orthodox) SPA estimates $\hat{v}^{spa}_1, \hat{v}^{spa}_2, \ldots, \hat{v}^{spa}_K$, up to a permutation of these vectors (the constant  $a_1$ is as in Lemma~\ref{lemma:chi2} and satisfies $a_1>2$):
\begin{equation} \label{rate-oSPA}
\max_{1 \leq k \leq K} \{\|\hat{v}^{\text{spa}}_k - v_k\|\} \leq  \sigma g_{\mathrm{new}}(V)\cdot  \left\{
\begin{array}{ll}
\sqrt{\max\{d, \; 2 \log(n)\}} & \text{ if $d\ll \log(n)$ or $d\gg \log (n)$};\\
\sqrt{a_1 \log (n)} & \text{ if $d= a_0 \log(n)$}.
\end{array}
\right.
\end{equation}
This bound is tight (e.g., when all $r_i$ fall into vertices). We compare (\ref{rate-oSPA}) with Theorem~\ref{thm:main1}. If $d\gg \log(n)$, the improvement is a factor of $\sqrt{\log(n)/d}$, which is huge when $d$ is large. If $d=O(\log(n))$, the improvement can still be a factor of $o(1)$ sometimes (e.g., in the first case of Theorem~\ref{thm:main1}). 

Theorem~\ref{thm:main1} assumes that there are a constant fraction of $r_i$ falling at each vertex. This can be greatly relaxed. 
The following theorem is proved in the appendix.  
\begin{theorem} \label{thm:main2} 
Fix $0 < c_0  < 1$ and a sufficiently small constant $0 < \delta < c_0$.  
Suppose $X_1, X_2, \ldots, X_n$ are generated from model (\ref{model1})-(\ref{model2}) 
where $m \geq  n^{1-c_0 + \delta}$ and conditions (\ref{cond:Kd})-(\ref{cond:SPA2}) hold.  Let $
t_n^* = \sqrt{K-1}  \bigl(\frac{\log(n)}{n^{1 - c_0}}\bigr)^{\frac{1}{K-1}}$.  
We apply pp-SPA to $X_1, X_2, \ldots, X_n$ with $(N, \Delta)$ to be determined below.  Let $\hat{V} = [\hat{v}_1, \hat{v}_2, \ldots, \hat{v}_K]$, where $\hat{v}_1, \hat{v}_2, \ldots, 
\hat{v}_K$ are the estimated vertices.  
\begin{itemize} 
\item In the first case, $\alpha_n \ll t_n^*$. We take $N = \log(n)$ and 
$\Delta = c_3 t_n \sigma$ in pp-SPA, for a constant $c_3 \leq e^{c_0/(K-1)} c_2$. Up to a permutation of $\hat{v}_1, \ldots,\hat{v}_K$, 
$\max_{1 \leq k \leq K} \{\|\hat{v}_k - v_k\|\} \leq  \sigma g_{\mathrm{new}}(V) [ \sqrt{c_0}\cdot    \sqrt{ 2\log (n)} + C \alpha_n] + b_n$. 
\item In the second case,  $ \alpha_n \gg t_n^*$.  Suppose $\alpha_n = o(1)$. 
We take $N = \log(n)$ and $\Delta = \alpha_n$ in pp-SPA.  
Up to a permutation of $\hat{v}_1, \ldots,\hat{v}_K$, $
\max_{1 \leq k \leq K} \{\|\hat{v}_k - v_k\|\} \leq  \sigma g_{\mathrm{new}}(V) \cdot (1+ o_{\mathbb{P}}(1))  \sqrt{ 2\log (n) }$. 
\end{itemize} 
\end{theorem}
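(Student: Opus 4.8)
The plan is to reduce Theorem~\ref{thm:main2} to the non-asymptotic bound for orthodox SPA in Theorem~\ref{thm:SPA-maintext}, applied to the pseudo-points $\{Y_i^*\}_{i\in\mathcal{J}}$ living in $\mathbb{R}^{K-1}$. Writing $V^*=U_0'V$ for the projected vertex matrix and $\calS_0^*$ for the projected simplex, Theorem~\ref{thm:SPA-maintext} gives $\max_k\|\hat v_k^*-v_k^*\|\leq g_{\mathrm{new}}(V^*)\,\beta_{\mathrm{new}}(\{Y_i^*\},V^*)$ once its smallness precondition is verified for the pseudo-points. Two reductions then organize the whole proof. First, since projection onto $\calH_0$ is (nearly) an isometry, $s_{K-1}(V^*)$ and $\gamma(V^*)$ are within a $(1+o(1))$ factor of $s_{K-1}(V)$ and $\gamma(V)$ (controlled by Lemma~\ref{lemma:SPA} and condition~(\ref{cond:G})), so $g_{\mathrm{new}}(V^*)\leq(1+o(1))g_{\mathrm{new}}(V)$. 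Second, the back-transformation in Step~3, $\hat v_k=(I_d-H)\bar X+U\hat v_k^*$, contributes only the hyperplane- and centering-estimation error. Hence the entire task is to bound $\beta_{\mathrm{new}}(\{Y_i^*\},V^*)$ together with the back-projection error.

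For the projection step I would use random-matrix perturbation theory. By Lemma~\ref{lemma:G} and condition~(\ref{cond:G}), $s_{K-1}(R)\asymp s_{K-1}(\tilde V)$ is bounded below, while the centered noise matrix $[\eps_1-\bar\eps,\ldots,\eps_n-\bar\eps]$ has operator norm of order $\sigma(\sqrt n+\sqrt d)$ with high probability. A Davis--Kahan ($\sin\Theta$) argument then bounds the principal-subspace distance $\|UU'-U_0U_0'\|$ and yields $Y_i=r_i^*+U_0'\eps_i+(\text{error of order }\alpha_n)$ with $U_0'\eps_i\sim N(0,\sigma^2 I_{K-1})$; this is precisely where the factor $\sqrt d/(\sqrt n\,s_{K-1}^2(\tilde V))$, and hence $\alpha_n$, enters. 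Combined with the estimation error of the center $\bar X$ (giving $b_n$), the back-transformation error from $\mathbb{R}^{K-1}$ to $\mathbb{R}^d$ is $O(\alpha_n)+b_n$, matching the extra terms in the stated bound.

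The crux is bounding $\beta_{\mathrm{new}}(\{Y_i^*\},V^*)=\max\{\beta^*,\beta^{\dagger}\}$ for the denoised points. For $\beta^{\dagger}$ I would show each projected vertex $v_k^*$ attracts a surviving averaged point very close to it: since $n_k\geq m\geq n^{1-c_0+\delta}$ points sit at $v_k^*$ up to $N(0,\sigma^2 I_{K-1})$ noise, the vertex neighborhood is dense, at least $N=\log(n)$ points survive pruning, and averaging reduces the noise below $\alpha_n$, so $\beta^{\dagger}=O(\alpha_n)$. For $\beta^*$, the maximal distance of a surviving averaged point to $\calS_0^*$, I would run a small-ball density argument: a point at distance $\approx r\sigma$ from the simplex survives only if $B_\Delta$ contains at least $N$ points, and the expected count there is of order $n\,\exp(-r^2/2\sigma^2)\,\mathrm{vol}(B_\Delta)$. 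Calibrating $\Delta=c_3 t_n\sigma$ against $N=\log(n)$ (exactly what the constants $c_2$ in~(\ref{Definec2}) and $c_3\leq e^{c_0/(K-1)}c_2$ are tuned for) forces the largest surviving distance to be $(1+o(1))\sigma\sqrt{2c_0\log(n)}$, yielding the $\sqrt{c_0}\sqrt{2\log(n)}$ leading term; since averaging displaces a point by at most $\Delta=o(\sigma\sqrt{\log(n)})$, this also controls the averaged points. The two cases differ only in which resolution dominates: when $\alpha_n\ll t_n^*$ the radius can be pushed to $c_3 t_n\sigma$ and $\beta^*$ attains the $\sqrt{2c_0\log(n)}$ floor, while when $\alpha_n\gg t_n^*$ the projection noise floor forces $\Delta=\alpha_n$ and $\beta^*\leq(1+o(1))\sigma\sqrt{2\log(n)}$.

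The hard part will be the sharp-constant extreme-value analysis for $\beta^*$: unlike the clean chi-square extreme in Lemma~\ref{lemma:chi2}, here the surviving set is defined by a data-dependent neighborhood-count threshold, so I would need a Poisson/binomial approximation for the number of points in a small ball deep in the Gaussian tail, uniform over the $O(n)$ candidate centers, and then match it to $N=\log(n)$ to recover the precise constant $c_0$ rather than merely the rate. A secondary difficulty is verifying the precondition of Theorem~\ref{thm:SPA-maintext} for the pseudo-points, namely that after denoising the residual perturbation is still at most $c^*\,s_{K-1}^2(V^*)/\gamma(V^*)$; this follows once condition~(\ref{cond:SPA2}) and the bounds on $\alpha_n,\beta^*,\beta^{\dagger}$ are in hand, but must be checked to legitimately invoke the master bound. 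Collecting $g_{\mathrm{new}}(V^*)\beta_{\mathrm{new}}$ in $\mathbb{R}^{K-1}$ and adding the $O(\alpha_n)+b_n$ back-projection error then yields the stated bounds in both cases.
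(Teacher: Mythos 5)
Your proposal follows essentially the same route as the paper: a Davis--Kahan/random-matrix bound on the estimated hyperplane giving $\|Y_i-\tilde Y_i\|=O_{\mathbb P}(\sigma\alpha_n)$ (the paper's Lemma~\ref{lem:H}), a binomial/first-moment counting argument calibrating $(N,\Delta)$ against the Gaussian tail to show all points farther than $(1+o(1))\sigma\sqrt{2c_0\log n}$ from the simplex are pruned while vertex neighborhoods survive (the paper's Theorem~\ref{thm:KNN2}), then the improved SPA bound of Theorem~\ref{thm:SPA-maintext} plus the $O(\alpha_n)+b_n$ back-transformation error. The only cosmetic difference is that the paper transfers the pruning analysis from the ideal projections $\tilde Y_i=U_0'X_i$ to the actual $Y_i=U'X_i$ via an explicit nested-ball contradiction argument, whereas you absorb the $O(\alpha_n)$ discrepancy directly into the perturbation; both are equivalent in substance.
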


Comparing Theorem~\ref{thm:main2} with Theorem~\ref{thm:main1}, the difference is in the first case, where the $o(1)$ factor of $\delta_n$ is replaced by a constant factor of $c_0<1$. Similarly as in (\ref{rate-ppspa}), we obtain
\begin{equation} \label{rate-ppspa2}
\max_{1 \leq k \leq K} \{\|\hat{v}^{\text{ppspa}}_k - v_k\|\} \leq  \sigma g_{\mathrm{new}}(V)\cdot  \left\{
\begin{array}{ll}
\sqrt{2c_0\log(n)} & \text{ if $d/n$ is properly small};\\
\sqrt{[2+o(1)] \log (n)} & \text{ if $d/n$ is properly large}.
\end{array}
\right.
\end{equation}

In this relaxed setting, we also compare Theorem~\ref{thm:main2} with (\ref{rate-oSPA}): (a) When $d\gg \log(n)$, the improvement is a factor of $\sqrt{\log(n)/d}$. (b) When $d=O(\log(n))$, the improvement is at the constant order. It is interesting to further compare these ``constants". Note that $g_{\mathrm{new}}(V)$  is the same for all methods. It suffices to compare the constants in the bound for $\beta_{\mathrm{new}}(V)$. 
In Case (b), the error bound of pp-SPA is smaller than that of SPA by a factor of $c_0\in (0,1)$. For the practical purpose, even the improvement of a constant factor can have a huge impact, especially when the data contain strong noise and potential outliers. Our simulations in Section~\ref{sec:Simu} further confirm this point.

\section{Numerical study} \label{sec:Simu}

We compare SPA, pp-SPA, and two simplified versions P-SPA and D-SPA (for illustration). We also compared these approaches with robust-SPA (\citealp{gillis2019successive}) from \texttt{bit.ly/robustSPA} (with default tuning parameters). For pp-SPA and D-SPA, we need to specify tuning parameters $(N,\Delta)$. We use the heuristic choice in Remark~2. Fix $K=3$ and three points $\{y_1,y_2,y_3\}$ in $\mathbb{R}^2$. Given $(n, d, \sigma)$, we first draw $(n-30)$ points uniformly from the $2$-dimensional simplex whose vertices are $y_1,y_2,y_3$, and then put $10$ points on each vertex of this simplex. Denote these points by $w_1,w_2,\ldots,w_n\in\mathbb{R}^2$. Next, we fix a matrix $A\in\mathbb{R}^{d\times 2}$, whose top $2\times 2$ block is equal to $I_d$ and the remaining entries are zero. Let $r_i=Aw_i$, for all $i$. Finally, we generate $X_1,X_2,\ldots,X_n$ from model~(\ref{model1}). We consider three experiments. In Experiment~1, we fix $(n,\sigma)=(1000,1)$ and let $d$ range in $\{1,2,\ldots,49, 50\}$. In Experiment~2, we fix $(n, d)=(1000,4)$ and let $\sigma$ range in $\{0.2, 0.3, \ldots, 2\}$. In Experiment~3, we fix $(d, \sigma)=(4, 1)$ and let $n$ range in $\{500, 600, \ldots, 1500\}$. We evaluate the vertex hunting error $\max_{k}\{\|\hat{v}_k-v_k\|\}$ (subject to a permutation of $\hat{v}_1,\ldots,\hat{v}_K$).  
For each set of parameters, we report the average error over $20$ repetitions. The results are in Figure~\ref{fig:exp}. 
They are consistent with our theoretical insights: The performances of P-SPA and D-SPA are both better than that of SPA, and the performance of pp-SPA is better than those of P-SPA and D-SPA. It suggests that both the projection and denoise steps are effective in reducing noise, and it is beneficial to combine them. When $d\leq 10$, pp-SPA, P-SPA and D-SPA all outperform robust-SPA; when $d>10$, both pp-SPA and P-SPA outperform robust-SPA, and D-SPA (the simplified version without hyperplain projection) underperforms robust-SPA. The code to reproduce these experiments is available at \url{https://github.com/Gabriel78110/VertexHunting}.

\begin{figure}[tb!]
    \centering
    \includegraphics[width=\textwidth, height=1.4in, trim=0 10 0 0, clip=true]{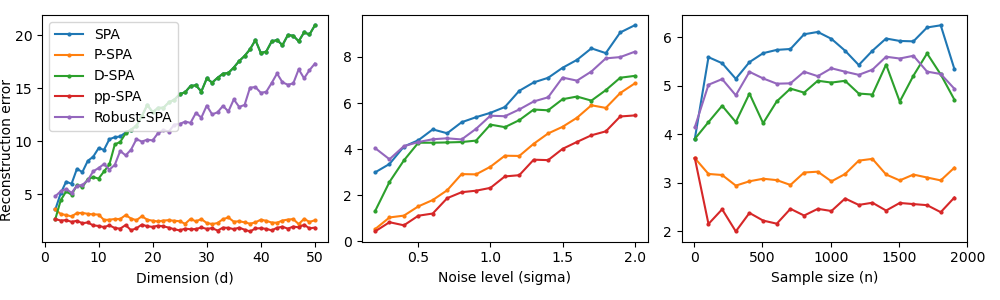}
    \caption{Performances of SPA, P-SPA, D-SPA, and pp-SPA in Experiment~1-3.}
    \label{fig:exp}
\end{figure}
%

\section{Discussion}

Vertex hunting is a fundamental problem found in many
applications. The Successive Projection algorithm (SPA) 
is a popular approach, but may behave 
unsatisfactorily in many settings.   We propose pp-SPA as a new 
approach to vertex hunting. Compared to SPA, the new algorithm provides much improved theoretical bounds and encouraging improvements in a wide variety of numerical study.  We also provide a sharper non-asymptotic bound for the orthodox SPA.  For technical simplicity, our model assumes Gaussian noise, but our results are readily extendable to 
subGaussian noise. Also, our non-asymptotic bounds do not require any distributional assumption, and are directly applicable to different settings. 
For future work, we note that an improved bound on vertex hunting frequently 
implies improved bounds for methods that {\it contains vertex hunting as an important step}, such as Mixed-SCORE for network analysis \citep{MSCORE,bhattacharya2023inferences},    Topic-SCORE for 
text analysis \citep{ke2017new}, and state compression of Markov processes \citep{zhang2019spectral}, where vertex hunting plays a key role.
Our algorithm and bounds may also be useful for related problems such as estimation of convex density support \citep{brunel2016adaptive}.

%
%
%

\newcommand{\beq}{\begin{equation}}
\newcommand{\eeq}{\end{equation}}

\appendix
\renewcommand{\thesection}{\Alph{section}}
\renewcommand{\theequation}{\Alph{section}.\arabic{equation}}
\renewcommand*{\thetheorem}{\Alph{section}.\arabic{theorem}}
\renewcommand*{\thelemma}{\Alph{section}.\arabic{lemma}}

\newtheorem{thm}{Theorem}[section]
\newtheorem{lem}{Lemma}[section]

\setcounter{equation}{0}

\section{Proof of preliminary lemmas}

\subsection{Proof of Lemma~\ref{lemma:projection}}
This is a quite standard result, which can be found at tutorial materials (e.g.,  \url{https://people.math.wisc.edu/~roch/mmids/roch-mmids-llssvd-6svd.pdf}). We include a proof here only for convenience of readers.

We start by introducing some notation. Let $Z_i=X_i-\bar X$ and let $Z=[Z_1,\ldots,Z_n]\in\mathbb R^{d,n}$. Suppose the singular value decomposition of Z is given by $Z=U_ZD_ZV_Z'$. Since $H$ is a rank-$(K-1)$ projection matrix, we have $H=QQ'$, where $Q\in\mathbb R^{d,K-1}$ is such that $Q'Q=I_{K-1}$. Hence, we rewrite the optimization in (\ref{hplane1}) as follows:
\begin{align*}
\mbox{minimize }\sum_{i=1}^n(X_i-x_0)'(I_d-QQ')(X_i-x_0),\quad \mbox{subject to}\quad Q'Q=I_{K-1}.
\end{align*}
For $\lambda\in\mathbb R$, consider the Lagrangian objective function
\begin{align}\label{objectivefunciton}
\widetilde S(x_0,Q,\lambda)=\sum_{i=1}^n(X_i-x_0)'(I_d-QQ')(X_i-x_0)+\lambda(Q'Q-I_{K-1}).
\end{align}
Setting its gradients w.r.t.~$x_0$ and $Q$ to be 0 yields
\begin{align}
&\nabla_{x_0}\widetilde S(x_0,Q,\lambda)=-2(I_d-QQ')\sum_{i=1}^n(X_i-x_0)=0,\label{grad1}\\
&\nabla_{Q}\widetilde S(x_0,Q,\lambda)=-2Q'\sum_{i=1}^n(X_i-x_0)(X_i-x_0)'+2\lambda Q'=0.\label{grad2}
\end{align}
Firstly, we deduce from (\ref{grad1}) that $\hat x_0=\bar X$, which in view of (\ref{grad2}) implies that $Q'(ZZ'-\lambda I_d)=0$.
 The above equations also implies that the $(K-1)$ columns of $\widehat Q$ should be the distinct columns of $U_Z$.  Now, the objective function in (\ref{objectivefunciton}) is given by
\begin{align}\label{tildeS}
\widetilde S(x_0,Q,\lambda)&=\sum_{i=1}^nZ_i'(I_d-QQ')Z_i={\rm tr}[(I_d-QQ')ZZ']={\rm tr}[(I_d-QQ')U_ZD_Z^2U_Z']\notag\\
&={\rm tr}(D_Z)^2-{\rm tr}[Q'U_ZD_Z^2U_Z'Q]={\rm tr}(D_Z^2)-\|D_ZU_Z'Q\|_{\rm F}^2.
\end{align}
Note that for each column of $U_Z'Q\in\mathbb R^{d,K-1}$, it has exactly one entry being 1 and its other entries are all 0. Therefore, taking $\widehat Q=U$ maximizes $\|D_ZU_Z'Q\|_{\rm F}^2$ and hence minimizes the objective function $\widetilde S$ in (\ref{objectivefunciton}), that is, $\widehat H=UU'$. The proof is complete.

\subsection{Proof of Lemma~\ref{lemma:SPA} }
For the simplex formed by  $V\in \mathbb{R}^{d \times K}$, we can always find an orthogonal matrix $O\in\mathbb{R}^{d\times d}$ and a scalar $a$ such that 
\begin{align*}
OV= \begin{pmatrix} x_1 & x_2&\ldots & x_K\\ a& a&\ldots & a\\ 0&0 & \ldots&  0 \end{pmatrix}, \quad \text{ where } \quad  x_k\in \mathbb{R}^{K-1} \text{ for } k=1, \ldots, K.
\end{align*}
Denote $\bar x = K^{-1} \sum_{k=1}^K x_k$. Further we can represent 
\begin{align*}
O  \tilde V= \begin{pmatrix}  x_1  -\bar x & x_2 - \bar x&\ldots & x_K - \bar x\\ 0&0&\ldots& 0 \end{pmatrix}
\end{align*}
We write $\tilde X: = (x_1 - \bar x, x_2 - \bar x, \ldots, x_K - \bar x)$.
 Since rotation and location do not change the volume, 
 \begin{align*}
{\rm Volume}(\mathcal S_0)   = {\rm Volume}(\mathcal S(\tilde X)).    
\end{align*}
 where $\mathcal S(\tilde X) $ represents the simplex formed by $\tilde X$. 
By \cite{stein1966note}, we have 
\begin{align*}
{\rm Volume}(\mathcal S_0)   = \frac{{\rm det}( \tilde  A) }{(K-1)!}\, , \quad\text{ with } \quad 
\tilde A = \left[
\begin{array}{cc}
1 & (x_1- \bar x)'   \\
1  &  (x_2- \bar x)'  \\
\vdots & \vdots \\ 
1 &(x_K- \bar x)' \\
\end{array} 
\right] 
\end{align*}
We also define 
\begin{align*}
A =  \left[
\begin{array}{cc}
1 &  (v_1 - \bar v)'   \\
1  &  (v_2 - \bar v)'  \\
\vdots & \vdots \\ 
1 &  (v_K - \bar v)' \\ 
\end{array} 
\right] = [{\bf 1}_K , \tilde V'],  
\end{align*}
Since $(\tilde A, 0 ) = A\begin{pmatrix} 1& 0\\ 0 & O\end{pmatrix} $, it follows that $\tilde{A}\tilde{A}' = AA'$ and ${\rm Volume}(\mathcal S_0)   = \frac{\sqrt{{\rm det}( AA') } }{(K-1)!} = \frac{\sqrt{{\rm det}( A' A) } }{(K-1)!}$.
Note that $A' A = \begin{pmatrix} K & 0\\ 0 & \tilde V \tilde V'\end{pmatrix}$ by the fact that $\tilde V {\bf 1}_K= 0$. Then ${\rm det} (A'A)  = K {\rm det} (\tilde V\tilde V' ).$ Further notice that ${\rm rank }(\tilde V \tilde V' ) = K-1$. 
We thus conclude that 
\begin{align*}
{\rm Volume}(\mathcal S_0)  =  \frac{\sqrt K}{(K-1)!} \prod_{k=1}^{K-1} s_k(\tilde V). 
\end{align*}
This proves the first claim. 

For the second and last claims, we first notice that $V = \tilde V - \bar v{\bf 1}_K'  $. Then $VV' = \tilde V\tilde V'+ K \bar v\bar v'   $ again by $\tilde V {\bf 1}_K= 0$. Because both $ \tilde V\tilde V'$ and $K \bar v\bar v' $ are positive semi-definite,  by Weyl's inequality (see,  for example \cite{HornJohnson}),   it follows that $s_{K-1} (V)\geq s_{K-1} (\tilde V)$ and $s_K(V) = \sqrt{\lambda_{\min}(VV')} \leq \sqrt{K \Vert \bar v\Vert^2 } = \sqrt K \Vert \bar v\Vert$.


\subsection{Proof of Lemma~\ref{lemma:G} }
We first prove claim (a). Let $\Pi=[\pi_1-\bar\pi,\ldots,\pi_n-\bar\pi]\in\mathbb R^{K,n}$. Recalling the definitions of $G$ and $V$, we have $G=n^{-1}\Pi\Pi'$ and $R=n^{-1/2}V\Pi$, so that $RR'=n^{-1}V\Pi\Pi'V'=VGV'$. 

Next, we prove claim (b). Recall that $\tilde V=V-\bar v1_K'$, so that $\tilde V\tilde V'=(V-\bar v1_K')(V-\bar v1_K')'=VV'-K\bar v\bar v'$. Note that
Since $\pi_i'1_K=\bar\pi'1_K=1$, we have $\Pi'1_K=0$, which implies that $G1_K=n^{-1}\Pi(\Pi'1_K)=0$. We deduce from this observation that $\lambda_K(G)=0$ and its associated eigenvector is $K^{-1/2}{\bf1}_K$. Therefore, $G-\lambda_{K-1}(G)I_K + K^{-1}\lambda_{K-1}(G){\bf1}_K{\bf1}_K' $ is a positive semi-definite matrix, so that
\begin{align*}
VGV'-\lambda_{K-1}(G)\tilde V\tilde V'&=VGV'-\lambda_{K-1}(G)VV'+\lambda_{K-1}(G)K\bar v\bar v'\\
&=V[G-\lambda_{K-1}(G)I_K +K^{-1} \lambda_{K-1}(G){\bf1}_K{\bf1}_K' ]V'\geq0.
\end{align*}
In addition, observing that $\Pi'1_K=0$ due to the fact that $\|\pi_i\|_1=\|\bar\pi\|_1=1$, we obtain that
\begin{align*}
\tilde VG\tilde V'=(V-\bar v1_K')G(V-\bar v1_K')'=n^{-1}(V-\bar v1_K')\Pi\Pi'(V-\bar v1_K')'=VGV'.
\end{align*}
Therefore,
\begin{align*}
\lambda_1(G)\tilde V\tilde V'-VGV'=\lambda_1(G)\tilde V\tilde V'-\tilde VG\tilde V'=\tilde V[\lambda_1(G)I_K-G]\tilde V'\geq0,
\end{align*}
which completes the proof of claim (b). 

Finally, for claim (c), we obtain from (a) that $\sigma_{K-1}^2(R)=\lambda_{K-1}(RR')=\lambda_{K-1}(VGV')$, which by Weyl's inequality (see, for example, \cite{HornJohnson}) and in view of claim (b) implies that $\lambda_{K-1}(G)\lambda_{K-1}(\tilde V\tilde V')\leq \sigma_{K-1}^2(R)\leq \lambda_{1}(G)\lambda_{K-1}(\tilde V\tilde V')$. The proof is therefore complete.

\subsection{Proof of Lemma \ref{lemma:chi2}} 
Recall that $z_1 \sim \chi_d^2(0)$. Let $b_n$ be the value such that 
\[
\mathbb{P}(z_1 \geq b_n) = 1/n.
\] 
By basic extreme value theory, it is known that 
\[
\frac{\max_{1 \leq i \leq n} \{z_i\}}{b_n}  \goto 1, \qquad \mbox{in probability}. 
\]

We now solve for $b_n$.  It is seen that $b_n \geq  d$.  
Recall that the density of $\chi_d^2(0)$ is 
\[
\frac{1}{2^{d/2} \Gamma(d/2)} x^{d/2-1} e^{-x/2},\qquad x > 0. 
\] 
Note that for any $x_0 \geq d$, 
\begin{equation} 
\int_{x_0}^{\infty} x^{d/2 - 1} e^{-x/2} dx = 2 x_0^{d/2-1} e^{-x_0/2} + \int_{x_0}^{\infty}  (d-2) x^{d/2-2} e^{-x/2} dx 
\end{equation} 
where the RHS is no greater than 
\[ 
\leq 2 x_0^{d/2-1} e^{-x_0/2}  + \frac{(d-2)}{x_0} \int_{x_0}^{\infty}  x^{d/2 - 1} e^{-x/2} dx. 
\] 
It follows that for all $x_0 \geq d$, 
\begin{equation} \label{Mill} 
2 x_0^{d/2-1} e^{-x_0/2} \leq \int_{x_0}^{\infty}  x^{d/2 - 1} e^{-x/2} dx  \leq   x_0 \cdot  x_0^{d/2-1} e^{-x_0/2},   
\end{equation} 
where we have used 
\[
\frac{x_0}{x_0 -d + 2}  \leq x_0/2. 
\] 
It now follows that there is a term $a(x)$ such that when $x \geq d$, 
\[
1 \leq a(x) \leq x/2  
\] 
and
\[ 
\mathbb{P}(z_1 \geq x) = a(x)  \frac{1}{2^{d/2} \gamma(d/2)} 2 x^{d/2-1} e^{-x/2}. 
\] 
Combining these, $b_n$ is the solution of 
\begin{equation} \label{equation1} 
a(x)  \frac{1}{2^{d/2} \gamma(d/2)} 2 x^{d/2-1} e^{-x/2} = \frac{1}{n}. 
\end{equation} 

We now solve the equation in (\ref{equation1}). Consider the case $d$ is even. 
The case where $d$ is odd is similar, so we omit it. When $d$ is even, using 
\[
\Gamma(d/2) = (d/2-1)! = (2/d) (d/2)! = (2/d) \theta (\frac{d}{2e})^{d/2}, 
\] 
where $\theta$ is the factor in the Stirling's formula which is $\leq C \sqrt{\log(d)}$. 
Plugging this into the left hand side of (\ref{equation1}) and re-arrange, we have 
\begin{equation} \label{equation2} 
\log(d/x) + (d/2) \log(\frac{ex}{d}) - x/2 = - \log(n) + o(\log(n)).  
\end{equation} 
We now consider three cases below separately.  
\begin{itemize} 
\item Case 1. $d \ll \log(n)$. 
\item Case 2.  $d = a_0 \log(n)$ for a constant $a_0 > 0$.  
\item Case 3.  $d \gg \log(n)$. 
\end{itemize} 
 
Consider Case 1.  In this case, it is seen that when 
\[
x = O(\log(n)), 
\] 
the LHS of (\ref{equation2}) is
\[
-x/2 + o(\log(n)). 
\] 
Therefore, the solution of (\ref{equation2}) is seen to be 
\[
b_n = (1 + o(1)) \cdot 2 \log(n). 
\] 
 
Consider Case 2. In this case, $d =a_0 \log(n)$.  Let $x = b_1 \log(n)$. 
Plugging these into (\ref{equation2}) and rearranging,  
\begin{equation} \label{equation3} 
a_1 -  a_0 \log(a_1) = 2 + a_0 - a_0 \log(a_0) + o(1). 
\end{equation} 
Now, consider the equation 
\[
a_1 -  a_0 \log(a_1) = 2 + a_0 - a_0 \log(a_0).  
\] 
It is seen that the equation has a unique solution (denoted by $b_0$) that is 
bigger than $2$. 
Therefore,  in this case, 
\[
b_n = (1 + o(1)) b_0, 
\] 

Consider Case 3.  In this case, $d \gg \log(n)$. Consider again the equation 
\[
\log(d/x) + (d/2) \log(\frac{ex}{d}) - x/2 = - \log(n) + o(\log(n)).  
\] 
Letting $y = x/d$ and rearranging, it follows that  
\begin{equation} \label{equation4} 
y - \log(y) - 1 = o(1),  
\end{equation} 
where for sufficiently large $n$, $o(1) > 0$ and $o(1) \goto 0$. 
Note that the function $g(y) = y - \log(y) - 1$ is a convex function with 
a minimum of $0$  reached at  $y = 1$, it follows 
\[
y = 1 + o(1). 
\] 
Recalling $y = x / d$, this shows 
\[
b_n = (1 + o(1)) d. 
\] 
This completes the proof of Lemma \ref{lemma:chi2}.


\section{Analysis of the SPA algorithm}

Fix $d\geq K-1$. For any $V=[v_1,v_2,\ldots,v_K]\in\mathbb{R}^{d\times K}$, let $\sigma_k(V)$ denote the $k$th singular value of $V$, and define 
\[
\gamma(V)=\min_{v_0\in\mathbb{R}^d}\max_{1\leq k\leq K}\|v_k-v_0\|, \qquad d_{\max}(V)=\max_{x\in {\cal S}}\|x\|.
\] 
To capture the error bound for SPA, we introduce a useful quantity in the main paper: 
\beq \label{def:Beta-in-SPA-thm}
\beta(X, V):=\max\biggl\{  \max_{1\leq i\leq n}\mathrm{Dist}(X_i, {\cal S}),\;\;\;  \max_{1\leq k\leq K}\min_{i:r_i=v_k}\|X_i-v_k\| \biggr\}. 
\eeq
We note that when $\max_i\mathrm{Dist}(X_i, {\cal S})$ is small, no point is too far away from the simplex; and when $\max_k \min_{i:r_i=v_k}\|X_i-v_k\|$ is small, there is at least one point near each vertex. 


Let's denote $\gamma=\gamma(V)$, $d_{\max}=d_{\max}(V)$, $\beta=\beta(X,V)$, and $\sigma_*=\sigma_{K-1}(V)$ for brevity.
We shall prove the following theorem, which is a slightly stronger version of Theorem~1 in the main paper.  
\begin{thm} \label{thm:SPA}
Suppose for each $1\leq k\leq K$, there exists $1\leq i\leq n$ such that $\pi_i=e_k$. Suppose $\beta(X,V)$ satisfies that $450d_{\max}\max\bigl\{1, \frac{d_{\max}}{\sigma_*}\bigr\}\beta \leq \sigma^2_*$. 
Let $\hat{v}_1,\hat{v}_2,\ldots,\hat{v}_r$ be the output of SPA. Up to a permutation of these $r$ vectors,
\[
\max_{1\leq k\leq r} \|\hat{v}_k - v_k\|\leq  \Bigl(1+\frac{30\gamma}{\sigma_*}\max\bigl\{1, \frac{d_{\max}}{\sigma_*}\bigr\}\Bigr)\beta(X, V). 
\]
\end{thm}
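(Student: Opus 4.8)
The plan is to prove Theorem~\ref{thm:SPA} by following the SPA iterations (Algorithm~\ref{alg:SPA}) geometrically, showing inductively that each selected point lands in a small ``simplicial neighborhood'' of a distinct true vertex, with the error controlled by $\beta = \beta(X,V)$.

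\textbf{Step 1 (Setup and decomposition).} For each $i$, let $s_i$ be the metric projection of $X_i$ onto the simplex $\mathcal{S}$ and write $X_i = s_i + w_i$; by the first term in $\beta(X,V)$ we have $\|w_i\| = \mathrm{Dist}(X_i,\mathcal{S})\le\beta$. Since $s_i\in\mathcal{S}$, we may write $s_i = V\rho_i$ for a weight vector $\rho_i$. I record the iteration in the form: at step $k$, letting $\Pi_{k-1}$ be the orthogonal projector onto the complement of $\mathrm{span}\{\hat{v}_1,\dots,\hat{v}_{k-1}\}$, the algorithm selects $i_k = \argmax_i\|\Pi_{k-1}X_i\|$ and sets $\hat{v}_k = X_{i_k}$. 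The inductive hypothesis to maintain is that after $k-1$ steps there is an injective assignment of $\hat{v}_1,\dots,\hat{v}_{k-1}$ to distinct vertices $v_{\ell_1},\dots,v_{\ell_{k-1}}$ with $\|\hat{v}_j - v_{\ell_j}\|\le C_0\beta$, where $C_0 = 1 + \tfrac{30\gamma}{\sigma_*}\max\{1,d_{\max}/\sigma_*\}$ is the constant appearing in the conclusion.

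\textbf{Step 2 (Covered versus uncovered vertices).} Both the base case and the inductive step rest on two geometric estimates. First, because $\Pi_{k-1}$ annihilates each $\hat{v}_j$ and $\|v_{\ell_j}-\hat{v}_j\|\le C_0\beta$, every already-covered vertex has small residual, $\|\Pi_{k-1}v_{\ell_j}\|\le C_0\beta$. Second, I claim that at least one not-yet-covered vertex $v_m$ has a large residual $\|\Pi_{k-1}v_m\|\ge c\,\sigma_*$ for a universal $c>0$. For this I compare $\Pi_{k-1}$ with the orthoprojector off the \emph{true} covered vertices $\{v_{\ell_1},\dots,v_{\ell_{k-1}}\}$ (the two subspaces differ by $O(C_0\beta)$, controlled by Weyl's inequality, \cite{HornJohnson}) and bound the distance of an uncovered vertex to the span of the covered ones from below through $\sigma_{K-1}(V)$, using Lemma~\ref{lemma:SPA} and its singular-value relations. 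The normalization hypothesis $450\,d_{\max}\max\{1,d_{\max}/\sigma_*\}\beta\le\sigma_*^2$ is precisely what makes $C_0\beta$ negligible against $\sigma_*$, so both estimates persist through all $K$ steps.

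\textbf{Step 3 (The selected point lands near an uncovered vertex).} Because some $X_j$ with $r_j = v_m$ satisfies $\|X_j - v_m\|\le\beta$ (the second term of $\beta(X,V)$), the maximum projected norm at step $k$ is at least $\|\Pi_{k-1}v_m\| - \beta \ge c\,\sigma_* - \beta$, which is of order $\sigma_*$. Hence the maximizer obeys $\|\Pi_{k-1}X_{i_k}\|\ge c'\sigma_*$, and since $\|w_{i_k}\|\le\beta$ also $\|\Pi_{k-1}V\rho_{i_k}\|\ge c''\sigma_*$. The map $\rho\mapsto\|\Pi_{k-1}V\rho\|^2$ is convex on the simplex, so its maximum is attained at a vertex; combined with the covered-vertex estimate of Step~2 this forces $\rho_{i_k}$ to be concentrated on the uncovered vertices. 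A quantitative strict-convexity (gap) argument then pins $\rho_{i_k}$ near a single vertex $v_{\ell_k}$: any weight $\eta$ placed away from $v_{\ell_k}$ decreases the projected squared norm by an amount of order $\eta\,\sigma_*^2$ up to the factor $\max\{1,d_{\max}/\sigma_*\}$, so near-maximality forces $1 - \rho_{i_k}(\ell_k)\le \tfrac{C\beta}{\sigma_*}\max\{1,d_{\max}/\sigma_*\}$. Converting this $\ell^1$ bound into Euclidean distance via $\|s_{i_k}-v_{\ell_k}\| = \|V(\rho_{i_k}-e_{\ell_k})\|\le 2\gamma\,(1-\rho_{i_k}(\ell_k))$ and adding $\|w_{i_k}\|\le\beta$ yields $\|\hat{v}_k - v_{\ell_k}\|\le C_0\beta$, closing the induction; injectivity follows since covered vertices have negligible residual and so cannot be re-selected.

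\textbf{Main obstacle.} The crux is the Step~2 lower bound $\|\Pi_{k-1}v_m\|\ge c\,\sigma_*$ together with the Step~3 gap estimate, both of which must be carried out with $\Pi_{k-1}$ projecting off the \emph{estimated} vertices and must survive the potential compounding of errors across all $K$ iterations. Controlling the subspace perturbation by $O(C_0\beta)$, quantifying the strict-convexity gap in terms of $\sigma_*$ and $d_{\max}$, and checking that the normalization $450\,d_{\max}\max\{1,d_{\max}/\sigma_*\}\beta\le\sigma_*^2$ keeps every constant on the correct side of the inequalities is where the real work lies; translating the resulting weight concentration into the stated coefficient $C_0 = 1+\tfrac{30\gamma}{\sigma_*}\max\{1,d_{\max}/\sigma_*\}$ is then bookkeeping.
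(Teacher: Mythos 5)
Your overall architecture---an induction over the SPA iterations showing each selected point concentrates its barycentric weight near a distinct true vertex, driven by a quantitative convexity gap and by singular-value control of the successively projected simplex---is the same as the paper's proof (which formalizes the neighborhoods as the simplicial sets ${\cal V}_k(\epsilon)$ and the gap via Lemma~\ref{lem:norm} and Lemma~\ref{lem:SPA-supp1}). However, your Step~2 has a concrete weak link. You propose to lower-bound $\|\Pi_{k-1}v_m\|$ for an uncovered vertex by comparing $\Pi_{k-1}$ (the projector off the \emph{estimated} vertices) with the projector off the true covered vertices, asserting that the two subspaces differ by $O(C_0\beta)$. That assertion does not follow from $\|\hat{v}_j-v_{\ell_j}\|\le C_0\beta$ alone: the distance between $\mathrm{span}\{\hat{v}_j\}$ and $\mathrm{span}\{v_{\ell_j}\}$ scales like $C_0\beta$ divided by the smallest singular value of the covered-vertex matrix $[v_{\ell_1},\dots,v_{\ell_{k-1}}]$, and that singular value is \emph{not} bounded below by $\sigma_*$ or by any hypothesis of the theorem (two vertices can satisfy $\|v_a-v_b\|\ge\sqrt{2}\sigma_*$ yet be nearly parallel as vectors from the origin, making their span unstable under $O(\beta)$ perturbations). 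Moreover, Weyl's inequality controls eigenvalues, not principal angles; you would need a Davis--Kahan/Wedin-type bound, which fails for the reason just given. The paper sidesteps this entirely: Lemma~\ref{lem:cauchy} (eigenvalue interlacing) gives $\sigma_{K-1-s}((I_d-H)V)\ge\sigma_{K-1}(V)$ for an \emph{arbitrary} rank-$s$ projector $H$, so the projected simplex retains its nondegeneracy no matter how the estimated directions deviate from the true ones and no perturbation errors compound across iterations; the claim that some uncovered vertex has residual $\ge\sigma_*/2$ then follows from a trace inequality (Lemma~\ref{lem:vertexNorm}) once one separately shows the covered vertices have small projected norm (Lemmas~\ref{lem:SPA-tech-lem1}--\ref{lem:SPA-tech-lem2}).

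A second, smaller gap: your Step~3 argument pins $\rho_{i_k}$ near a \emph{single} vertex, but when several uncovered vertices have projected norms within $o(\sigma_*)$ of the maximum, a mixture of two of them loses norm only through their \emph{pairwise separation}, so the gap estimate must explicitly invoke $\|\tilde{v}_a-\tilde{v}_b\|\ge\sqrt{2}\sigma_*$ after projection (which again requires the interlacing bound, via Lemma~\ref{lem:simplex} applied to coordinates sharing zeros). The paper handles this by proving concentration only into the \emph{union} of neighborhoods of all near-maximal vertices (${\cal V}(\epsilon_0,h_0)$ in Lemma~\ref{lem:SPA-supp1}) and feeding the pairwise separation into Lemma~\ref{lem:norm}. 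Both issues are repairable by replacing your perturbation comparison with the interlacing argument, but as written Step~2 would not go through.
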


\subsection{Some preliminary lemmas in linear algebra}

To establish Theorem~\ref{thm:SPA}, it is necessary to develop a few lemmas in linear algebra. First, we notice that the vertex matrix $V$ defines a mapping from the standard probability simplex ${\cal S}^*$ to the target simplex ${\cal S}$. The following lemma gives some properties of the mapping: 
\begin{lem} \label{lem:simplex}
Let ${\cal S}^*\subset\mathbb{R}^K$ be the standard probability simplex consisting of all weight vectors. Let $F:{\cal S}^*\to{\cal S}$ be the mapping with $F(\pi)=V\pi$. For any $\pi$ and $\tilde{\pi}$ in ${\cal S}^*$, 
\beq \label{lem-simplex}
\sigma_{K-1}(V)\cdot \|\pi-\tilde{\pi}\|\leq \|F(\pi)-F(\tilde{\pi})\|\leq \gamma(V)\cdot \|\pi-\tilde{\pi}\|_1. 
\eeq
Fix $1\leq s\leq K-2$. If $\pi$ and $\tilde{\pi}$ share at least $s$ common entries, then  
\beq \label{lem-simplex2}
\|F(\pi)-F(\tilde{\pi})\|\geq \sigma_{K-1-s}(V)\|\pi-\tilde{\pi}\|. 
\eeq
\end{lem}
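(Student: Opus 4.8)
The plan is to write $F(\pi)-F(\tilde{\pi})=V(\pi-\tilde{\pi})$ and set $w=\pi-\tilde{\pi}$. The single structural fact driving everything is that $w$ is a \emph{zero-sum} vector: since $\pi,\tilde{\pi}$ are weight vectors, $\mathbf{1}_K'w=0$, so $w$ lies in the $(K-1)$-dimensional hyperplane $\mathbf{1}_K^\perp$. Both inequalities in (\ref{lem-simplex}) then reduce to comparing $\|Vw\|$ with $\|w\|$ (resp.\ $\|w\|_1$) for $w$ in this hyperplane, and this zero-sum property is what lets me recenter $V$ freely at no cost.

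For the upper bound I would use that $\mathbf{1}_K'w=0$ implies $Vw=(V-v_0\mathbf{1}_K')w$ for \emph{every} $v_0\in\mathbb{R}^d$. Writing $Vw=\sum_{k}w_k(v_k-v_0)$ and applying the triangle inequality gives $\|Vw\|\le(\max_k\|v_k-v_0\|)\sum_k|w_k|=(\max_k\|v_k-v_0\|)\,\|w\|_1$. Minimizing the right-hand side over $v_0$ and invoking the definition $\gamma(V)=\min_{v_0}\max_k\|v_k-v_0\|$ yields $\|Vw\|\le\gamma(V)\|w\|_1$, which is the upper half of (\ref{lem-simplex}).

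For the lower bound, the factor $\sigma_{K-1}$ (rather than the global smallest singular value $\sigma_K$) should appear precisely because $w$ is confined to the $(K-1)$-dimensional subspace $\mathbf{1}_K^\perp$. The clean way to see this is again to recenter: with $\tilde{V}=V-\bar{v}\mathbf{1}_K'$ one has $Vw=\tilde{V}w$ on $\mathbf{1}_K^\perp$, while $\tilde{V}\mathbf{1}_K=0$, so $\mathbf{1}_K$ spans the kernel direction of $\tilde{V}$. Expanding $w$ in the right-singular basis of $\tilde{V}$ (whose kernel direction is exactly $\mathbf{1}_K/\sqrt{K}$), every admissible $w$ is a combination of the $K-1$ singular vectors carrying nonzero singular values, so $\|\tilde{V}w\|\ge\sigma_{K-1}\|w\|$, where $\sigma_{K-1}$ is the smallest singular value of $V$ restricted to $\mathbf{1}_K^\perp$. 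I expect this singular-value bookkeeping — confirming that restricting to the zero-sum hyperplane promotes the operative index from $K$ to $K-1$ — to be the main obstacle, and the one point where care is needed regarding which matrix's $(K-1)$-th singular value is actually being controlled.

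Finally, for (\ref{lem-simplex2}) I would localize the previous argument. If $\pi$ and $\tilde{\pi}$ agree on a set of $s$ coordinates, then $w$ is supported on the complementary set $T$ with $|T|\le K-s$ and is still zero-sum, so $Vw=V_Tw_T$, where $V_T$ collects the columns of $V$ indexed by $T$ and $w_T$ is a zero-sum vector in $\mathbb{R}^{|T|}$. Applying the lower-bound step to the sub-vertex-matrix $V_T$ (which has $|T|\le K-s$ columns) controls $\|V_Tw_T\|$ by the smallest singular value of $V_T$ on the zero-sum subspace, i.e.\ essentially its $(|T|-1)$-th singular value; since $|T|-1\le K-1-s$ and singular values decrease in their index, a column-removal (interlacing) comparison is then used to relate this quantity to $\sigma_{K-1-s}(V)$, giving the stated bound. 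The subtlety here mirrors the part-one obstacle: one must track carefully how passing to the sub-simplex shifts the relevant singular-value index.
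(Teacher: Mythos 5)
Your upper bound is exactly the paper's argument (use $\mathbf{1}_K'w=0$ to recenter by an arbitrary $v_0$, apply the triangle inequality, minimize over $v_0$), and it is correct. The gap is in the two lower bounds, and it sits precisely at the singular-value bookkeeping you flagged as the main obstacle. For (\ref{lem-simplex}), your recentering argument is sound up to the point where it delivers $\|Vw\|=\|\tilde{V}w\|\ge \sigma_{K-1}(\tilde{V})\|w\|$: since $\tilde{V}\mathbf{1}_K=0$, the hyperplane $\mathbf{1}_K^{\perp}$ is spanned by the right singular vectors of $\tilde{V}$ carrying its top $K-1$ singular values, so the constant you actually obtain is $\sigma_{K-1}(\tilde{V})$ --- ``the smallest singular value of $V$ restricted to $\mathbf{1}_K^{\perp}$,'' as you put it. But the lemma asserts the constant $\sigma_{K-1}(V)$, and the two are not interchangeable: by Lemma~\ref{lemma:SPA} (Weyl applied to $VV'=\tilde{V}\tilde{V}'+K\bar{v}\bar{v}'$) one has $\sigma_{K-1}(V)\ge\sigma_{K-1}(\tilde{V})$, so your conclusion is \emph{weaker} than the stated one, and the bridge cannot be built in the needed direction. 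Indeed, for a fixed $(K-1)$-dimensional subspace $W$, Courant--Fischer gives $\min_{0\neq w\in W}\|Vw\|/\|w\|\le\sigma_{K-1}(V)$, not $\ge$; translating a fixed simplex far from the origin inflates $\sigma_{K-1}(V)$ while $\|v_k-v_\ell\|$ stays put. The paper's own proof of this step is a one-line appeal to ``basic properties of singular values'' for a vector confined to a $(K-1)$-dimensional subspace, i.e., the same wrong-direction use of min--max; your $\tilde{V}$ version is the one that admits a rigorous justification.

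For (\ref{lem-simplex2}) your route through the column submatrix $V_T$ introduces a second wrong-direction step: deleting columns can only decrease each singular value ($\sigma_j(V_T)\le\sigma_j(V)$), so the interlacing comparison you invoke cannot convert a lower bound in terms of the relevant singular value of $V_T$ (still less of its recentered version) into one in terms of $\sigma_{K-1-s}(V)$. The paper argues differently here: it observes that $\delta=\pi-\tilde{\pi}$ satisfies the $s+1$ linear constraints $\mathbf{1}_K'\delta=0$ and $\delta(k_j)=0$, hence lies in a fixed $(K-1-s)$-dimensional subspace, and then invokes the same subspace-dimension claim as in part one. So as written your proposal does not recover the lemma in its stated form; what it does prove is the variant of (\ref{lem-simplex}) with $\sigma_{K-1}(\tilde{V})$ in place of $\sigma_{K-1}(V)$, which is the form of the inequality that is actually safe to use.
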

The first claim of Lemma~\ref{lem:simplex} is about the case where ${\cal S}$ is non-degenerate. In this case, 
\[
\sigma_{K-1}(V)>0.
\]
Hence, we can upper/lower bound the distance between any two points in ${\cal S}$ by the distance between their barycentric coordinates. The second claim considers the case where ${\cal S}$ can be degenerate (i.e., $\sigma_{K-1}(V)=0$ is possible) but 
\[
\sigma_{K-1-s}(V)>0.
\]
We can still use (\ref{lem-simplex}) to upper bound the distance between two points in ${\cal S}$ but the lower bound there is ineffective. Fortunately, if the two points share $s$ common entries in their barycentric coordinates (which implies that the two points are on the same face or edge), then we can still lower bound the distance between them. 

Second, we study the Euclidean norm of a convex combination of $m$ points. Let $w_1,\ldots,w_m$ be the convex combination weights. By the triangle inequality, 
\[
\Bigl\|\sum_{i=1}^mw_ix_i\Bigr\|\leq \sum_{i=1}^m w_i\|x_i\|\leq \max_{1\leq k\leq K}\|v_k\|.
\] 
This explains why $\max_{x\in {\cal S}}\|x\|$ is always attained at a vertex. Write 
\[
\delta:=\sum_{i=1}^m w_i\|x_i\| - \Bigl\|\sum_{i=1}^mw_ix_i\Bigr\|. 
\]
Knowing $\delta\geq 0$ is not enough for showing Theorem~\ref{thm:SPA}. We need to have an explicit lower bound for $\delta$, as given in the following lemma.   

\begin{lem} \label{lem:norm}
Fix $m\geq 2$ and $x_1,\ldots,x_m\in\mathbb{R}^d$. Let $a=\min_{i\neq j}\|x_i-x_j\|$ and $b=\max_{i\neq j}|\|x_i\|-\|x_j\||$. For any $w_1,\ldots,w_m\geq 0$ such that $\sum_{i=1}^mw_i=1$,  
\beq \label{lem-norm}
\Bigl\|\sum_{i=1}^mw_ix_i\Bigr\|\leq L - \frac{a^2-b^2}{4L} \sum_{i=1}^m w_i(1-w_i), \quad\mbox{with}\;\; L:=\sum_{i=1}^m w_i\|x_i\|. 
\eeq 
\end{lem}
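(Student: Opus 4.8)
The plan is to expand the squared norm $\bigl\|\sum_i w_i x_i\bigr\|^2$ via inner products, reduce everything to the pairwise quantities $\|x_i-x_j\|$ and $\bigl|\,\|x_i\|-\|x_j\|\,\bigr|$, and then pass from the resulting squared inequality to the stated bound by an elementary concavity estimate for the square root.

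First I would invoke the polarization identity $\langle x_i, x_j\rangle = \tfrac12\bigl(\|x_i\|^2 + \|x_j\|^2 - \|x_i-x_j\|^2\bigr)$. Combined with $\sum_i w_i = 1$, expanding the double sum gives the exact identity
\[
\Bigl\|\sum_{i=1}^m w_i x_i\Bigr\|^2 = \sum_{i=1}^m w_i\|x_i\|^2 - \frac12\sum_{i,j} w_iw_j\|x_i-x_j\|^2.
\]
Next I would rewrite the first term on the right using the weighted-variance identity $\sum_i w_i\|x_i\|^2 - L^2 = \tfrac12\sum_{i,j} w_iw_j\bigl(\|x_i\|-\|x_j\|\bigr)^2$, which holds since $L=\sum_i w_i\|x_i\|$. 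Substituting yields
\[
\Bigl\|\sum_{i=1}^m w_i x_i\Bigr\|^2 = L^2 + \frac12\sum_{i,j} w_iw_j\bigl[(\|x_i\|-\|x_j\|)^2 - \|x_i-x_j\|^2\bigr].
\]
I would then apply the two pairwise bounds $\|x_i-x_j\|^2 \ge a^2$ and $(\|x_i\|-\|x_j\|)^2 \le b^2$ for $i\neq j$ (both terms vanish when $i=j$), so the bracket is at most $b^2-a^2$ off the diagonal. Using $\sum_{i\neq j} w_iw_j = 1 - \sum_i w_i^2 = \sum_i w_i(1-w_i)$, this produces
\[
\Bigl\|\sum_{i=1}^m w_i x_i\Bigr\|^2 \le L^2 - \frac{a^2-b^2}{2}\sum_{i=1}^m w_i(1-w_i).
\]

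The final step is to take square roots. Writing $Q = \tfrac{a^2-b^2}{2}\sum_i w_i(1-w_i)$, I would split into two cases. If $a^2\ge b^2$, then $0\le Q\le L^2$, where the upper bound follows because the left-hand side above is nonnegative; the elementary estimate $\sqrt{L^2-Q}\le L - \tfrac{Q}{2L}$ (equivalently $\sqrt{1-t}\le 1-t/2$ for $t\in[0,1]$) then delivers exactly $\bigl\|\sum_i w_i x_i\bigr\|\le L - \tfrac{a^2-b^2}{4L}\sum_i w_i(1-w_i)$. If instead $a^2<b^2$, the claimed right-hand side strictly exceeds $L$, so the bound is immediate from the triangle inequality $\bigl\|\sum_i w_i x_i\bigr\|\le \sum_i w_i\|x_i\| = L$.

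I do not expect a genuine obstacle, since the argument is elementary once the polarization and variance identities are in place. The only points requiring care are in the concluding square-root step: verifying $Q\le L^2$ (via nonnegativity of the left-hand side) so that the concavity estimate for $\sqrt{\cdot}$ applies, and separately handling the degenerate regime $a^2<b^2$ through the triangle inequality. Throughout I assume $L>0$, which holds unless all $x_i$ are the origin.
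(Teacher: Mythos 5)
Your proposal is correct and follows essentially the same route as the paper's proof: both expand $\bigl\|\sum_i w_i x_i\bigr\|^2$, use the identity $2x_i'x_j = 2\|x_i\|\|x_j\| + (\|x_i\|-\|x_j\|)^2 - \|x_i-x_j\|^2$ (your polarization-plus-variance rewriting is the same algebra), arrive at the identical squared bound $\|x\|^2 \leq L^2 - \tfrac{1}{2}(a^2-b^2)\sum_i w_i(1-w_i)$, and then pass to the linear bound via $L-\|x\| = \tfrac{L^2-\|x\|^2}{L+\|x\|} \geq \tfrac{L^2-\|x\|^2}{2L}$. Your explicit case split on the sign of $a^2-b^2$ is handled implicitly in the paper (the chain of inequalities there is valid for either sign), so the two arguments are the same in substance.
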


By Lemma~\ref{lem:norm}, the lower bound for $\delta$ has the expression $\frac{a^2-b^2}{4L} \sum_{i=1}^m w_i(1-w_i)$. This lower bound is large if $a=\min_{i\neq j}\|x_i-x_j\|$ is properly large, and $b=\max_{i\neq j}|\|x_i\|-\|x_j\||$ is properly small, and $\sum_iw_i(1-w_i)$ is properly large. 
\begin{itemize}
\item A large $a$ means that these $m$ points are sufficiently `different' from each other.
\item A small $b$ means that the norms of these $m$ points are sufficiently close. 
\item A large $\sum_iw_i(1-w_i)$ prevents each of $w_i$ from being too close to $1$, implying that the convex combination is sufficiently `mixed'.  
\end{itemize}
Later in Section~\ref{subsec:simplicial-neighborhoods}, we will see that Lemma~\ref{lem:norm} plays a critical role in the proof of Theorem~\ref{thm:SPA}.   

Third, we explore the projection of ${\cal S}$ into a lower-dimensional space. Let $H\in\mathbb{R}^{d\times d}$ be an arbitrary projection matrix with rank $s$. We use $(I_d-H)$ to project ${\cal S}$ into the orthogonal complement of $H$, where the projected vertices are the columns of 
\[
V^{\perp}=(I_d-H)V.
\] 
Since the projected simplex is not guranteed to be non-degenerate, it is possible that $\sigma_{K-1}(V^{\perp})=0$. However, we have a lower bound for $\sigma_{K-1-s}(V^{\perp})$, as given in the following lemma:  

\begin{lem} \label{lem:cauchy}
Fix $1\leq s\leq K-2$. 
For any projection matrix $H\in\mathbb{R}^{d\times d}$ with rank $s$,  
\beq \label{lem-cauchy}
\sigma_{K-1-s}((I_d-H)V)\geq \sigma_{K-1}(V).
\eeq
\end{lem}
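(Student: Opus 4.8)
The plan is to prove this as an instance of Cauchy-type interlacing for the product of a matrix with an orthogonal projection, carried out directly through the Courant--Fischer min-max characterization of singular values. Throughout, write $P = I_d - H$, which is an orthogonal projection of rank $d-s$ whose kernel is the range of $H$ (of dimension $s$). The starting observation is the Pythagorean identity for complementary orthogonal projections: for every $x \in \mathbb{R}^K$,
\[
\|Vx\|^2 = \|HVx\|^2 + \|(I_d-H)Vx\|^2,
\]
so that $(I_d-H)Vx$ and $Vx$ agree in norm precisely on the directions $x$ for which $HVx = 0$.

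Next I would invoke the min-max formula $\sigma_k(M) = \max_{\dim S = k}\min_{0 \neq x \in S} \|Mx\|/\|x\|$, where $S$ ranges over $k$-dimensional subspaces of the domain $\mathbb{R}^K$; this holds for $1 \leq k \leq \min(d,K)$, and one checks that the index $K-1-s$ lies in this range since $1 \leq K-1-s \leq K-2$ and $d \geq K-1$. For the lower bound I only need the ``$\max$'' direction: for \emph{any} $k$-dimensional subspace $S$ one has $\sigma_k(M) \geq \min_{0 \neq x \in S}\|Mx\|/\|x\|$. Let $T^\ast \subseteq \mathbb{R}^K$ be the span of the top $K-1$ right singular vectors of $V$; then $\dim T^\ast = K-1$ and $\|Vx\| \geq \sigma_{K-1}(V)\|x\|$ for all $x \in T^\ast$.

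The key construction is to restrict $T^\ast$ to the part annihilated by $HV$. Since $H$ has rank $s$, the linear map $x \mapsto HVx$ has rank at most $s$; hence its kernel intersected with $T^\ast$, namely $S := \{x \in T^\ast : HVx = 0\}$, has dimension at least $(K-1) - s = K-1-s$. Choosing any $(K-1-s)$-dimensional subspace inside $S$, every unit vector $x$ in it satisfies $HVx = 0$, whence by the Pythagorean identity $\|(I_d-H)Vx\| = \|Vx\| \geq \sigma_{K-1}(V)$. Feeding this subspace into the min-max formula applied to $(I_d-H)V$ with $k = K-1-s$ then yields $\sigma_{K-1-s}((I_d-H)V) \geq \sigma_{K-1}(V)$, as claimed.

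I do not expect a serious obstacle here; the argument is short once the right subspace is identified. The only points requiring care are the bookkeeping: confirming that $K-1-s$ is a legitimate singular-value index for both matrices, and getting the dimension count of $S$ correct (the rank-$s$ map $HV$ can cut the dimension of $T^\ast$ down by at most $s$). An alternative, essentially equivalent, route would be to cite the interlacing inequality $\sigma_{k+s}(A) \leq \sigma_k(PA)$ for a rank-$(d-s)$ projection $P$ and apply it with $A = V$ and $k = K-1-s$; but the self-contained Courant--Fischer argument above is cleaner and avoids importing external machinery.
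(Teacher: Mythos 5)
Your proof is correct, and it takes a genuinely different (though closely related) route from the paper's. The paper writes $H=Q_1Q_1'$ and $I_d-H=Q_2Q_2'$ for an orthogonal $Q=[Q_1,Q_2]$, observes that $\sigma_{K-1-s}^2((I_d-H)V)=\lambda_{K-1-s}(Q_2'VV'Q_2)$, and then invokes the Cauchy eigenvalue interlacing theorem for the principal submatrix $Q_2'VV'Q_2$ of $Q'VV'Q$, finishing with $\lambda_{K-1}(Q'VV'Q)=\sigma_{K-1}^2(V)$. You instead run the Courant--Fischer argument directly: you take the span $T^\ast$ of the top $K-1$ right singular vectors of $V$, cut it down by the kernel of the rank-$\leq s$ map $x\mapsto HVx$ to get a subspace of dimension at least $K-1-s$ on which the Pythagorean identity gives $\|(I_d-H)Vx\|=\|Vx\|\geq \sigma_{K-1}(V)\|x\|$, and feed that subspace into the max--min formula. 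All the bookkeeping in your argument checks out (the index range, the rank-nullity count $\dim S\geq (K-1)-s$, and the use of orthogonality of $H$ and $I_d-H$, which is legitimate since ``projection matrix'' in this paper means symmetric idempotent). In effect you have reproved the interlacing inequality the paper cites, specialized to this situation; what your version buys is a self-contained, citation-free argument that makes the geometric mechanism explicit (the projection can only ``kill'' $s$ directions of the well-conditioned subspace), while the paper's version is shorter at the cost of importing the principal-submatrix interlacing theorem from \cite{HornJohnson}. Your closing remark that one could alternatively cite $\sigma_{k}(PA)\geq\sigma_{k+s}(A)$ for a corank-$s$ projection $P$ is essentially a restatement of what the paper does.
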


Finally, we present a lemma about 
\[
d_{\max}=\max_{x\in {\cal S}}\|x\|=\max_{1\leq k\leq K}\|v_k\|.
\]
In the analysis of SPA, it is not hard to get a lower bound for $d_{\max}$ in the first iteration. However, as the algorithm successively projects ${\cal S}$ into lower-dimensional subspaces, we need to keep track of this quantity for the projected simplex spanned by $V^{\perp}$. Lemma~\ref{lem:cauchy} shows that the singular values of $V^{\perp}$ can be lower bounded. It motivates us to have a lemma that provides a lower bound of $d_{\max}$ in terms of the singular values of $V$.   
\begin{lem} \label{lem:vertexNorm}
Fix $0\leq s\leq K-2$. Suppose there are at least $s$ indices, $\{k_1,\ldots,k_s\}\subset\{1,2,\ldots,K\}$, such that $\|v_k\|\leq \delta$. If $\sigma^2_{K-1-s}(V)\geq 2(K-2)\delta^2$, then
\beq \label{lem-vertexNorm}
\max_{1\leq k\leq K}\|v_k\|\geq \frac{\sqrt{K-s-1}}{\sqrt{2(K-s)}}\, \sigma_{K-1-s}(V)\geq \frac{1}{2}\sigma_{K-1-s}(V).  
\eeq
\end{lem}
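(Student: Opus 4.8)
The plan is to connect the column norms of $V$ to its singular values through the Frobenius identity $\|V\|_F^2=\sum_{k=1}^K\|v_k\|^2=\sum_j\sigma_j^2(V)$, and then exploit the norm gap created by the $s$ small vertices. The argument is essentially a counting estimate: the top $K-1-s$ singular values force the total squared column norm to be large, yet only the $K-s$ ``large'' vertices can carry this mass, so at least one of them must have norm comparable to $\sigma_{K-1-s}(V)$.

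First I would bound the Frobenius norm from below. Since $\sigma_1(V)\geq\cdots\geq\sigma_{K-1-s}(V)$, discarding all but the top $K-1-s$ singular values gives $\|V\|_F^2\geq\sum_{j=1}^{K-1-s}\sigma_j^2(V)\geq(K-1-s)\,\sigma_{K-1-s}^2(V)$. Next I would bound it from above: selecting $s$ indices among those with $\|v_k\|\leq\delta$ and bounding each of the remaining $K-s$ columns by $d_{\max}=\max_{1\leq k\leq K}\|v_k\|$ yields $\|V\|_F^2\leq s\delta^2+(K-s)\,d_{\max}^2$ (choosing exactly $s$ small indices makes this valid even when more than $s$ columns are small). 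Combining the two estimates produces the single key inequality
\[
(K-1-s)\,\sigma_{K-1-s}^2(V)\leq s\delta^2+(K-s)\,d_{\max}^2.
\]

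The crux is then to absorb the $\delta^2$ term using the hypothesis $\sigma_{K-1-s}^2(V)\geq 2(K-2)\delta^2$, i.e.\ $\delta^2\leq\sigma_{K-1-s}^2(V)/(2(K-2))$. Substituting and collecting terms, it suffices to verify the elementary scalar inequality $(K-1-s)-\frac{s}{2(K-2)}\geq\frac{K-1-s}{2}$, which I expect to be the only delicate point of the proof. This rearranges to $\frac{K-1-s}{2}\geq\frac{s}{2(K-2)}$, i.e.\ $(K-1-s)(K-2)\geq s$; expanding and dividing by $K-1$ reduces it to $K-2\geq s$, which is exactly the standing assumption $s\leq K-2$. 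Hence $(K-s)\,d_{\max}^2\geq\frac{K-1-s}{2}\,\sigma_{K-1-s}^2(V)$, giving the first claimed bound $d_{\max}\geq\frac{\sqrt{K-s-1}}{\sqrt{2(K-s)}}\,\sigma_{K-1-s}(V)$. Finally, the second inequality $\frac{\sqrt{K-s-1}}{\sqrt{2(K-s)}}\geq\frac12$ is equivalent to $2(K-s-1)\geq K-s$, i.e.\ $K-s\geq2$, once again a consequence of $s\leq K-2$, which completes the plan.
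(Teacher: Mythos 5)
Your proposal is correct and follows essentially the same route as the paper's proof: both rest on the two-sided bound $(K-1-s)\,\sigma_{K-1-s}^2(V)\leq \mathrm{trace}(V'V)=\|V\|_F^2\leq s\delta^2+(K-s)\,d_{\max}^2$ and then absorb the $s\delta^2$ term via the hypothesis $2(K-2)\delta^2\leq\sigma_{K-1-s}^2(V)$. The only cosmetic difference is that the paper verifies the absorption step through the monotonicity of $s/(K-s-1)$ while you check the equivalent scalar inequality $(K-1-s)(K-2)\geq s$ directly.
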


\subsection{The simplicial neighborhoods and a key lemma} \label{subsec:simplicial-neighborhoods}

We fix a simplex ${\cal S}\subset\mathbb{R}^d$ whose vertices are $v_1,v_2,\ldots,v_K$. Write $V=[v_1,v_2,\ldots,v_K]\in\mathbb{R}^{d\times K}$. Let ${\cal S}^*$ denote the standard probability simplex, and let $F:{\cal S}^*\to {\cal S}$ be the mapping in Lemma~\ref{lem:simplex}. We introduce a local neighborhood for each vertex that has a ``simplex shape": 
\begin{definition} \label{def:neighborhood}
Given $\epsilon\in (0,1)$, for each $1\leq k\leq K$, the $\epsilon$-simplicial-neighborhood of $v_k$ inside the simplex ${\cal S}$ is defined by 
\[
{\cal V}_k(\epsilon): = \{F(\pi): \pi\in {\cal S}^*,\, \pi(k) \geq 1-\epsilon \}.
\] 
\end{definition}
These simplicial neighborhoods are highlighted in blue in Figure~\ref{fig:proof}.

\begin{figure}[htb!]
\centering
\includegraphics[width=.5\textwidth]{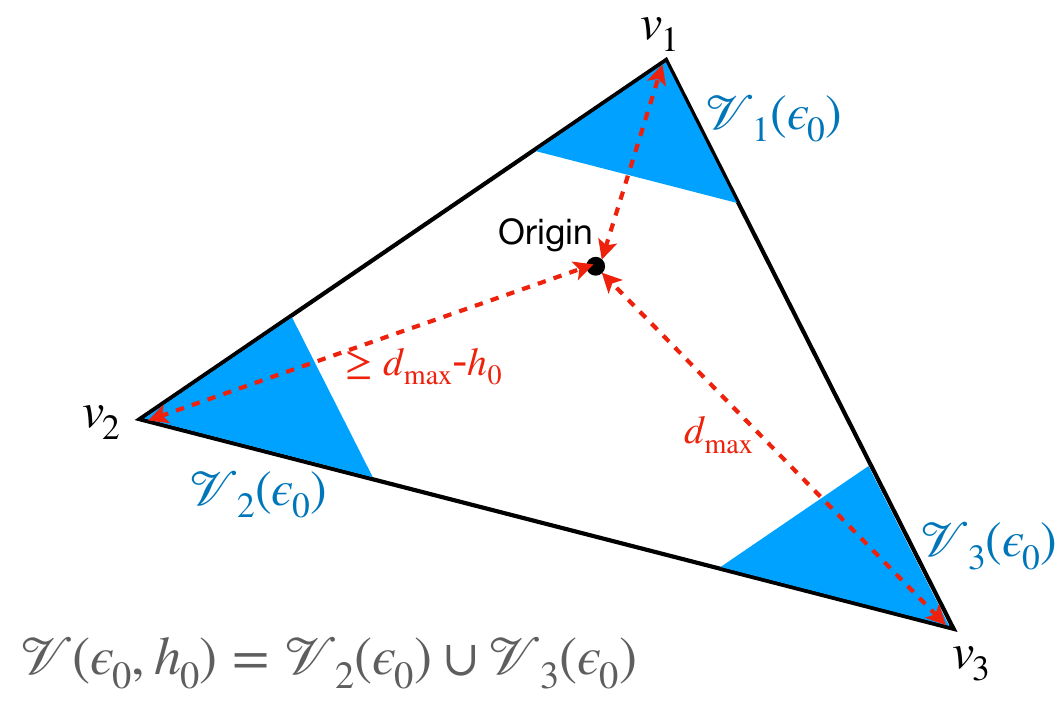}
\caption{An illustration of the simplicial neighborhoods and ${\cal V}(\epsilon_0, h_0)$.} \label{fig:proof}
\end{figure}

First, we verify that each ${\cal V}_k(\epsilon)$ is indeed a ``neighborhood" in the sense each $x\in {\cal V}_k(\epsilon)$ is sufficiently close to $v_k$. Note that $v_k=F(e_k)$, where $e_k$ is the $k$th standard basis vector of $\mathbb{R}^{K}$. For any $\pi\in {\cal S}^*$, 
\[
\|\pi-e_k\|_1=2[1-\pi(k)].
\]
By Definition~\ref{def:neighborhood}, for any $x\in {\cal V}_k(\epsilon)$, its barycentric coordinate $\pi$ satisfies $1-\pi(k)\leq \epsilon$. It follows by Lemma~\ref{lem:simplex} that  
\beq \label{thm-SPA-3}
\max_{x\in {\cal V}_k(\epsilon)}\|x-v_k\| = \max_{\pi\in {\cal S}^*: \pi(k)\leq 1-\epsilon}\|F(\pi)-F(e_k)\| \leq 2\gamma(V) \epsilon. 
\eeq
Hence, ${\cal V}_k(\epsilon)$ is within a ball centered at $v_k$ with a radius of $2\gamma(V)\epsilon$. However, we opt to utilize these simplex-shaped neighborhoods instead of standard balls, as this choice greatly simplifies proofs.

Next, we show that as long as $\epsilon<1/2$, the $K$ neighborhoods ${\cal V}_1(\epsilon), \ldots, {\cal V}_K(\epsilon)$  are non-overlapping. By Lemma~\ref{lem:simplex},  
\beq \label{bwDist}
\|v_k-v_\ell\|\geq \sigma_{K-1}(V)\|e_k-e_\ell\|\geq \sqrt{2}\sigma_{K-1}(V), \qquad\mbox{for }1\leq k\neq \ell\leq K. 
\eeq
When $x\in{\cal V}_k(\epsilon)$, the $k$th entry of $\pi:=F^{-1}(x)$ is at least $1-\epsilon > 1/2$. Since each $\pi\in {\cal S}^*$ cannot have two entries larger than $1/2$, these neighborhoods are disjoint: 
\beq \label{thm-SPA-2(2)}
{\cal V}_k(\epsilon)\cap {\cal V}_{\ell}(\epsilon) = \emptyset, \qquad \mbox{for any }1\leq k\neq \ell\leq K. 
\eeq

{\bf An intuitive explanation of our proof ideas for Theorem~\ref{thm:SPA}}: 
We outline our proof strategy using the example in Figure~\ref{fig:proof}. 
The first step of SPA finds 
\[
i_1=\mathrm{argmax}_{1\leq i\leq n}\|X_i\|.
\]
The population counterpart of $X_{i_1}$ is denoted by $r_{i_1}$. We will explore the region of the simplex that $r_{i_1}$ falls into. In the noiseless case, $X_i=r_i$ for all $1\leq i\leq n$. Since the maximum Euclidean norm over a simplex can only be attained at vertex, $r_{i_1}$ must equal to one of the vertices. In Figure~\ref{fig:proof},  the vertex $v_3$ has the largest Euclidean norm, hence, $r_{i_1}=v_3$ in the noiseless case. In the noisy case, the index $i$ that maximizes $\|X_i\|$ may not maximize $\|r_i\|$; i.e., $r_{i_1}$ may not have the largest Euclidean norm among $r_i$'s. Noticing that $\|v_3\|>\|v_2\|>\|v_1\|$, we expect to see two possible cases:
\begin{itemize}
\item Possibility 1: $r_{i_1}$ is in the $\epsilon$-simplicial-neighborhood of $v_3$, for a small $\epsilon>0$. 
\item Possibility 2 (when $\|v_2\|$ is close to $\|v_3\|$): $r_{i_1}$ is in the $\epsilon$-simplicial-neighborhood of $v_2$. 
\end{itemize}
The focus of our proof will be showing that $r_{i_1}$ falls into ${\cal V}_2(\epsilon)\cup {\cal V}_3(\epsilon)$. No matter $r_i\in {\cal V}_2(\epsilon)$ holds or $r_i\in {\cal V}_3(\epsilon)$ holds, the corresponding $\hat{v}_1=X_{i_1}$ is close to one of the vertices.

{\bf Formalization of the above insights, and a key lemma}: Introduce the notation
\beq \label{thm-SPA-4-add}
{\cal K}^*=\{k: \|v_k\|=d_{\max}\}, \qquad\mbox{where}\quad d_{\max}:=\max_{x\in {\cal S}}\|x\|=\max_{k}\|v_k\|. 
\eeq
Given any $h_0>0$ and $\epsilon_0\in (0,1/2)$, let ${\cal V}_k(\epsilon_0)$ be the same as in Definition~\ref{def:neighborhood},  and we define an index set ${\cal K}(h_0)$ and a region ${\cal V}(\epsilon_0, h_0)\subset{\cal S}$ as follows:
\beq \label{thm-SPA-4}
{\cal K}(h_0) =\{k: \|v_k\|\geq d_{\max}-h_0 \}, \qquad {\cal V}(\epsilon_0, h_0)=\cup_{k\in{\cal K}(h_0)}{\cal V}_k(\epsilon_0), 
\eeq
For the example in Figure~\ref{fig:proof}, ${\cal K}^*=\{3\}$, ${\cal K}(h_0)=\{2,3\}$, and ${\cal V}(\epsilon_0, h_0)={\cal V}_2(\epsilon_0)\cup {\cal V}_3(\epsilon_0)$. 

In the proof of Theorem~\ref{thm:SPA}, we will repeatedly use the following key lemma, which states that the Euclidean norm of any point in ${\cal S}\setminus {\cal V}(\epsilon_0, h_0)$ is strictly smaller than $d_{\max}$ by a certain amount: 
\begin{lem}\label{lem:SPA-supp1}
Fix a simplex ${\cal S}\subset\mathbb{R}^d$ with vertices $v_1,v_2,\ldots,v_K$. Write $d_{\max}=\max_{1\leq k\leq K}\|v_k\|$. Suppose there exists  $\sigma_*>0$ such that  
\beq \label{key-lemma-cond}
d_{\max}\geq \sigma_*/2, \qquad\mbox{and}\qquad  \min_{1\leq k\neq \ell\leq K}\|v_k-v_{\ell}\|\geq \sqrt{2}\sigma_*.  
\eeq
Let ${\cal K}(h_0)$ and ${\cal V}(\epsilon_0, h_0)$ be as defined in (\ref{thm-SPA-4}). 
Given any $t >0$ such that $ \max\{1, d_{\max}/\sigma_*\} t <  3\sigma_*$, if we set $(h_0, \epsilon_0)$ such that 
\beq \label{key-lemma-params}
h_0= \sigma_*/3,  \qquad\mbox{and}\qquad 1/2>\epsilon_0\geq 6\sigma^{-1}_*\max\{1, d_{\max}/\sigma_*\}t ,
\eeq
then 
\beq \label{key-lemma-claim}
\|x\|\leq d_{\max}-t, \qquad\mbox{for all $x\in {\cal S}\setminus {\cal V}(\epsilon_0, h_0)$}.
\eeq
\end{lem}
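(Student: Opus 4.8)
\noindent The plan is to pass to barycentric coordinates and then run a dichotomy on how much weight lies on the high-norm vertices. Write an arbitrary $x\in{\cal S}$ as $x=F(\pi)=\sum_{k=1}^K\pi(k)v_k$ with $\pi\in{\cal S}^*$. By Definition~\ref{def:neighborhood} and (\ref{thm-SPA-4}), the hypothesis $x\notin{\cal V}(\epsilon_0,h_0)$ becomes the single clean condition that $\pi(k)<1-\epsilon_0$ for every $k\in{\cal K}(h_0)$; i.e., no high-norm vertex carries weight close to $1$. Set $p=\sum_{k\in{\cal K}(h_0)}\pi(k)$, the total weight on high-norm vertices, and note ${\cal K}(h_0)\neq\emptyset$ since it contains every maximizer of $\|v_k\|$. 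I would then split on the size of $p$, writing $\mu=\max\{1,d_{\max}/\sigma_*\}\geq 1$ throughout.

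\noindent\textbf{The easy regime.} If $p\leq 1-\epsilon_0/2$, a crude triangle-inequality bound already suffices. From $\|x\|\leq\sum_k\pi(k)\|v_k\|$, splitting the sum over ${\cal K}(h_0)$ (where $\|v_k\|\leq d_{\max}$) and its complement (where $\|v_k\|<d_{\max}-h_0$) gives $\|x\|\leq d_{\max}-(1-p)h_0\leq d_{\max}-\tfrac{\epsilon_0}{2}\cdot\tfrac{\sigma_*}{3}$. With $h_0=\sigma_*/3$ and $\epsilon_0\geq 6\mu t/\sigma_*\geq 6t/\sigma_*$, this decrement is already at least $t$.

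\noindent\textbf{The hard regime and the role of Lemma~\ref{lem:norm}.} If $p>1-\epsilon_0/2$, almost all weight sits on high-norm vertices, yet none exceeds $1-\epsilon_0$, so the mass must be genuinely spread across at least two of them — this is exactly where Lemma~\ref{lem:norm} is needed. I would decompose $x=py+(1-p)z$, where $y=p^{-1}\sum_{k\in{\cal K}(h_0)}\pi(k)v_k$ is the high-norm barycenter and $z$ the barycenter of the remaining mass, so that $\|x\|\leq p\|y\|+(1-p)\|z\|$ with $\|z\|\leq d_{\max}$. Applying Lemma~\ref{lem:norm} to $y$ with weights $w_k=\pi(k)/p$, the key observation is that restricting to the high-norm block forces $b:=\max_{k\neq\ell}\bigl|\|v_k\|-\|v_\ell\|\bigr|\leq h_0=\sigma_*/3$, whereas the separation hypothesis gives $a:=\min_{k\neq\ell}\|v_k-v_\ell\|\geq\sqrt{2}\,\sigma_*$; hence $a^2-b^2\geq\tfrac{17}{9}\sigma_*^2>0$. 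Since $L\mapsto L-cM/L$ is increasing and $L_y\leq d_{\max}$, Lemma~\ref{lem:norm} yields $\|y\|\leq d_{\max}-\tfrac{17\sigma_*^2}{36 d_{\max}}M$ with $M=\sum_k w_k(1-w_k)$. I would then bound the mixing term from below via $M=1-\sum_kw_k^2\geq 1-w_{\max}>\epsilon_0/2$, using $w_{\max}<(1-\epsilon_0)/p$ together with $p>1-\epsilon_0/2$. Feeding this back, and using $p>3/4$, the inequality $\sigma_*^2/d_{\max}\geq\sigma_*/\mu$ (from $d_{\max}\leq\mu\sigma_*$), and $\epsilon_0\geq 6\mu t/\sigma_*$, makes the decrement exceed $t$ once more.

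\noindent\textbf{Main obstacle.} The one genuinely delicate point is the hard regime: one must not apply Lemma~\ref{lem:norm} to all $K$ vertices simultaneously, since over the full vertex set $b$ can be as large as $d_{\max}$, so that $a^2-b^2$ turns negative and the bound becomes vacuous. The remedy is precisely the two-level decomposition that isolates the high-norm block — where vertex norms differ by at most $h_0$ — combined with the observation that the non-concentration constraint $\pi(k)<1-\epsilon_0$ forces the mixing $M$ to stay bounded away from $0$. Carrying the constants ($h_0=\sigma_*/3$, $a^2-b^2\geq\tfrac{17}{9}\sigma_*^2$, $p>3/4$) through both regimes shows that the stated threshold $\epsilon_0\geq 6\mu t/\sigma_*$ is tuned exactly to convert the mixing decrement into the target gap $t$; the standing conditions $\mu t<3\sigma_*$ and $d_{\max}\geq\sigma_*/2$ are mild technical requirements that keep the construction non-degenerate (an admissible $\epsilon_0<1/2$ exists and the vertices in ${\cal K}(h_0)$ have positive norm).
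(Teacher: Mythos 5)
Your proposal is correct and follows essentially the same route as the paper's proof: the same dichotomy on the total weight carried by the high-norm vertex set ${\cal K}(h_0)$ (triangle inequality when at least $\epsilon_0/2$ of the mass escapes, Lemma~\ref{lem:norm} applied to the barycenter of the high-norm block with $a\geq\sqrt{2}\sigma_*$ and $b\leq h_0$ otherwise), with the same observation that the non-concentration constraint forces $|{\cal K}(h_0)|\geq 2$ and keeps the mixing term of order $\epsilon_0$. The constant bookkeeping differs cosmetically (you bound $\sum_k w_k(1-w_k)$ by $1-w_{\max}$ rather than the paper's termwise estimate) but lands on the same threshold for $\epsilon_0$.
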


Lemma~\ref{lem:SPA-supp1} will be proved in Section~\ref{subsec:proof-key-lemma}, where we invoke Lemma~\ref{lem:norm} to prove the claim here.

\subsection{Proof of Theorem~\ref{thm:SPA} (Theorem~1 in the main paper)}

The proof consists of three steps. In Step 1, we study the first iteration of SPA and show that $\hat{v}_1$ falls in the neighborhood of a true vertex. In Steps 2-3, we recursively study the remaining iterations and show that, if $\hat{v}_{1},\ldots,\hat{v}_{s-1}$ fall into the neighborhoods of $(s-1)$ true vertices, one per each, then $\hat{v}_k$ will also fall into the neighborhood of another true vertex. 
For clarity, we first study the second iteration in Step 2 (for which the notations are simpler), and then study the $s$th iteration for a general $s$ in Step 3. 

Let's denote for brevity:
\[
\gamma=\gamma(V), \qquad d_{\max}=d_{\max}(V), \qquad \sigma_*=\sigma_{K-1}(V), \qquad \beta=\beta(X,V). 
\]
Write $J_k=\{1\leq i\leq n: \pi_i(k)=1\}$, for $1\leq k\leq K$. From the definition of $\beta(X,V)$,
\beq \label{thm-SPA-1}
\max_{1\leq i \leq n} \mathrm{Dist}(X_i, {\cal S})\leq \beta, \qquad \max_{1\leq k\leq K}  \min_{i\in J_k}\|X_i - v_k\|\leq \beta. 
\eeq



{\bf Step 1: Analysis of the first iteration of SPA}.

Applying Lemma~\ref{lem:vertexNorm} with $s=0$, we have $d_{\max}\geq \sigma_*/2$. We then apply Lemma~\ref{lem:SPA-supp1}. Let ${\cal V}(\epsilon_0, h_0)$ be as in (\ref{thm-SPA-4}), with 
\beq \label{thm-SPA-new-add-4}
h_0=\sigma_*/3, \qquad\mbox{and}\qquad \epsilon_0=15\max\{\sigma_*, \, \sigma_*^{-2}d_{\max}\}\beta.
\eeq
Our assumptions yield $\epsilon_0<1/2$. Additionally, when $t=7\beta/3$, $\epsilon_0\geq 6\sigma_*^{-1}\max\{1,d_{\max}/\sigma^*\}t$, which satisfies (\ref{key-lemma-params}). 
We apply Lemma~\ref{lem:SPA-supp1} with $t=7\beta/3$. It yields 
\beq \label{thm-SPA-5}
\max_{x\in {\cal S}\setminus {\cal V}(\epsilon_0, h_0)}\|x\|\leq d_{\max}-7\beta/3. 
\eeq
At the same time, let ${\cal K}^*$ be the same as in (\ref{thm-SPA-4-add}). For any $k\in {\cal K}^*$, it follows by (\ref{thm-SPA-1}) that 
\[
\mbox{there exists at least one $i^*\in J_k$ such that $\|X_{i^*}-v_k\|\leq \beta$}.
\]
Note that $\|v_k\|=d_{\max}$ for $k\in {\cal K}^*$. 
It follows by the triangle inequality that 
\[
\|X_{i^*}\|\geq \|v_k\|-\beta \geq  d_{\max}-\beta.
\]
Since $\|X_{i_1}\|=\max_{i}\|X_i\|$, we immediately have: 
\beq \label{thm-SPA-6}
\|X_{i_1}\|\geq \|X_{i^*}\|\geq d_{\max}-\beta. 
\eeq 
Combining (\ref{thm-SPA-5}) and (\ref{thm-SPA-6}), we conclude that $X_{i_1}\notin {\cal S}\setminus {\cal V}(\epsilon_0, h_0)$; in other words,  
\beq
\mbox{$X_{i_1}$ can only be inside ${\cal V}(\epsilon_0, h_0)$ or outside ${\cal S}$}.
\eeq  
Suppose $X_{i_1}$ is outside ${\cal S}$. Let $\mathrm{proj}_{\cal S}(X_{i_1})\in\mathbb{R}^d$ be the point in the simplex that is closest to $X_{i_1}$. In other words,  $\|X_{i_1}-\mathrm{proj}_{\cal S}(X_{i_1})\|=\min_{x\in {\cal S}} \|X_{i_1}-x\|=\mathrm{Dist}(X_{i_1}, {\cal S})$. Using the first inequality in (\ref{thm-SPA-1}), we have
\beq \label{thm-SPA-new-add}
\|X_{i_1}-\mathrm{proj}_{\cal S}(X_{i_1})\|\leq \beta. 
\eeq
It follows by the triangle inequality and (\ref{thm-SPA-6}) that 
\[
\|\mathrm{proj}_{\cal S}(X_{i_1})\| \geq \|X_{i_1}\|-\beta \geq d_{\max}-2\beta. 
\]
Combining it with (\ref{thm-SPA-5}), we conclude that $\mathrm{proj}_{\cal S}(X_{i_1})$ cannot be in ${\cal S}\setminus {\cal V}(\epsilon_0, h_0)$. So far, we have shown that one of the following cases must happen: 
\begin{align} \label{thm-SPA-cases}
& \mbox{Case 1: $X_{i_1}\in {\cal V}(\epsilon_0, h_0)$}, \cr
&  \mbox{Case 2: $X_{i_1}\notin {\cal S}$, and $\mathrm{proj}_{\cal S}(X_{i_1})\in {\cal V}(\epsilon_0, h_0)$}. 
\end{align}

In Case 1, since ${\cal V}_1(\epsilon_0), \ldots,{\cal V}_K(\epsilon_0)$ are disjoint, 
there exists only one $k_1\in {\cal K}(h_0)$ such that $X_{i_1}\in {\cal V}_{k_1}(\epsilon_0)$. It follows by (\ref{thm-SPA-3}) that 
\beq \label{thm-SPA-new-add-2}
\|X_{i_1}-v_{k_1}\|\leq 2\gamma\epsilon_0, \qquad \mbox{in Case 1}. 
\eeq
In Case 2, similarly, there is only one $k_1\in {\cal K}(h_0)$ such that $\mathrm{proj}_{\cal S}(X_{i_1})\in {\cal V}_{k_1}(\epsilon_0)$. It follows by (\ref{thm-SPA-3}) again that 
\[
\|\mathrm{proj}_{\cal S}(X_{i_1})-v_{k_1}\|\leq 2\gamma\epsilon_0.
\]
Combining it with (\ref{thm-SPA-new-add}) gives
\begin{align} \label{thm-SPA-new-add-3}
\|X_{i_1}-v_{k_1}\| & \leq \|X_{i_1}-\mathrm{proj}_{\cal S}(X_{i_1})\| + \|\mathrm{proj}_{\cal S}(X_{i_1}) -v_{k_1} \| \cr
& \leq 2\gamma\epsilon_0 + \beta, \qquad\mbox{in Case 2}. 
\end{align}
We put (\ref{thm-SPA-new-add-2}) and (\ref{thm-SPA-new-add-3}) together and plug in the value of $\epsilon_0$ in (\ref{thm-SPA-new-add-4}). It yields:
\begin{align} \label{thm-SPA-claim1}
\|X_{i_1}&-v_{k_1}\|  \leq \beta + 2\gamma\epsilon_0\cr
&\leq  \Bigl(1+\frac{30\gamma}{\sigma_*}\max\bigl\{1, \frac{d_{\max}}{\sigma_*}\bigr\}\Bigr)\beta, \qquad\mbox{for some $k_1$}.  
\end{align}

{\bf Step 2: Analysis of the second iteration of SPA}. 

Let $H_1=I_d - \frac{1}{\|X_{i_1}\|^2}X_{i_1}X'_{i_1}$ and $\widetilde{X}_i =H_1X_i$, for $1\leq i\leq n$. The second iteration operates on the data points $\widetilde{X}_1,\ldots, \widetilde{X}_n\in\mathbb{R}^d$. Write 
\[
\tilde{r}_i = H_1r_i, \qquad \tilde{\epsilon}_i=H_1\epsilon_i, \qquad \tilde{v}_k = H_1v_k, \qquad \widetilde{V}=[\tilde{v}_1, \tilde{v}_2,\ldots,\tilde{v}_K].
\]
It follows that
\beq \label{thm-SPA-newmodel}
\widetilde{X}_i = \widetilde{V}\pi_i + \tilde{\epsilon}_i, \qquad 1\leq i\leq n. 
\eeq
Let $\widetilde{S}\subset\mathbb{R}^d$ denote the projected simplex, whose vertices are $\tilde{v}_1,\ldots,\tilde{v}_K$. Let $\widetilde{F}$ denote the mapping from the standard probability simplex ${\cal S}^*$ to the projected simplex $\widetilde{S}$ (note that $\widetilde{F}$ is not necessarily  a one-to-one mapping). We consider the neighborhoods of $\widetilde{S}$ using Definition~\ref{def:neighborhood}
\beq \label{thm-SPA-newNeighbor}
\widetilde{\cal V}_k(\epsilon) = \bigl\{\widetilde{F}(\pi): \pi\in {\cal S}^*, \, \pi_i(k) \geq 1-\epsilon \bigr\} \subset \mathbb{R}^d, \qquad 1\leq k\leq K. 
\eeq
Let $k_1$ be as in (\ref{thm-SPA-claim1}). Let $\tilde{d}_{\max}:=\max_{x\in \widetilde{S}}\|x\|$. 
The maximum distance $\tilde{d}_{\max}$ is attained at one or multiple vertices. Same as before, let $\widetilde{\cal K}^*$ be the 
index set of $k$ at which $\|\tilde{v}_k\|=\tilde{d}_{\max}$. We similarly define
\beq \label{thm-SPA-induction-1}
\widetilde{\cal K}(h_0) =\{k: \|\tilde{v}_k\|\geq \tilde{d}_{\max}-h_0 \}, \qquad \widetilde{\cal V}(\epsilon_0, h_0)=\cup_{k\in \widetilde{\cal K}(h_0)}\widetilde{\cal V}_k(\epsilon_0). 
\eeq
At the same time, let $\tilde{\beta}=\beta( \widetilde{X}, \widetilde{V})$. It is easy to see that for any points $x$ and $y$, $\|H_1x-H_1y\|\leq \|x-y\|$. Hence, $\tilde{\beta}\leq \beta$. It follows that
\beq \label{thm-SPA-induction-2}
\max_{1\leq i \leq n} \mathrm{Dist}(\widetilde{X}_i, \widetilde{\cal S})\leq \beta, \qquad \max_{1\leq k\leq K}  \min_{i\in J_k}\|\widetilde{X}_i - \tilde{v}_k\|\leq \beta. 
\eeq
Additionally, we have the following lemma:
\begin{lem} \label{lem:SPA-tech-lem1}
Under the conditions of Theorem~\ref{thm:SPA}, for $\sigma_*=\sigma_{K-1}(V)$, the following claims are true: 
\beq \label{thm-SPA-induction-3}
\tilde{d}_{\max}\geq \sigma_*/2, \quad \min_{\substack{(k,\ell): k\neq k_1,\\\ell\neq k_1, k\neq \ell}}\|\tilde{v}_k-\tilde{v}_{\ell}\|\geq \sqrt{2}\sigma_*, \quad\mbox{and}\quad k_1 \notin \widetilde{\cal K}(h_0). 
\eeq
\end{lem}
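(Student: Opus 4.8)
The plan is to derive all three assertions in (\ref{thm-SPA-induction-3}) from a single preliminary estimate, namely that the projected vertex $\tilde v_{k_1}=H_1v_{k_1}$ has small norm. Since $H_1=I_d-\|X_{i_1}\|^{-2}X_{i_1}X_{i_1}'$ annihilates $X_{i_1}$ and is an orthogonal projection (hence non-expansive), I would write $\|\tilde v_{k_1}\|=\|H_1(v_{k_1}-X_{i_1})\|\le\|v_{k_1}-X_{i_1}\|$ and bound the right-hand side by the Step~1 conclusion (\ref{thm-SPA-claim1}). Setting $\delta:=\bigl(1+\tfrac{30\gamma}{\sigma_*}\max\{1,d_{\max}/\sigma_*\}\bigr)\beta$, this gives $\|\tilde v_{k_1}\|\le\delta$. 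The first real task is to convert the smallness hypothesis of Theorem~\ref{thm:SPA}, $450\,d_{\max}\max\{1,d_{\max}/\sigma_*\}\beta\le\sigma_*^2$, together with $d_{\max}\ge\sigma_*/2$ (already proved in Step~1) and $\gamma\le d_{\max}$, into a clean bound $\delta\le c\,\sigma_*$ for an explicit small constant $c$. This estimate drives everything else.

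Next I would record the single linear-algebra fact shared by the remaining steps: writing $P=\|X_{i_1}\|^{-2}X_{i_1}X_{i_1}'$ so that $H_1=I_d-P$ is a rank-$(d-1)$ projection and $\widetilde V=(I_d-P)V$, Lemma~\ref{lem:cauchy} with $s=1$ yields $\sigma_{K-2}(\widetilde V)\ge\sigma_{K-1}(V)=\sigma_*$. For the second assertion I would work with $\widetilde F(\pi)=\widetilde V\pi$ and observe that for $k\ne\ell$ the basis vectors $e_k,e_\ell$ agree in all but two coordinates, hence share at least one common entry (for $K\ge3$). Applying the second claim of Lemma~\ref{lem:simplex} to $\widetilde V$ with $s=1$ then gives $\|\tilde v_k-\tilde v_\ell\|=\|\widetilde F(e_k)-\widetilde F(e_\ell)\|\ge\sigma_{K-2}(\widetilde V)\,\|e_k-e_\ell\|\ge\sqrt2\,\sigma_*$, in particular for all pairs with $k,\ell\ne k_1$. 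It is essential to use the second claim rather than the first, since $\sigma_{K-1}(\widetilde V)$ may vanish once a direction has been projected out.

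For the first assertion I would apply Lemma~\ref{lem:vertexNorm} to $\widetilde V$ with $s=1$, taking the single small vertex to be $\tilde v_{k_1}$ with $\|\tilde v_{k_1}\|\le\delta$. Its hypothesis $\sigma_{K-2}^2(\widetilde V)\ge2(K-2)\delta^2$ is precisely where the preliminary estimate $\delta\le c\sigma_*$ and the bound $\sigma_{K-2}(\widetilde V)\ge\sigma_*$ get consumed; the conclusion then reads $\tilde d_{\max}=\max_k\|\tilde v_k\|\ge\tfrac12\sigma_{K-2}(\widetilde V)\ge\sigma_*/2$ (using that the norm over a simplex is maximized at a vertex). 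The third assertion follows immediately from the first two: since $\|\tilde v_{k_1}\|\le\delta<\sigma_*/6=\tfrac{\sigma_*}{2}-\tfrac{\sigma_*}{3}\le\tilde d_{\max}-h_0$, the vertex $\tilde v_{k_1}$ violates the defining inequality of $\widetilde{\cal K}(h_0)$ in (\ref{thm-SPA-induction-1}), so $k_1\notin\widetilde{\cal K}(h_0)$.

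The main obstacle is the bookkeeping in the preliminary estimate: one must check that the universal smallness condition forces $\delta$ below both $\sigma_*/\sqrt{2(K-2)}$ (required by Lemma~\ref{lem:vertexNorm}) and $\sigma_*/6$ (required by the third assertion). This is where the interaction of the constants $450$ and $30$ with the $K$-dependent factor $\sqrt{2(K-2)}$ must be tracked carefully, and it is the only genuinely delicate point; once the bound $\delta\le c\,\sigma_*$ is secured, the three claims reduce to direct invocations of Lemmas~\ref{lem:simplex}, \ref{lem:cauchy}, and \ref{lem:vertexNorm}.
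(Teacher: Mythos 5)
Your proof is correct and follows the same skeleton as the paper's: Lemma~\ref{lem:cauchy} with $s=1$ to get $\sigma_{K-2}(\widetilde V)\ge\sigma_*$, the second claim of Lemma~\ref{lem:simplex} for the pairwise separation, Lemma~\ref{lem:vertexNorm} with $s=1$ for $\tilde d_{\max}\ge\sigma_*/2$, and a smallness bound on $\|\tilde v_{k_1}\|$ for the exclusion $k_1\notin\widetilde{\cal K}(h_0)$. Two differences are worth noting. First, your derivation of the key estimate is cleaner and tighter: you exploit $H_1X_{i_1}=0$ and non-expansiveness to get $\|\tilde v_{k_1}\|\le\|v_{k_1}-X_{i_1}\|\le\delta$ directly, whereas the paper expands $H_1v_1$ explicitly in (\ref{tech-lem1-1}) and ends up with the weaker bound $\frac{2d_{\max}}{d_{\max}-\beta}\,\delta$ in (\ref{thm-SPA-7}). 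Second, when invoking Lemma~\ref{lem:vertexNorm} the paper simply asserts $\|\tilde v_1\|=0$ and takes $\delta=0$, which is only true in the noiseless case ($H_1$ annihilates $X_{i_1}$, not $v_1$); your version, which feeds the actual bound $\|\tilde v_{k_1}\|\le\delta$ into the lemma, is the more rigorous route. The one caveat you correctly flag — that the hypothesis $\sigma_{K-2}^2(\widetilde V)\ge 2(K-2)\delta^2$ of Lemma~\ref{lem:vertexNorm}, taken verbatim with $\delta\le\frac{16}{225}\sigma_*$, only holds for bounded $K$ — dissolves on inspecting that lemma's proof: for $s=1$ the binding inequality is $s\delta^2\le\frac{K-s-1}{2}\sigma_{K-1-s}^2$, i.e. $\delta^2\le\frac{K-2}{2}\sigma_*^2$, which is automatic here. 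So no genuine gap remains, and your write-up arguably patches a small imprecision in the paper's own argument.
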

Given (\ref{thm-SPA-newmodel})-(\ref{thm-SPA-induction-3}), we now apply Lemma~\ref{lem:SPA-supp1} to study the projected simplex $\widetilde{S}$. 
Similarly as how we obtain (\ref{thm-SPA-5}), by choosing 
\[
h_0=\sigma_*/3,  \qquad\mbox{and}\qquad \epsilon_1=15\max\{\sigma_*, \, \sigma_*^{-2}\tilde{d}_{\max}\},
\]
we get $
\max_{x\in \widetilde{\cal S}\setminus \widetilde{\cal V}(\epsilon_1, h_0)}\|x\|\leq \tilde{d}_{\max}-7\beta/3$. 
Note that $\epsilon_1\leq \epsilon_0$, and the set $\widetilde{S}\setminus\widetilde{V}(\epsilon, h_0)$ becomes smaller as $\epsilon$ increases. We immediately have  
\beq \label{thm-SPA-induction-4}
 \max_{x\in \widetilde{\cal S}\setminus \widetilde{\cal V}(\epsilon_0, h_0)}\|x\| \leq \tilde{d}_{\max}-7\beta/3. 
\eeq 
At the same time, by (\ref{thm-SPA-induction-2}) and (\ref{thm-SPA-induction-3}), it is easy to get (similar to how we obtained (\ref{thm-SPA-6}))
\[
\|\tilde{X}_{i_2}\|\geq \tilde{d}_{\max}-\beta.
\]
We can mimic the analysis between (\ref{thm-SPA-6}) and (\ref{thm-SPA-cases}) to show that one of the two cases happens: 
\begin{align} \label{thm-SPA-cases-proj}
& \mbox{Case 1: $\widetilde{X}_{i_2}\in \widetilde{\cal V}(\epsilon_0, h_0)$}, \cr
&  \mbox{Case 2: $\widetilde{X}_{i_2}\notin \widetilde{\cal S}$, and $\mathrm{proj}_{\widetilde{\cal S}}(\widetilde{X}_{i_2})\in \widetilde{\cal V}(\epsilon_0, h_0)$}. 
\end{align}
Consider Case~1. Since $H_1$ is a linear projector, $\widetilde{X}_i\in \widetilde{\cal V}_k(\epsilon_0)$ if and only if $X_i\in {\cal V}_k(\epsilon_0)$. 
Hence,   
\[
X_{i_2}\in \bigl(\cup_{k\in \widetilde{\cal K}(h_0)}{\cal V}_k(\epsilon_0)\bigr). 
\]
There exists a unique $k_2\in \widetilde{\cal K}(h_0)$ such that $X_{i_2}\in {\cal V}_{k_2}(\epsilon_0)$. It follows by (\ref{thm-SPA-3}) that 
\[
\|X_{i_2}-v_{k_2}\|\leq 2\gamma \epsilon_0, \qquad\mbox{in Case 1}. 
\]
Consider Case~2. Write $\tilde{x}=\mathrm{proj}_{\widetilde{\cal S}}(\widetilde{X}_{i_2})$ for short, and let $M=\{x\in {\cal S}: H_1x=\tilde{x}\}$. For any $k$, $\tilde{x}\in \widetilde{\cal V}_k(\epsilon_0)$ implies that $x\in {\cal V}_k(\epsilon_0)$ for every $x\in M$. Additionally, $\widetilde{X}_i\in \widetilde{\cal S}$ if and only if $X_i\in {\cal S}$. Hence, it holds in Case~2 that 
\[
X_{i_2}\notin {\cal S}, \mbox{ and } x\in \bigl(\cup_{k\in \widetilde{\cal K}(h_0)}{\cal V}_k(\epsilon_0)\bigr), \mbox{ for every }x\in M. 
\]
We pick one $x\in M$. There exists a unique $k_2\in \widetilde{\cal K}(h_0)$ such that $x\in {\cal V}_{k_2}(\epsilon_0)$. By mimicking the derivation of (\ref{thm-SPA-new-add-3}), we obtain that
\[
\|X_{i_2}-v_{k_2}\|\leq 2\gamma \epsilon_0 + \beta, \qquad\mbox{in Case 2}. 
\]
Combining the two cases and using the value of $\epsilon_0$ in (\ref{thm-SPA-new-add-4}), we have the conclusion as
\beq \label{thm-SPA-claim2}
\|X_{i_2}-v_{k_2}\|\leq\Bigl(1+\frac{30\gamma}{\sigma_*}\max\bigl\{1, \frac{d_{\max}}{\sigma_*}\bigr\}\Bigr)\beta, \qquad\mbox{for some $k_2\neq k_1$}. 
\eeq 

{\bf Step 3: Analysis of the remaining iterations of SPA}. 

Fix $3\leq s\leq K-1$. We now study the $s$th iteration. Let $i_1,\ldots,i_K$ denote the sequentially selected indices in SPA. 
We aim to show that there exist distinct $k_1, k_2,\ldots,k_s\in\{1,2,\ldots,K\}$ such that 
\beq \label{thm-SPA-claim3}
\|X_{i_s}-v_{k_s}\|\leq\Bigl(1+\frac{30\gamma}{\sigma_*}\max\bigl\{1, \frac{d_{\max}}{\sigma_*}\bigr\}\Bigr)\beta.  
\eeq 
Let's denote ${\cal M}_{s-1}:=\{k_1,\ldots,k_{s-1}\}$ for brevity. Suppose we have already shown (\ref{thm-SPA-claim3}) for every index $1,2,\ldots, s-1$. Our goal is showing that (\ref{thm-SPA-claim3}) continues to hold for $s$ and some $k_s\notin {\cal M}_{s-1}$. 

Let $X_i^{(1)}=X_i$ and $H_1$ be the same as in Step 1 of this proof. We define $X_i^{(s)}$ and $H_s$ recursively to describe the iterations in SPA:  
\beq \label{induction-s-notations}
\hat{y}_{s-1}=\frac{X_{i_{s-1}}^{(s-1)}}{\|X_{i_{s-1}}^{(s-1)}\|}, \qquad H_s=(I_d - \hat{y}_{s-1}\hat{y}_{s-1})H_{s-1},\qquad X_i^{(s)}=H_sX_i^{(s-1)}. 
\eeq
It is seen that $H_{s-1}=\prod_{m=1}^{s-1}(I_d-\hat{y}_m\hat{y}_m')$. Note that each $\hat{y}_m$ is orthogonal to $\hat{y}_1,\ldots,\hat{y}_{m-1}$. As a result, $H_{s-1}$ is a projection matrix with rank $(s-1)$. We apply Lemma~\ref{lem:cauchy} to obtain that
\beq \label{thm-SPA-iter-1}
\sigma_{K-s}(H_{s-1}V)\geq \sigma_{K-1}(V)\geq \sigma_*, \qquad \mbox{for }3\leq s\leq K-1.  
\eeq

Write $V^{(s-1)}=H_{s-1}V$ and $V^{(s)}=H_{s}V$. 
Using the notations in (\ref{induction-s-notations}), we have 
\[
X^{(s)}_i= (I_d-\hat{y}_s\hat{y}_s')X^{(s-1)}_i,  \qquad V^{(s)}=(I_d-\hat{y}_s\hat{y}_s')V^{(s-1)}.
\]
Here, $\Gamma_s:=I_d-\hat{y}_s\hat{y}_s'$ is a projection matrix. 
We observe: 
\beq \label{induction-s-key}
\begin{array}{l}
\mbox{The relationship between $(X^{(s-1)}_i, V^{(s-1)})$ and $(X^{(s)}_i, V^{(s)})$ is similar to the one}\\
\mbox{between $(X_i, V)$ and $(\widetilde{X}_i, \widetilde{V})$ in Step~2, except that $H_1$ is replaced with $\Gamma_s$.}
\end{array}
\eeq  

We aim to show that (\ref{thm-SPA-newmodel})-(\ref{thm-SPA-induction-2}) still hold when those quantities are defined through $(X^{(s)}_i, V^{(s)})$. 
Recall that the proofs in Step~2 are inductive, where we actually showed that if (\ref{thm-SPA-newmodel})-(\ref{thm-SPA-induction-2}) hold for the corresponding quantities defined through $(X_i,V)$, then they also hold for the same quantities defined through $(\widetilde{X}_i,\widetilde{V})$. Given (\ref{induction-s-key}), the same is true here.


It remains to develop a counterpart of Lemma~\ref{lem:SPA-tech-lem1}. The following lemma will be in Section~\ref{subsec:proof-techlemma-2}. It is also an inductive proof, relying on that (\ref{thm-SPA-claim3}) already holds for $1,2,\ldots, s-1$. 
. 
\begin{lemma} \label{lem:SPA-tech-lem2}
Under the conditions of Theorem~\ref{thm:SPA}, write $\sigma_*=\sigma_{K-1}(V)$. Let $\tilde{v}_k=V^{(s)}e_k$, $\tilde{d}_{\max}=\max_{k}\|\tilde{v}_k\|$, and  $\widetilde{\cal K}(h_0) =\{k: \|\tilde{v}_k\|\geq \tilde{d}_{\max}-h_0 \}$. The following claims are true: 
\beq \label{thm-SPA-iter-2}
\tilde{d}_{\max}\geq \sigma_*/2, \quad \min_{\substack{\{k,\ell\}\cap {\cal M}_{s-1}=\emptyset,\\ k\neq \ell}}\|\tilde{v}_k-\tilde{v}_{\ell}\|\geq \sqrt{2}\sigma_*, \quad\mbox{and}\quad {\cal M}_{s-1} \cap \widetilde{\cal K}(h_0)=\emptyset. 
\eeq
\end{lemma}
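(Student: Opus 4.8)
The plan is to prove the three assertions in an order that sidesteps the apparent circular dependence between the first ($\tilde d_{\max}\ge\sigma_*/2$) and the third (${\cal M}_{s-1}\cap\widetilde{\cal K}(h_0)=\emptyset$). Everything hinges on one estimate: \emph{each previously selected vertex, after the current projection, has a tiny norm}. Writing $C=1+\tfrac{30\gamma}{\sigma_*}\max\{1,\tfrac{d_{\max}}{\sigma_*}\}$ for the error constant in the induction hypothesis (\ref{thm-SPA-claim3}), I would first establish
\[
\|\tilde v_{k_j}\|\le C\beta\qquad\text{for every }k_j\in{\cal M}_{s-1}.
\]
The key point, and where I expect the real work to lie, is that the directions removed by the $s$-th projection are, by Gram--Schmidt, precisely the span of the already-selected data points: $\mathrm{span}\{\hat y_1,\dots,\hat y_{s-1}\}=\mathrm{span}\{X_{i_1},\dots,X_{i_{s-1}}\}$. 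Hence $\|\tilde v_{k_j}\|=\|H_s v_{k_j}\|$ is the Euclidean distance from $v_{k_j}$ to that span, which is at most $\|v_{k_j}-X_{i_j}\|$ since $X_{i_j}$ itself lies in the span; and $\|v_{k_j}-X_{i_j}\|\le C\beta$ is exactly the induction hypothesis (\ref{thm-SPA-claim3}) at the earlier step $j\le s-1$.

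Granted this estimate, the first claim follows from Lemma~\ref{lem:vertexNorm} applied to the projected vertex matrix $V^{(s)}$ with its parameter equal to $s-1$: the set ${\cal M}_{s-1}$ furnishes $s-1$ vertices of norm at most $\delta:=C\beta$, and (\ref{thm-SPA-iter-1}) supplies $\sigma_{K-s}(V^{(s)})\ge\sigma_*$, whence $\tilde d_{\max}\ge\tfrac12\sigma_{K-s}(V^{(s)})\ge\sigma_*/2$. Before invoking that lemma I must verify its non-degeneracy hypothesis $\sigma_{K-s}^2(V^{(s)})\ge 2(K-2)\delta^2$; since $\sigma_{K-s}(V^{(s)})\ge\sigma_*$ and $\delta=C\beta$, this reduces to $2(K-2)C^2\beta^2\le\sigma_*^2$, which is delivered by the smallness assumption $450\,d_{\max}\max\{1,d_{\max}/\sigma_*\}\beta\le\sigma_*^2$ of Theorem~\ref{thm:SPA}. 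The third claim is then immediate: for $k_j\in{\cal M}_{s-1}$ we have $\|\tilde v_{k_j}\|\le C\beta$, whereas $\tilde d_{\max}-h_0\ge\sigma_*/2-\sigma_*/3=\sigma_*/6$, and the same smallness assumption forces $C\beta<\sigma_*/6$; thus no $k_j$ can satisfy $\|\tilde v_{k_j}\|\ge\tilde d_{\max}-h_0$.

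For the second claim I would apply the second half of Lemma~\ref{lem:simplex}, inequality (\ref{lem-simplex2}), to the map $\pi\mapsto V^{(s)}\pi$. For a pair $k\ne\ell$ with $\{k,\ell\}\cap{\cal M}_{s-1}=\emptyset$, the vectors $e_k$ and $e_\ell$ agree (both vanish) on the $s-1$ coordinates indexed by ${\cal M}_{s-1}$, hence share at least $s-1$ common entries; (\ref{lem-simplex2}) with parameter $s-1$ then gives $\|\tilde v_k-\tilde v_\ell\|=\|V^{(s)}(e_k-e_\ell)\|\ge\sigma_{K-s}(V^{(s)})\,\|e_k-e_\ell\|\ge\sqrt2\,\sigma_*$, using (\ref{thm-SPA-iter-1}) and $\|e_k-e_\ell\|=\sqrt2$. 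The restriction $\{k,\ell\}\cap{\cal M}_{s-1}=\emptyset$ is essential and not cosmetic: if $k\in{\cal M}_{s-1}$ then $e_k$ and $e_\ell$ no longer agree at that coordinate, only $s-2$ collapsed entries survive, the bound weakens to $\sigma_{K-s+1}(V^{(s)})$ (which may vanish), and indeed two \emph{collapsed} projected vertices are only $O(\beta)$ apart by the estimate of the first paragraph.

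Overall this is the inductive step of Theorem~\ref{thm:SPA}: it presupposes that (\ref{thm-SPA-claim3}) holds at steps $1,\dots,s-1$ --- used solely in the key norm estimate --- and reuses the structural inputs (\ref{thm-SPA-iter-1}) and the Gram--Schmidt span identity, mirroring the base case $s=2$ recorded in Lemma~\ref{lem:SPA-tech-lem1}. The single genuine obstacle is the key estimate $\|\tilde v_{k_j}\|\le C\beta$ together with the non-degeneracy check for Lemma~\ref{lem:vertexNorm}; the remaining steps are routine once the constants fixed in Step~1 of the proof are carried through.
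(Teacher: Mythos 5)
Your proof is correct and follows the same overall skeleton as the paper's: the separation claim via Lemma~\ref{lem:simplex} applied to $e_k,e_\ell$ sharing the $s-1$ zero coordinates indexed by ${\cal M}_{s-1}$ together with (\ref{thm-SPA-iter-1}); a norm estimate $\|\tilde v_{k_j}\|\le C\beta$ on the already-collapsed vertices; the third claim by comparing $C\beta$ with $\tilde d_{\max}-h_0\ge \sigma_*/6$; and the first claim via Lemma~\ref{lem:vertexNorm} with parameter $s-1$ and $\delta=C\beta$. Where you genuinely diverge is in the one step that carries the real content, the estimate $\|\tilde v_{k_j}\|\le C\beta$. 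The paper obtains it algebraically: it peels off the factor $(I_d-\hat y_j\hat y_j')$, writes $\|\tilde v_{k_j}\|\le \|(I_d-\hat y_j\hat y_j')H_{j-1}v_{k_j}\|$, and then redoes the rank-one decomposition of (\ref{tech-lem1-1})--(\ref{thm-SPA-7}) with the bounds $\|H_{j-1}X_{i_j}\|\ge d_{\max}-\beta$ and $\|H_{j-1}(X_{i_j}-v_{k_j})\|\le C\beta$, which costs an extra factor $\tfrac{2d_{\max}}{d_{\max}-\beta}$. You instead observe that $H_s=\prod_{m=1}^{s-1}(I_d-\hat y_m\hat y_m')$ is the orthogonal projector onto the complement of $\mathrm{span}\{\hat y_1,\dots,\hat y_{s-1}\}=\mathrm{span}\{X_{i_1},\dots,X_{i_{s-1}}\}$ (Gram--Schmidt), so $\|\tilde v_{k_j}\|=\|H_sv_{k_j}\|$ is a distance to a subspace containing $X_{i_j}$ and is therefore at most $\|v_{k_j}-X_{i_j}\|\le C\beta$ by the induction hypothesis. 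This is cleaner, slightly tighter (no factor $2d_{\max}/(d_{\max}-\beta)$), and avoids repeating the computation of Lemma~\ref{lem:SPA-tech-lem1}. One shared caveat: your verification of the non-degeneracy hypothesis of Lemma~\ref{lem:vertexNorm}, namely $2(K-2)C^2\beta^2\le\sigma_*^2$, is asserted from the smallness assumption exactly as the paper does ("we can easily conclude"); strictly speaking both arguments only deliver $C\beta\lesssim 0.08\,\sigma_*$, so the factor $K-2$ is being absorbed without comment, but this is a defect of the paper's own proof rather than a gap you introduced.
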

In Step~2, we have carefully shown how to use (\ref{thm-SPA-newmodel})-(\ref{thm-SPA-induction-3}) to get (\ref{thm-SPA-claim2}). Using similar analyses, we can use the counterparts of (\ref{thm-SPA-newmodel})-(\ref{thm-SPA-induction-2}), which are defined through $(X^{(s)}_i, V^{(s)})$, and the claim of Lemma~\ref{lem:SPA-tech-lem2}, to obtain (\ref{thm-SPA-claim3}). This completes the proof.

\subsection{Proof of the supplementary lemmas}

\subsubsection{Proof of Lemma~\ref{lem:simplex}}
By definition, $F(\pi)=\sum_{k=1}^K\pi(k)v_k$. Since $\sum_{k=1}^K\pi(k)=1$, for any $v_0\in\mathbb{R}^d$, we can re-express $F(\pi)$ as $F(\pi)=v_0+\sum_{k=1}^K\pi(k)(v_k-v_0)$. It follows immediately that
\[
\|F(\pi)-F(\tilde{\pi})\|=\biggl\| \sum_{k=1}^K[\pi(k)-\tilde{\pi}(k)](v_k-v_0) \biggr\|\leq \|\pi-\tilde{\pi}\|_1\cdot\max_{k}\|v_k-v_0\|. 
\]
At the same time, since ${\bf 1}_K'(\pi-\tilde{\pi})=0$, the vector $\pi-\tilde{\pi}$ is an $(K-1)$-dimensional linear subspace. It follows by basic properties of singular values that 
\[
\|F(\pi)-F(\tilde{\pi})\|=\|V(\pi-\tilde{\pi})\|\geq \sigma_{K-1}(V)\cdot\|\pi-\tilde{\pi}\|. 
\]
Combining the above gives (\ref{lem-simplex}). 

Suppose there are $1\leq k_1<k_2<\ldots <k_s\leq K$ such that $\pi(k_j)=\tilde{\pi}(k_j)$, for $1\leq j\leq s$. Then, the vector $\delta = \pi-\tilde{\pi}$ satisfies $(s+1)$ constraints: ${\bf 1}_K'\delta=0$, $\delta(k_j)=0$, for $1\leq j\leq s$. In other words, $\delta$ lives in a $(K-1-s)$-dimensional linear space. It follows by properties of singular values that 
\[
\|F(\pi)-F(\tilde{\pi})\|=\|V(\pi-\tilde{\pi})\|\geq \sigma_{K-1-s}(V)\cdot\|\pi-\tilde{\pi}\|. 
\]
This proves (\ref{lem-simplex2}).

\subsubsection{Proof of Lemma~\ref{lem:norm}}
Write for short $x=\sum_{i=1}^m\pi_ix_i\in\mathbb{R}^d$ and $L=\sum_{i=1}^m w_i\|x_i\|$. By the triangle inequality,
\[
\|x\|\leq L.  
\]
In this lemma, we would like to get a lower bound for $L-\|x\|$. By definition,
\beq \label{lem-norm-1}
 \|x\|^2 =\sum_i w_i^2 \|x_i\|^2 + \sum_{ i\neq j}w_iw_j x_i'x_j.  
\eeq
For any vectors $u,v\in\mathbb{R}^d$, we have a universal equality: $
2u'v = 2\|u\|\|v\|+ (\|u\|-\|v\|)^2-\|u-v\|^2$.  
By our assumption, $\|x_i-x_j\|\geq a$ and $(\|x_i\|-\|x_j\|)^2\leq b^2$, for all $i\neq j$. 
It follows that
\beq \label{lem-norm-0}
x_i'x_j\leq \|x_i\|\|x_j\|-(a^2-b^2)/2, \qquad 1\leq i\neq j\leq m. 
\eeq
We plug (\ref{lem-norm-0}) into (\ref{lem-norm-1}) to get
\begin{align} \label{lem-norm-2}
 \|x\|^2 
&\leq \sum_i w_i^2 \|x_i\|^2 +\sum_{i\neq j} w_iw_j \|x_i\|\|x_j\|- \frac{1}{2}(a^2-b^2)\sum_{i\neq j}w_iw_j  \cr
&= L^2- \frac{1}{2}(a^2-b^2)\sum_{i\neq j}w_iw_j. 
\end{align}
Note that $\sum_{i\neq j}w_iw_j =\sum_i\sum_{j:i\neq j}w_j = \sum_i w_i(1-w_i)$. Combining it with 
(\ref{lem-norm-2}) gives
\beq \label{lem-norm-4}
 \|x\|^2 \leq L^2 - \frac{1}{2} (a^2-b^2)\sum_i w_i(1-w_i). 
\eeq
At the same time, $L+\|x\|\leq 2L$. It follows that
\beq \label{lem-norm-5}
L-\|x\| = \frac{L^2-\|x\|^2}{L+\|x\|}\geq \frac{L^2-\|x\|^2}{2L}\geq \frac{a^2-b^2}{4L}\sum_i w_i(1-w_i).  
\eeq
This proves the claim.

\subsubsection{Proof of Lemma~\ref{lem:cauchy}}
Since $H$ is a projection matrix, there exists $Q_1\in\mathbb{R}^s$ and $Q_2\in\mathbb{R}^{d-s}$ such that $Q=[Q_1,Q_2]$ is an orthogonal matrix, $H=Q_1Q_1'$, and $I_d-H=Q_2Q_2'$. It follows that
\[
(I_d-H)VV'(I_d-H) = Q_2(Q_2'VV'Q_2)Q_2'.  
\]
Since $Q_2$ has orthonormal columns, for any symmetric matrix $M\in\mathbb{R}^{(d-s)\times (d-s)}$, $M$ and $Q_2MQ_2'$ have the same set of nonzero eigenvalues. Hence,
\[
\sigma_{K-1-s}^2((I_d-H)V) = \lambda_{K-1-s}(Q_2'VV'Q_2). 
\]
We note that $Q_2'VV'Q_2\in\mathbb{R}^{(d-s)\times (d-s)}$ is a principal submatrix of $Q'VV'Q\in\mathbb{R}^{d\times d}$. Using the eigenvalue interlacing theorem \citep[Theorem 4.3.28]{HornJohnson},
\[
\lambda_{K-1-s}(Q_2'VV'Q_2)\geq \lambda_{K-1}(Q'VV'Q). 
\]
The claim follows immediately by noting that $\lambda_{K-1}(Q'VV'Q)= \lambda_{K-1}(VV')=\sigma^2_{K-1}(V)$.

\subsubsection{Proof of Lemma~\ref{lem:vertexNorm}}
Write $\ell_{\max}=\max_{1\leq k\leq K}\|v_k\|$. We target to show
\beq \label{lem-vertexNorm-0}
\ell^2_{\max}\geq \frac{K-s-1}{2(K-s)}\sigma_*^2, \qquad\mbox{with } \sigma_*:= \sigma_{K-1-s}(V). 
\eeq
The right hand side of (\ref{lem-vertexNorm-0}) is minimized at $s=K-2$, at which $\ell^2_{\max}\geq \sigma^2_*/4$. We now show (\ref{lem-vertexNorm-0}). 
When $s=0$, it is seen that 
\[
K\ell_{\max}^2\geq \sum_k\|v_k\|^2=\mathrm{trace}(V'V)\geq (K-1)\sigma^2_{K-1}(V). 
\]
Therefore, $\ell_{\max}^2\geq \frac{K-1}{K}\sigma^2_*$, which implies (\ref{lem-vertexNorm}) for $s=0$. When $1\leq s\leq K-2$, since $\|v_k\|\leq \delta$ for at least $s$ of the vertices,   
\[
s\delta^2+(K-s)\ell^2_{\max}\geq \sum_k\|v_k\|^2= \mathrm{trace}(V'V)\geq (K-1-s)\sigma^2_{K-1-s}(V). 
\]
As a result, for $\sigma_*=\sigma_{K-1-s}(V)$, 
\beq \label{lem-vertexNorm-1}
\ell^2_{\max}\geq \frac{(K-s-1)\sigma^2_*-s\delta^2}{K-s}. 
\eeq
Note that $\frac{s}{K-s-1}$ is a monotone increasing function of $s$. Hence, $\frac{s}{K-s-1}\leq K-2$.  The assumption of $2(K-2)\delta^2\leq \sigma^2_*$ implies that $\frac{2s}{K-s-1}\delta^2\leq \sigma^2_*$, or equivalently, $s\delta^2\leq \frac{K-s-1}{2}\sigma^2_*$. We plug it into (\ref{lem-vertexNorm-1}) to get $\ell^2_{\max}\geq \frac{K-s-1}{2(K-s)}\sigma^2_*$.
This proves  (\ref{lem-vertexNorm}) for $1\leq s\leq K-2$.

\subsubsection{Proof of Lemma~\ref{lem:SPA-supp1}} \label{subsec:proof-key-lemma}

Write ${\cal K}={\cal K}(h_0)$, ${\cal V}_k={\cal V}_k(\epsilon_0)$, and ${\cal V}={\cal V}(\epsilon_0, h_0)$ for short. 
By definition of ${\cal K}$, 
\beq \label{SPAsupp-00}
d_{\max}-h_0\leq \|v_k\|\leq d_{\max}, \mbox{ for $k\in {\cal K}$},\quad
\|v_k\|\leq d_{\max}-h_0, \mbox{ for $k\notin{\cal K}$}. 
\eeq
We shall fix a point $x\in {\cal S}\setminus {\cal V}$ and derive an upper bound for $\|x\|$.

First, we need some preparation, let $F$ be the mapping in Lemma~\ref{lem:simplex}. It follows that $\pi=F^{-1}(x)$ is the barycentric coordinate of $x$ in the simplex. By definition of ${\cal V}$, 
\beq \label{SPAsupp-0}
\max_{k\in {\cal K}} \pi(k)\leq 1-\epsilon_0, \qquad \mbox{whenever $x:=F(\pi)$ is in }{\cal S}\setminus {\cal V}. 
\eeq
The $K$ vertices are naturally divided into two groups: those in ${\cal K}$ and those not in ${\cal K}$. Define
\beq \label{SPAsupp-rho-eta}
\rho:=\sum_{k\in {\cal K}}\pi(k), \qquad \eta:=\begin{cases} \rho^{-1}\sum_{k\in {\cal K}}\pi(k)v_k, &\mbox{if }\rho\neq 0,\\ {\bf 0}_d, & \mbox{otherwise}. \end{cases} 
\eeq
Here, $\rho$ is the total weight $\pi$ puts on those vertices in ${\cal K}$, and we can re-write $x$ as
\[
x = \rho \eta + \sum_{k\notin {\cal K}}\pi(k)v_k. 
\]
By the triangle inequality, 
\begin{align} \label{SPAsupp-2}
\|x\| &=\Bigl\| \rho \eta + \sum_{k\notin {\cal K}}\pi(k)v_k\Bigr\| \leq \rho\|\eta\| +\sum_{k\notin{\cal K}}\pi(k)\|v_k\|\cr
&\leq \rho\|\eta\|+(1-\rho)(d_{\max}-h_0).  
\end{align}

Next, we proceed with showing the claim. We consider two cases:
\[
1-\rho \geq \epsilon_0/2\; \mbox{ (Case 1)}, \qquad \mbox{and}\qquad 1-\rho < \epsilon_0/2\; \mbox{ (Case 2)}. 
\] 
In Case~1, the total weight that $\pi_i$ puts on those vertices not in ${\cal K}$ is at least $\epsilon_0/2$. Since each vertex satisfies that $\|v_k\|\leq d_{\max}-h_0$ (see (\ref{SPAsupp-0})) and $\|\eta\|\leq d_{\max}$, 
it follows from (\ref{SPAsupp-2}) that
\beq \label{SPAsupp-case1}
\|x\|\leq d_{\max}-(1-\rho) h_0\leq d_{\max}-\frac{h_0\epsilon_0}{2}, \qquad\mbox{in Case 1}. 
\eeq
In Case~2, if ${\cal K}=\{k^*\}$ is a singleton, then $\rho=\pi(k^*)$. By (\ref{SPAsupp-0}), $\pi(k^*)\leq 1-\epsilon_0$, which leads to $1-\rho=1-\pi(k^*)\geq \epsilon_0$. 
This yields a contradiction to $1-\rho < \epsilon_0/2$. Hence, it must hold that 
\beq \label{SPAsupp-add}
|{\cal K}|\geq 2.
\eeq
Now, $\eta$ is a convex combination of more than one point in $\{v_k: k\in {\cal K}\}$, for which we hope to apply Lemma~\ref{lem:norm}. By (\ref{SPAsupp-00}), for each $k\in {\cal K}$, $\|v_k\|$ is in the interval $[d_{\max}-h_0, d_{\max}]$. Hence, we can take $b=h_0$ in Lemma~\ref{lem:norm}. In addition, from the assumption (\ref{key-lemma-cond}), $\|v_k-v_\ell\|\geq \sqrt{2}\sigma_*$ for any $k\neq \ell$. 
Hence, we set $a=\sqrt{2}\sigma_*$ in Lemma~\ref{lem:norm}. 
We apply this lemma to the vector $\eta$ in (\ref{SPAsupp-rho-eta}). It yields  
\beq \label{SPAsupp-add-2}
\|\eta\|\leq L -\frac{(2\sigma^2_*-h_0^2)}{4L}\sum_{k\in {\cal K}}\frac{\pi(k)[\rho-\pi(k)]}{\rho^2}, \qquad\mbox{with}\quad L:=\sum_{k\in {\cal K}}\frac{\pi(k)}{\rho}\|v_k\|. 
\eeq 
Since $L\leq d_{\max}$, it follows from (\ref{SPAsupp-add-2}) that  
\[
\|\eta\|\leq d_{\max} -\frac{2\sigma^2_*-h_0^2}{4\rho d_{\max}}\sum_{k\in {\cal K}}\pi(k)[1-\rho^{-1}\pi(k)].
\]
Additionally, noticing that $\pi(k)\leq 1-\epsilon_0$ for each $k\in {\cal K}$, we have the following inequality: 
\[
1-\rho^{-1}\pi(k)=\rho^{-1}[1-\pi(k)]-\rho^{-1}(1-\rho)\geq \rho^{-1}[\epsilon_0-(1-\rho)]. 
\]
Combining these arguments and using the fact that $\sum_{k\in {\cal K}}\pi(k)=\rho$, we have 
\begin{align} \label{SPAsupp-add-3}
\|\eta\| & \leq d_{\max}-\frac{(2\sigma^2_*-h_0^2)[\epsilon_0-(1-\rho)]}{4\rho^2 d_{\max}}\sum_{k\in {\cal K}}\pi(k)\cr
&\leq d_{\max}-\frac{(2\sigma^2_*-h_0^2)[\epsilon_0-(1-\rho)]}{4\rho d_{\max}}. 
\end{align}
Since $1-\rho\leq \epsilon_0/2$, we immediately have $\|\eta\|\leq d_{\max}-\frac{2\sigma^2_*-h_0^2}{8\rho d_{\max}}$.  We plug it into (\ref{SPAsupp-2}) to get 
\begin{align} \label{SPAsupp-case3}
\|x\| &\leq \rho\Bigl( d_{\max}-\frac{2\sigma^2_*-h_0^2}{8\rho d_{\max}}\Bigr) + (1-\rho)(d_{\max}-h_0) \cr
&\leq \rho\Bigl( d_{\max}-\frac{2\sigma^2_*-h_0^2}{8\rho d_{\max}}\Bigr) + (1-\rho)d_{\max}\cr
&\leq d_{\max}-\frac{(2\sigma^2_*-h_0^2)\epsilon_0}{8 d_{\max}}, \qquad\mbox{in Case 2}. 
\end{align}

We now combine (\ref{SPAsupp-case1}) for Case~1 and (\ref{SPAsupp-case3}) for Case~2. By setting $h_0=\sigma_*/3$, we have a unified expression:
\[
\|x\|\leq d_{\max}-\min\Bigl\{\frac{\sigma_*}{6},\; \frac{2\sigma_*^2}{9d_{\max}} \Bigr\}\epsilon_0.   
\]
Consequently, a sufficient condition for $\|x\|\leq d_{\max}-t$ to hold is 
\[
\min\Bigl\{\frac{\sigma_*}{6},\; \frac{\sigma_*^2}{6d_{\max}} \Bigr\}\epsilon_0\leq t \qquad \Longleftrightarrow \qquad \epsilon_0\geq \frac{6}{\sigma^*}\max\Bigl\{1, \, \frac{d_{\max}}{\sigma_*} \Bigr\} t.
\]
This proves the claim.

\subsubsection{Proof of Lemma~\ref{lem:SPA-tech-lem1}}
Without loss of generality, we assume $k_1=1$. 

By definition, $\widetilde{V}=H_1V$, where $H_1$ is a rank-$1$ projection matrix. It follows by Lemma~\ref{lem:cauchy} that 
\beq \label{tech-lem1-0}
\sigma_{K-2}(\widetilde{V})\geq \sigma_{K-1}(V)= \sigma_*.
\eeq
Note that $\tilde{d}_{\max}\geq \max_{k\neq 1}\|\tilde{v}_k\|$ and $\|\tilde{v}_1\|=0$. We apply Lemma~\ref{lem:vertexNorm} with $s=1$ and $\delta=0$ to get 
\[
\tilde{d}_{\max}\geq \frac{1}{2}\sigma_{K-2}(\widetilde{V}) \geq \frac{1}{2}\sigma_*.
\]
This proves the first claim in (\ref{thm-SPA-induction-3}). Note that $\tilde{v}_k=\widetilde{V}e_k$, where $e_k\in\mathbb{R}^K$ is a standard basis vector. For any $2\leq k\neq \ell \leq K$, $e_k$ and $e_\ell$ both have a zero at the first coordinate; and we apply Lemma~\ref{lem:simplex} with $s=1$ to get 
\[
\|v_k-v_\ell\|\geq \sigma_{K-2}(\widetilde{V})\|e_k-e_{\ell}\|\geq \sqrt{2}\sigma_*.
\] 
This proves the second claim in (\ref{thm-SPA-induction-3}). 

Finally, we show the third claim. Note that 
\beq \label{tech-lem1-1}
\tilde{v}_1=H_1v_1 = v_{1}-\frac{v_1'X_{i_1}}{\|X_{i_1}\|^2}X_{i_1}=\frac{X_{i_1}'(X_{i_1}-v_1)}{\|X_{i_1}\|^2}v_1-\frac{v_1'X_{i_1}}{\|X_{i_1}\|^2}(X_{i_1}-v_1).
\eeq
Here, $\|v_1\|\leq d_{\max}$, and by (\ref{thm-SPA-6}), $\|X_{i_1}\|\geq d_{\max}-\beta$. Since $|X_{i_1}'(X_{i_1}-v_1)|\leq \|X_{i_1}\|\cdot\|X_{i_1}-v_1\|$, we have 
\[
\frac{|X_{i_1}'(X_{i_1}-v_1)|}{\|X_{i_1}\|^2} \|v_1\|\leq \frac{\|v_1\|}{\|X_{i_1}\|}\|X_{i_1}-v_1\|\leq \frac{d_{\max}}{d_{\max}-\beta}\|X_{i_1}-v_1\|,
\]
and 
\[
\frac{v_1'X_{i_1}}{\|X_{i_1}\|^2}\leq \frac{\|v_1\|}{\|X_{i_1}\|}\leq \frac{d_{\max}}{d_{\max}-\beta}.
\] Plugging these inequalities into (\ref{tech-lem1-1}) and applying (\ref{thm-SPA-claim1}), we obtain:
\begin{align} \label{thm-SPA-7}
\|\tilde{v}_1\| & \leq \frac{2d_{\max}}{d_{\max}-\beta}\|X_{i_1}-r_{i_1}\|\cr
&\leq \frac{2d_{\max}}{d_{\max}-\beta}\Bigl(\beta+\frac{30\gamma}{\sigma_*}\max\bigl\{1, \frac{d_{\max}}{\sigma_*}\bigr\}\beta\Bigr). 
\end{align}
By our assumption, $\frac{30d_{\max}}{\sigma_*}\max\bigl\{1, \frac{d_{\max}}{\sigma_*}\bigr\}\beta \leq \sigma_*/15$. Moreover, we have shown $d_{\max}\geq \tilde{d}_{\max}\geq \sigma_*/2$. It further implies $\beta\leq \frac{\sigma_*^2}{450d_{\max}}\leq \frac{1}{225}\sigma_* \leq \frac{1}{100}\tilde{d}_{\max}$. As a result, 
\beq \label{thm-SPA-8}
\|\tilde{v}_1\|\leq \frac{200}{99}(\beta +\frac{\sigma_*}{15})\leq  \frac{3}{10}\tilde{d}_{\max}\leq \tilde{d}_{\max}-\frac{7}{20}\sigma_*.
\eeq
At the same time, $h_0=\sigma_*/3$. Hence, 
\[
\|\tilde{v}_1\|< \widetilde{d}_{\max}-h_0\qquad\Longrightarrow\qquad 1\notin \widetilde{\cal K}(h_0). 
\]
This proves the third claim in (\ref{thm-SPA-induction-3}).

\subsubsection{Proof of Lemma~\ref{lem:SPA-tech-lem2}}  \label{subsec:proof-techlemma-2}

Suppose we have already obtained (\ref{thm-SPA-iter-2}) and (\ref{thm-SPA-claim3}) for each $1\leq j\leq s-1$, and we would like to show (\ref{thm-SPA-iter-2}) for $s$. 

First, consider the second claim in (\ref{thm-SPA-iter-2}). For each $k\notin{\cal M}_{s-1}$, it has $(s-1)$ zeros in its barycentric coordinate (corresponding to those indices in ${\cal M}_{s-1}$). We apply Lemma~\ref{lem:simplex} to obtain:
\[
\|\tilde{v}_k-\tilde{v}_{\ell}\|\geq \sqrt{2}\sigma_{K-s}(\widetilde{V})\geq \sqrt{2}\sigma_*, \qquad\mbox{for all $k\neq \ell$ in $\{1,\ldots,K\}\setminus {\cal M}_{s-1}$},
\] 
where the first inequality is from (\ref{lem-simplex2}) and the second inequality is from (\ref{thm-SPA-iter-1}). 

Next, consider the third claim in (\ref{thm-SPA-iter-2}). Note that ${\cal M}_{s-1}=\{k_1, k_2,\ldots,k_{s-1}\}$. For each $1\leq j\leq s-1$, by definition, $\tilde{v}_{k_j} = \bigl[\prod_{m\geq j}(I_d-\hat{y}_m\hat{y}_m')\bigr]\cdot (I_d-\hat{y}_j\hat{y}_j)H_{j-1}v_{k_j}$. 
It follows that
\beq \label{thm-SPA-iter-3}
\|\tilde{v}_{k_j}\|\leq \|(I_d-\hat{y}_j\hat{y}_j)H_{j-1}v_{k_j}\|, \qquad\mbox{where}\quad \hat{y}_j=\frac{H_{j-1}X_{i_j}}{\|H_{j-1}X_{i_j}\|}. 
\eeq
Here, $\|H_{j-1}X_{i_j}\|$ is the maximum Euclidean distance attained in the $(j-1)$th iteration. Since we have already established (\ref{thm-SPA-iter-2}) for $j$, we immediately have 
\[
\|H_{j-1}X_{i_j}\|\geq \sigma_*/2, \qquad \mbox{for }1\leq j\leq s-1. 
\] 
In addition, we have shown (\ref{thm-SPA-claim2}) for $1\leq j\leq s-1$, which implies that 
\[
\|H_{j-1}X_{i_j} - H_{j-1}v_{k_j}\|\leq \Bigl(1+\frac{30\gamma}{\sigma_*}\max\bigl\{1, \frac{d_{\max}}{\sigma_*}\bigr\}\Bigr)\beta. 
\] 
Using the above ineqaulities, we can mimic the proof of (\ref{thm-SPA-7}) to show that 
\beq \label{thm-SPA-iter-4}
\|(I_d-\hat{y}_j\hat{y}_j)H_{j-1}v_{k_j}\|\leq \Bigl(1+\frac{30\gamma}{\sigma_*}\max\bigl\{1, \frac{d_{\max}}{\sigma_*}\bigr\}\Bigr)\beta.
\eeq 
Write $\Gamma_j=I_d-\hat{y}_j\hat{y}_j'$. It is seen that
\[
\|\tilde{v}_{k_j}\| =\Bigl\|\prod_{\ell=j+1}^s \Gamma_{j}H_{j-1}v_{k_j}\Bigr\|\leq \|\Gamma_j H_{j-1}v_{k_j}\| \leq \|(I_d-\hat{y}_j\hat{y}_j)H_{j-1}v_{k_j}\|.
\]
Therefore, for $1\leq j\leq s-1$, 
\beq \label{thm-SPA-iter-5}
\|\tilde{v}_{k_j}\|\leq \Bigl(1+\frac{30\gamma}{\sigma_*}\max\bigl\{1, \frac{d_{\max}}{\sigma_*}\bigr\}\Bigr)\beta.
\eeq
We further mimic the argument in (\ref{thm-SPA-8}) to obtain: 
\[
\|\tilde{v}_{k_j}\|\leq \tilde{\beta}_{\max}-7\sigma_*/20<\tilde{\beta}-h_0, \qquad \mbox{for all }1\leq j\leq s-1. 
\] 
This implies that 
\beq \label{thm-SPA-iter-6}
k_j \notin \widetilde{\cal K}(h_0)\;\; \mbox{for $1\leq j\leq s-1$}\quad\Longrightarrow\quad {\cal M}_{s-1}\cap \widetilde{\cal K}(h_0)=\emptyset. 
\eeq

Last, consider the first claim in (\ref{thm-SPA-iter-2}). Let $\Delta$ denote the right hand side of (\ref{thm-SPA-iter-5}) for brevity. We have shown $\|\tilde{v}_k\|\leq \Delta$, for all $k\in {\cal M}_{s-1}$. By our assumption, we can easily conclude that $\sigma_*^2\geq 2(K-2)\Delta$. We then apply Lemma~\ref{lem:vertexNorm} with $s-1$ and $\delta=\Delta$ to get 
\beq
\tilde{d}_{\max}\geq \frac{1}{2}\sigma_{K-s}(\widetilde{V})\geq \sigma_*/2, 
\eeq  
where the last inequality is from (\ref{thm-SPA-iter-1}).



\section{Proof of the main theorems}
\label{sec:Proof-main} 

We recall our pp-SPA procedure. On the hyperplane, we obtained the projected points
\begin{align*}
\tilde X_i := H ( X_i  - \bar X ) + \bar X = (I_d  - H) \bar X + H r_i + H \epsilon_i 
\end{align*}
after rotation by $U$, they become $Y_i = U'\tilde X_i  =  U'r_i + U' \epsilon_i = U'X_i\in \mathbb{R}^{K-1}$.  Denote $\tilde Y_i = U_0'X_i = U_0'r_i + U_0'\epsilon_i \in \mathbb{R}^{K-1}$. In particular, $U_0'\epsilon_i\sim N(0, \sigma^2I_{K-1})$. Then, without loss of generality, the vertex hunting analysis on $\tilde Y_i$ is equivalent to that of  $ X_i = r_i + \epsilon_i \in \mathbb{R}^{p}, $ where $ \epsilon_i \sim N(0, \sigma^2 I_{p})$ with $p =K-1$. We provide the following theorems for the rate by applying D-SPA on the aforementioned low dimension $p= K-1$ space. The proof of these two theorems are postponed to Section \ref{sub:thmD-SPA}.

\begin{thm} \label{thm:KNN1} 
Consider $ X_i = r_i + \epsilon_i \in \mathbb{R}^{p}, $ where $ \epsilon_i \sim N(0, \sigma^2 I_{p})$ for $1\leq i \leq n$. 
Suppose $m\geq c_1 n$ for a constant $c_1>0$ and $p \ll \log(n)/ \log \log (n)$. Let $ p/\log (n) \ll\delta_n \ll1$.
Let $c_2^* = 0.9 (2e^{2})^{-1/p} \sqrt{(2/p)} (\Gamma (p/2+1))^{1/p}$. Then, $c_2^*\to 0.9e^{-1/2}$ as $p\to \infty$. . 
We apply D-SPA to $X_1, X_2, \ldots, X_n$ and output $X^*_1,\cdots, X^*_n $ where some $X_i^*$ may be NA owing to the pruning. 
If  we choose $N = \log(n) $ and
\[
 \Delta =    c_3 \sigma \sqrt p  \Big( \frac{\log (n)}{n^{1 -\delta_n}}\Big)^{1/p}  
\mbox{for a constant $c_3 \leq c_2^*$}, 
\] 
Then, 
\[
\beta_{new}(X^*) \leq \sqrt{\delta_n}\cdot  \sigma \cdot \sqrt{2\log (n)}
\]
If the last inequality of (\ref{cond:G}  )  and (\ref{cond:SPA2} ) hold, then up to a permutation in the columns, 

\[
\max_{1\leq k \leq K}\|\hat{v}_k - v_k\| \leq  g_{new}(V) \cdot \sqrt{\delta_n}\cdot  \sigma \cdot \sqrt{  2 \log(n)}.  
\] 
\end{thm}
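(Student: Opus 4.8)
The plan is to bound the quantity $\beta_{\mathrm{new}}(X^*)$ for the denoised pseudo-points and then invoke Theorem~\ref{thm:SPA-maintext}. Recall that $\beta_{\mathrm{new}}$ has two parts: the largest distance of a retained point to the simplex $\mathcal{S}_0$, and, for each vertex, the smallest distance to it among the points sitting at that vertex. I will control the two parts separately, show each is at most $t:=\sqrt{\delta_n}\,\sigma\sqrt{2\log(n)}$ with high probability, and then feed the resulting bound into Theorem~\ref{thm:SPA-maintext}. Essentially the whole difficulty is concentrated in the first part, where the denoising step acts by \emph{pruning} rather than averaging, since the prescribed $\Delta$ is small.

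For the first (outward-bound) part, the key object is the neighbor count $D_i=\#\{j\neq i:\|X_j-X_i\|\le\Delta\}$, which decides whether $X_i$ is retained. I would fix the level $t$ and estimate, for a point $x$ with $\mathrm{Dist}(x,\mathcal{S}_0)\ge t$, the conditional mean $\mathbb{E}[D_i\mid X_i=x]$. Writing the mixture density of each $X_j$ and bounding it by the Gaussian factor at the nearest point of $\mathcal{S}_0$, one gets $\mathbb{E}[D_i\mid X_i=x]\le (1+o(1))\, n\,(\sqrt{2\pi}\sigma)^{-p}\,\mathrm{Vol}(B_\Delta)\,e^{-t^2/(2\sigma^2)}$, where $\mathrm{Vol}(B_\Delta)=\pi^{p/2}\Delta^p/\Gamma(p/2+1)$. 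Substituting $e^{-t^2/(2\sigma^2)}=n^{-\delta_n}$ and the prescribed $\Delta$ collapses this to a constant multiple of $\log(n)$, and the constant $c_2^*$ is tuned exactly so that with $c_3\le c_2^*$ this mean sits a fixed fraction below $N=\log(n)$. A Chernoff bound for the sum of independent Bernoullis then gives $\mathbb{P}(D_i\ge N\mid X_i=x)\le n^{-c'}$ with $c'>1$, and a union bound over $i$ shows that \emph{every} retained point lies within distance $t-\Delta$ of $\mathcal{S}_0$. Since $X_i^*$ lies within $\Delta$ of the retained $X_i$, the triangle inequality gives $\mathrm{Dist}(X_i^*,\mathcal{S}_0)\le \mathrm{Dist}(X_i,\mathcal{S}_0)+\Delta\le t$.

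For the second (vertex) part, I would use $m\ge c_1 n$: each vertex carries a linear number of clean points $X_i=v_k+\epsilon_i$. Near $v_k$ the local density of observed points is of order $n$, so the expected neighbor count of a typical such point grows like $n^{\delta_n}\log(n)\gg N$, and these points are retained with high probability. Choosing one with small (typical) noise gives $\|X_i^*-v_k\|\le\Delta+\|\epsilon_i\|=O(\sigma\sqrt{p})$; since $\delta_n\gg p/\log(n)$ forces $t\gg\sigma\sqrt{p}$, this is comfortably below $t$. Hence the second part of $\beta_{\mathrm{new}}$ is also at most $t$, and combining the two parts yields $\beta_{\mathrm{new}}(X^*)\le \sqrt{\delta_n}\,\sigma\sqrt{2\log(n)}$.

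Finally, I would check the smallness hypothesis of Theorem~\ref{thm:SPA-maintext}: under $\gamma(V)\le C$ and condition~(\ref{cond:SPA2}), the bound $\beta_{\mathrm{new}}(X^*)=O(\sigma\sqrt{\log n})=o(s_{K-1}^2(V)/\gamma(V))$, so the theorem applies and gives $\max_k\|\hat v_k-v_k\|\le g_{\mathrm{new}}(V)\,\beta_{\mathrm{new}}(X^*)\le g_{\mathrm{new}}(V)\sqrt{\delta_n}\,\sigma\sqrt{2\log n}$, the $(K-1)$-dimensional vertex-hunting problem being isometric to the original one via $U_0$. The main obstacle is the first part: making the neighbor-count mean at the critical radius provably below $N$ with a Chernoff exponent exceeding $1$ (so the union bound survives), while simultaneously keeping the density near the vertices high enough that vertex points are \emph{not} pruned. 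This two-sided control is precisely what the specific scaling of $\Delta$ and the constant $c_2^*$ are engineered to deliver.
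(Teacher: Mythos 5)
Your proposal is correct and follows essentially the same route as the paper's proof (of Theorem~\ref{thm:KNN2}, which Theorem~\ref{thm:KNN1} mirrors): show that any point at distance $\ge\sqrt{\delta_n}\,\sigma\sqrt{2\log n}$ from $\mathcal{S}_0$ has too few $\Delta$-neighbors to survive pruning, show that each vertex neighborhood contains $\gg\log(n)$ points so vertex witnesses are retained, and then invoke the improved SPA bound. Your conditional-mean-plus-Chernoff control of the neighbor count is just a repackaging of the paper's $\binom{n}{N}\bigl(\sup_x\mathbb{P}(X_t\in\mathcal{B}(x,\Delta))\bigr)^{N}$ union bound, with the same role for $c_2^*$ in forcing the expected count a fixed fraction below $N=\log(n)$.
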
 

The second theorem discuss the case there a fewer pure nodes.  
\begin{thm} \label{thm:KNN2} 
Consider $ X_i = r_i + \epsilon_i \in \mathbb{R}^{p}, $ where $ \epsilon_i \sim N(0, \sigma^2 I_{p})$ for $1\leq i \leq n$. 
 Fix $0 < c_0  < 1$ and assume that $m \geq  n^{1-c_0 + \delta}$ for a sufficiently small constant $0<\delta<c_0$. Suppose $p \ll \log(n)/ \log \log (n)$.  
Let $c_2^* = 0.9 (2e^{2-c_0})^{-1/p} \sqrt{(2/p)} (\Gamma (p/2+1))^{1/p}$.   Then $c_2^*\to 0.9e^{-1/2}$ as $p\to \infty$. 
Suppose we apply D-SPA to $X_1, X_2, \ldots, X_n$ and output $X^*_1,\cdots, X^*_n $ where some $X_i^*$ may be NA owing to the pruning. If we choose  $N = \log(n)$ and 
\[
 \Delta= c_3 \sigma \sqrt {p}\Big( \frac{\log (n)}{n^{1-c_0}}\Big)^{1/p} \text { for a constant $c_3\leq c_2^*$}.
\]
Then, 
\[
\beta_{new}(X^*)\leq \sqrt{c_0}\cdot  \sigma \cdot \sqrt{2\log (n)}
\]

If  the last inequality of (\ref{cond:G}  )  and (\ref{cond:SPA2} ) hold, then up to a permutation in the columns, 
\[
\max_{1\leq k \leq K}\|\hat{v}_k - v_k\|\leq  g_{new}(V) \cdot \sqrt{c_0 } \cdot \sigma \sqrt{ 2 \log(n)}. 
\] 
for any arbitrary small constant  $\delta<0$.

  \end{thm}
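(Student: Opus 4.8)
The plan is to reduce the entire statement to controlling the single quantity $\beta_{\mathrm{new}}(X^*,V)$ for the denoised cloud $\{X_i^*\}$, and then invoke the improved SPA bound of Theorem~\ref{thm:SPA-maintext}. Since D-SPA simply runs the orthodox SPA on the pseudo-points, once I establish the first display $\beta_{\mathrm{new}}(X^*,V)\le \sqrt{c_0}\,\sigma\sqrt{2\log(n)}$, the second display follows at once from Theorem~\ref{thm:SPA-maintext}, provided its smallness precondition holds. That precondition I would verify directly from the hypotheses: the last inequality of (\ref{cond:G}) gives $\gamma(V)\le C$, while (\ref{cond:SPA2}) together with $s_{K-1}(V)\ge s_{K-1}(\tilde V)$ (Lemma~\ref{lemma:SPA}) forces $\sigma\sqrt{\log(n)}\ll s_{K-1}(V)$, so that $\max\{1,\gamma(V)/s_{K-1}(V)\}\,\beta_{\mathrm{new}}(X^*,V)=o\bigl(s_{K-1}^2(V)/\gamma(V)\bigr)$ for large $n$. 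Thus the crux is the first display, which I would prove by bounding the two terms that define $\beta_{\mathrm{new}}$ in (\ref{new-g-beta}) separately.

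For the first term, $\max_i\mathrm{Dist}(X_i^*,\mathcal{S}_0)$, the key observation is that a point survives the denoise step only if its $\Delta$-ball holds at least $N=\log(n)$ data points, and that the retained pseudo-point $X_i^*$ lies within $\Delta$ of $X_i$, whence $\mathrm{Dist}(X_i^*,\mathcal{S}_0)\le \mathrm{Dist}(X_i,\mathcal{S}_0)+\Delta$ with $\Delta=o(\sigma\sqrt{\log(n)})$. It therefore suffices to show that, with high probability, no point lying at distance exceeding $\sqrt{c_0}\,\sigma\sqrt{2\log(n)}$ from $\mathcal{S}_0$ can accumulate $N$ neighbors. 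I would estimate the probability that a given $X_j$ lands in a ball of radius $\Delta$ centered at distance $d$ from the simplex by the Gaussian local density, namely $\approx (2\sigma^2)^{-p/2}\Gamma(p/2+1)^{-1}\Delta^p\exp(-d^2/(2\sigma^2))$; substituting the prescribed $\Delta$ and the constant $c_2^*$ collapses the volume and Gamma factors and produces an expected neighbor count of order $0.9^p\, n^{c_0}\log(n)\exp(-d^2/(2\sigma^2))$. Requiring this to fall below $N=\log(n)$ cancels the $\log(n)$ factor and yields exactly $d>\sqrt{c_0}\,\sigma\sqrt{2\log(n)}$; a Poisson/binomial tail bound followed by a union bound over the $n$ centers upgrades the expectation to a uniform high-probability statement. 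This extreme-value neighborhood count is the technical heart of the argument and the step I expect to be the main obstacle, owing to the delicate $p$-dependence in the ball volume and the requirement that the concentration be uniform over all candidate centers.

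For the second term, $\max_k\min_{i:r_i=v_k}\|X_i^*-v_k\|$, I would exploit the $m\ge n^{1-c_0+\delta}$ pure points clustered at each vertex $v_k$. The threshold $\Delta$ is calibrated, through the same constant $c_2^*$, so that a near-vertex pure point has in expectation at least $N$ of these $m$ pure points inside its $\Delta$-ball; hence with high probability at least one pure point at $v_k$ survives pruning. Since $\Delta\to 0$ and the minimum of $\|\epsilon_i\|$ over the $m$ pure points is negligible next to $\sigma\sqrt{\log(n)}$, the associated pseudo-point $X_i^*$ (an average of points each within $\Delta$ of $X_i$) obeys $\|X_i^*-v_k\|\le \|\epsilon_i\|+\Delta\ll \sqrt{c_0}\,\sigma\sqrt{2\log(n)}$. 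This is precisely where the exponent $c_0$ enters: having only $n^{1-c_0+\delta}$ pure points rather than $\Theta(n)$ forces a larger $\Delta$ to guarantee retention, which in turn relaxes the survival threshold of the first term from $o(1)$ to the constant $\sqrt{c_0}$, matching the structure of Theorem~\ref{thm:KNN1}. Combining the two bounds gives $\beta_{\mathrm{new}}(X^*,V)\le \sqrt{c_0}\,\sigma\sqrt{2\log(n)}$ with high probability, and the stated vertex-estimation bound then follows from Theorem~\ref{thm:SPA-maintext}.
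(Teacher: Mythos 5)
Your proposal follows essentially the same route as the paper's proof: reduce everything to bounding the two terms of $\beta_{\mathrm{new}}(X^*,V)$, calibrate the Gaussian ball probability $(2\pi\sigma^2)^{-p/2}\int_{\|y-x\|\le\Delta}e^{-\|y-r_t\|^2/2\sigma^2}dy \le C_p(\Delta/\sigma)^p e^{-(1+o(1))\,d(x,\mathcal S)^2/2\sigma^2}$ against the threshold $N=\log(n)$ to locate the critical distance $\sqrt{2c_0\log n}\,\sigma$, use $m\ge n^{1-c_0+\delta}$ to show $N(\mathcal B(v_k,\Delta/2))\gg\log(n)$ so that vertex-adjacent pure points survive, and finish with Theorem~\ref{thm:SPA-maintext}. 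The identification of the constant $c_2^*$ as the normalization that collapses the $\Gamma(p/2+1)$ and $p^{p/2}$ factors is exactly the paper's calculation.

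There is, however, one step where your sketch as written would fail if taken literally, and it is precisely the step you flag as the main obstacle. At the critical distance the expected neighbor count is $\mu= c\log(n)$ with $c=0.9^pe^{-(2-c_0)}(1+o(1))$, which is only a \emph{constant factor} below $N=\log(n)$. A Chernoff/Poisson bound for the conditional event $\{\text{count}\ge N\}$ gives $(e\mu/N)^N= n^{\,c_0-1-p\log(10/9)}$ per center, and a union bound over the $n$ centers then yields $n^{\,c_0-p\log(10/9)}$, which \emph{diverges} whenever $c_0>p\log(10/9)\approx 0.105\,p$ (e.g.\ $p$ small and $c_0$ near $1$). The argument only closes if you also charge the center its own probability $\approx n^{-c_0}$ of lying at distance $\ge\sqrt{2c_0\log n}\,\sigma$ from $\mathcal S_0$; then the per-center bound becomes $n^{-1-p\log(10/9)}$ and the union bound gives $n^{-p\log(10/9)}\to 0$. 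The paper achieves this automatically by bounding the joint probability over $N$-subsets, keeping the density $f_{X_1}(x)$ of the center inside the integral and using $\sum_{t=1}^N\|x-r_t\|^2\ge N\|x-\bar r\|^2\ge N\,d(x,\mathcal S)^2$, which produces the full factor $n^{-Nc_0}$ rather than $n^{-(N-1)c_0}$. So your decomposition is fine, but the "union bound over the $n$ centers" must be applied to the \emph{joint} event (center far \emph{and} $N$ neighbors), not to the conditional neighbor count alone. The paper also disposes separately of the region $d(x,\mathcal S)\ge 2\sigma\sqrt{\log n}$ by showing it contains no points at all with high probability; this is a convenience rather than a necessity, and your version absorbs it.
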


Based on the above two theorem, we have the results on $\{\tilde Y_i\}'s$. However, what we really care about is on $\{Y_i\}'s$ which differ from $\{\tilde Y_i\}'s$ by the rotation matrix. To bridge the gap, we need the following Lemma.

\begin{lemma}\label{lem:H}
Suppose that $s^2_{K-1}(R)\gg \max\{\sqrt{\sigma^2d/n}, \sigma^2d/n \}$ and $\sigma = O(1)$. Then, with probability $1- o(1)$, 
\begin{align} \label{bdd:H-H0}
 \Vert U - U_0 \Vert \asymp \|H-H_0\|&\leq  \frac{C}{s^2_{K-1}(R)}\max\{\sqrt{\sigma^2d/n}, \sigma^2d/n \}
\end{align}

\end{lemma}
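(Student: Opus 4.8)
The plan is to recast the lemma as a spectral-subspace perturbation problem for a covariance-type matrix and apply the Davis--Kahan $\sin\Theta$ theorem, with one essential twist: the isotropic bulk of the noise covariance must be absorbed into the reference matrix before the perturbation bound is applied. Write $\hat R = n^{-1/2}Z$, so that $U$ consists of the top $K-1$ left singular vectors of $\hat R$ (equivalently of $Z$) and $U_0$ of $R$, with $H=UU'$ and $H_0=U_0U_0'$ the associated rank-$(K-1)$ projections. Let $\epsilon=[\epsilon_1,\ldots,\epsilon_n]\in\mathbb{R}^{d\times n}$, let $P=I_n-n^{-1}\mathbf{1}_n\mathbf{1}_n'$ be the centering projection, and set $E=\epsilon P$. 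Since $\bar X=\bar r+\bar\epsilon$, one has $\hat R=R+n^{-1/2}E$, hence
\[
\hat R\hat R' = RR' + n^{-1/2}(RE'+ER') + n^{-1}EE'.
\]

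\textbf{The isotropic subtraction.} The key observation is that $\mathbb{E}[n^{-1}EE']=\tau^2 I_d$ with $\tau^2=\tfrac{n-1}{n}\sigma^2$ (because $P$ has rank $n-1$), and adding a multiple of the identity does not move eigenvectors. I would therefore compare $\hat R\hat R'$ not with $RR'$ but with the reference matrix $M_0:=RR'+\tau^2 I_d$, whose top-$(K-1)$ eigenspace is exactly $H_0$ and whose eigengap between the $(K-1)$th and $K$th eigenvalue equals $s_{K-1}^2(R)$. The effective perturbation is then
\[
W := \hat R\hat R' - M_0 = n^{-1/2}(RE'+ER') + \bigl(n^{-1}EE'-\tau^2 I_d\bigr).
\]
This subtraction is the crux of the argument: without it, $\|n^{-1}EE'\|\asymp\sigma^2$ is of order one, the naive perturbation is $O(\sigma^2)$, and the bound fails. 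After subtraction both pieces of $W$ are genuine fluctuations.

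\textbf{Bounding $\|W\|$ by random matrix theory.} For the cross term I would exploit the rank of $R$. Writing the SVD $R=U_0\Sigma_0 W_0'$, one has $R\mathbf{1}_n=0$, so $W_0'\mathbf{1}_n=0$ and $PW_0=W_0$; consequently $ER'=\epsilon W_0\Sigma_0 U_0'$ and $\|ER'\|\le \|\epsilon W_0\|\,s_1(R)$. Here $\epsilon W_0$ is a $d\times(K-1)$ matrix with i.i.d.\ $N(0,\sigma^2)$ entries, so $\|\epsilon W_0\|\lesssim \sigma\sqrt d$ with high probability, and $s_1(R)=O(1)$ follows from Lemma~\ref{lemma:G} and condition~(\ref{cond:G}). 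This gives $\|n^{-1/2}(RE'+ER')\|\lesssim \sigma\sqrt{d/n}$. For the covariance term, decomposing $P=\sum_{a=1}^{n-1}q_aq_a'$ shows $EE'=\sum_{a}(\epsilon q_a)(\epsilon q_a)'$ is a Wishart matrix built from $n-1$ i.i.d.\ $N(0,\sigma^2 I_d)$ vectors, so the standard sample-covariance deviation bound yields $\|n^{-1}EE'-\tau^2 I_d\|\lesssim \sigma^2(\sqrt{d/n}+d/n)$. Combining, and using $\sigma=O(1)$, I obtain $\|W\|\lesssim \max\{\sqrt{\sigma^2 d/n},\,\sigma^2 d/n\}$.

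\textbf{Davis--Kahan and the final conversion.} Under the hypothesis $s_{K-1}^2(R)\gg\max\{\sqrt{\sigma^2 d/n},\sigma^2 d/n\}$, we have $\|W\|=o(s_{K-1}^2(R))$, so the eigengap of $M_0$ survives the perturbation and the $\sin\Theta$ theorem (in its Weyl-gap form) gives
\[
\|H-H_0\| = \|\sin\Theta(U,U_0)\| \le \frac{C\|W\|}{s_{K-1}^2(R)} \le \frac{C}{s_{K-1}^2(R)}\max\{\sqrt{\sigma^2 d/n},\,\sigma^2 d/n\}.
\]
Finally, since singular subspaces are defined only up to an orthogonal rotation $O$ of the columns, the standard identities $\|\sin\Theta(U,U_0)\|\le \min_O\|U-U_0O\|\le\sqrt2\,\|\sin\Theta(U,U_0)\|$ together with $\|H-H_0\|=\|\sin\Theta(U,U_0)\|$ justify the stated equivalence $\|U-U_0\|\asymp\|H-H_0\|$, completing the proof. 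The main obstacle is conceptual rather than computational: recognizing that the order-one isotropic component $\tau^2 I_d$ of $n^{-1}EE'$ must be moved into the reference matrix, and that the cross term is controlled only by using the rank-$(K-1)$ structure of $R$ (which replaces the ambient $\sigma\sqrt n$ by $\sigma\sqrt d$); the remaining random-matrix estimates are routine.
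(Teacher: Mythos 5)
Your proof is correct and follows essentially the same route as the paper's: both subtract the isotropic noise component from the Gram matrix (the paper works with $ZZ'-n\sigma^2 I_d$, you with $n^{-1}ZZ'-\tau^2 I_d$), bound the cross term via the low-rank structure of $R$ together with $s_1(R)=O(1)$, bound the Wishart fluctuation by standard random matrix theory, and conclude with Davis--Kahan using the eigengap $s_{K-1}^2(R)$. The only differences are cosmetic (you absorb the centering term $n\bar\epsilon\bar\epsilon'$ into $E=\epsilon P$ and you spell out the $\|U-U_0\|\asymp\|\sin\Theta\|$ conversion, which the paper leaves implicit).
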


\subsection{Proof of Theorems~\ref{thm:main1} and \ref{thm:main2} }
With the help of Theorems~\ref{thm:KNN1}, \ref{thm:KNN2} and Lemma~\ref{lem:H}, we now prove Theorems~\ref{thm:main1} and \ref{thm:main2}. We will present the detailed proof for  Theorem~\ref{thm:main2}. The proof of Theorem~\ref{thm:main1} is nearly identical to that of Theorem~\ref{thm:main2} with the only difference in employing Theorem~\ref{thm:KNN1}, and we refrain ourselves from repeated details. 

\begin{proof}[Proof of Theorem~\ref{thm:main2}]
Recall that $Y_i = U' X_i = U' r_i + U' \epsilon_i$ and $\tilde Y_i = U_0'r_i + U_0' \epsilon_i$. 
Theorem~\ref{thm:KNN2}  indicates that applying D-SPA on $\bar Y_i$ improves the rate to  $\sigma (1+ o(1))\sqrt{2c_0 \log(n)}$. Note that $\Vert r_i\Vert \leq 1$. Also, by Lemma~\ref{lemma:chi2},  $ \Vert \epsilon_i  \Vert\leq (1+o(1)) \sigma ( \sqrt{\max\{d, 2\log (n)\}})$ simultaneously for all $i$, with high probability. Under the assumption $\alpha_n =o(1)$ for both cases and $s_{K-1}^2 (R) \asymp s^2_{K-1}(\tilde V)$ by Lemma~\ref{lemma:G}, the first condition in  Lemma~\ref{lem:H} is valid. By the last inequality in (\ref{cond:G} ), we have the norm of $r_i$ should be upper bounded for all $1\leq i \leq n$ and therefore $s_{K-1}(\tilde V)\leq C \max_{k\neq l} \Vert \tilde v_k - \tilde v_\ell\Vert \leq C $. Further with the condition (\ref{cond:SPA2} ), we obtain that $\sigma = O(1)$. Therefore, the conditions in Lemma~\ref{lem:H} are both valid. Then by employing Lemma~\ref{lem:H}, we can derive that
\[
\Vert  Y_i - \tilde Y_i  \Vert  = O_{\mathbb P} \left( \frac{\sigma \sqrt d}{\sqrt n s^2_{K-1}(R)}  (1+   \sigma  \sqrt{\max\{d, 2\log (n)\}} \, )\right) =O_{\mathbb P}(\sigma\alpha_n)
\]
where the last step is due to Lemma~\ref{lemma:G} under the condition  (\ref{cond:G} ).

Consider the first case that $\alpha_n\ll t_n^*$. We choose $\Delta= c_3 t_n^*\sigma$. It is seen that $\sigma \alpha_n\ll \Delta$. We will prove by contradiction that applying pp-SPA with $(\Delta, \log (n))$ on $\{Y_i\}$, the denoise step can remove outlying points whose distance to the underlying simplex larger than $\sigma [  \sqrt{2c_0 \log(n)} +  C\alpha_n]$ for some $C>0$.

 First, suppose that with probability $c$ for a small constant $c>0$, there is one point $Y_{i_0}$ away from the underlying simplex by a distance larger than $\sigma [  \sqrt{2c_0 \log(n)} +  C\alpha_n]$ and it is not pruned out. Since $\sigma \alpha_n \ll \Delta$, we see that $\tilde Y_{i_0}$ is faraway to the simplex with distance $\sigma   \sqrt{2c_0 \log(n)} $ for certain large $C$ and it cannot be pruned out by $(1.5\Delta, \log (n))$. Otherwise if it can be pruned out,  $\mathcal B(Y_{i_0}, \Delta) \subset \mathcal B(\tilde Y_{i_0}, 1.5\Delta)$ and hence $N(\mathcal B(Y_{i_0}, \Delta)) \geq \log(n)$, which means that  we can prune out $ Y_{i_0}$ with $(\Delta, \log (n))$. This is a contradiction. However, by employing  Theorem~\ref{thm:KNN2} on $\{\tilde Y_i\}$ with $p =K-1$ and noticing $c_2^* = 1.8 c_2$ with $c_2$ defined in the manuscript, we should be able to prune out $\tilde Y_{i_0}$ with high proability.  This leads to a contradiction.  
 
 Second, suppose that with probability $c$ for a small constant $c>0$, all outliers can be removed but a vertex $v_1$ is also removed (which means all points near it are removed). Then, $N (\mathcal B(v_1, \Delta)) < \log (n)$. For the corresponding vertex for $\{\tilde Y_i\}$, denoted by $\tilde v_1$, it holds that $N (\mathcal B(\tilde v_i, \Delta/2))< \log (n)$ which means the vertex $\tilde v_1$  for $\{\tilde Y_i\}$ is also pruned. However, again by Theorem~\ref{thm:KNN2}, this can only happen with probability $o(1)$. This leads to  another contradiction. 
 
Let us denote by $  \beta(Y^*, U_0' V)$ the maximal distance of points in $Y^*$ to the simplex formed by $ U_0' V$.   By the above  two contradictions, we conclude that with high probability, 
  \[
  \beta(Y^*, U_0' V) \leq \sigma [  \sqrt{2c_0 \log(n)} +  C\alpha_n].
  \]
  where $U_0'V$ is the underlying simplex of $\{\tilde Y_i\}$. It is worth noting that $\alpha_n = o(1)$. Then, under the assumptions of the theorem, we can apply Theorem~\ref{thm:SPA} (Theorem 1 in the manuscript). It gives that 
  \begin{align*}
  \max_{1\leq k \leq K}\Vert \hat v^*_k - U_0'v_k\Vert  \leq \sigma g_{new}(V)[  \sqrt{2c_0 \log(n)} +  C\alpha_n]
  \end{align*}
  where we use $ (\hat v_1^*, \cdots, \hat v_K^*)$ to denote the output vertices by applying SP on $\{Y_i\}$. Eventually, we output each vertex $\hat v_k = (I_K - UU') \bar X + U \hat v^*_k$. It follows that up to a permutation of the $K$ vectors,  
  \begin{align*}
   \max_{1\leq k \leq K}\Vert \hat v_k - v_k\Vert & \leq    \max_{1\leq k \leq K}\Vert U \hat v^*_k  - v_k \Vert + \Vert (I_d - UU')\bar X - (I_d - U_0U_0')\bar r\Vert\notag\\
   &  \leq    \max_{1\leq k \leq K}\Vert  \hat v^*_k  -U_0'v_k \Vert +\Vert U - U_0\Vert + \Vert (I_d - UU')\bar X - (I_d - U_0U_0')\bar r\Vert
  \end{align*}
  Further we can derive 
  \begin{align*}
  \Vert (I_d - UU')\bar X - (I_d - U_0U_0')\bar r\Vert &  \leq \Vert H- H_0 \Vert + \Vert \bar X - \bar r\Vert  \notag\\
  &\leq \sigma \alpha_n + \Vert \bar\epsilon \Vert  \notag\\
  & \leq \sigma \alpha_n  + \frac{2\sigma \sqrt{\max\{d, 2\log (n)\}} }{\sqrt n}
  \end{align*}
  this together with Lemma~\ref{lem:H}, give rise to 
  \begin{align*}
   \max_{1\leq k \leq K}\Vert \hat v_k - v_k\Vert & \leq   \sigma g_{new}(V)[  \sqrt{2c_0 \log(n)} +  C\alpha_n] + \frac{2\sigma \sqrt{\max\{d, 2\log (n)\}} }{\sqrt n} \, .
  \end{align*}
  
Consider the second case that $\alpha_n\gg t_n^*$ where we choose $\Delta = \sigma \alpha_n$. By Lemma~\ref{lemma:chi2}, it is observed that with high probability, $\max_{1\leq i\leq n}d(\tilde Y_i, \mathcal S ) < (1+ o(1))\sigma \sqrt {2\log (n)} $. Notice that $\Vert  Y_i - \tilde Y_i  \Vert  \leq C \sigma \alpha_n$ with high probability. For $Y_i$, if its distance to the underlying simplex is larger than $\sigma[ (1+ o(1)) \sqrt {2\log (n)} + C_1\alpha_n]   $ for a sufficiently large $C_1>3C+ 1$, then  $d(\tilde Y_i, \mathcal S) \geq d(Y_i, \mathcal S) - C\sigma\alpha_n > \sigma[ (1+ o(1)) \sqrt {2\log (n)} + (2C+1)\alpha_n]   $. Hence, $\mathbb B(\tilde Y_i, (2C+1)\Delta ) )$ is away from the simplex by a distance larger than $\sigma(1+ o(1)) \sqrt {2\log (n)}$. It follows that $N(\mathbb B(Y_i, \Delta)) \leq N(\mathbb B(\tilde Y_i, (2C+1) \Delta ) )< \log (n)$. This is equivalent to say that we prune out the points there. Consequently, 
with high probability, 
  \[
  \beta(Y^*, U_0' V) \leq \sigma[ (1+ o_{\mathbb P}(1)) \sqrt {2\log (n)} + C_1\alpha_n]
  \]
  and further by Theorem~\ref{thm:SPA} (Theorem 1 in the manuscript),  
  \begin{align*}
  \max_{1\leq k \leq K}\Vert \hat v^*_k - U_0'v_k\Vert  \leq \sigma g_{new}(V)[  \sqrt{2 \log(n)} +  C_1\alpha_n]
  \end{align*}
  Next, replicate the proof for $ \max_{1\leq k \leq K}\Vert \hat v_k - v_k\Vert$ in the former case, we can conclude that 
    \begin{align*}
   \max_{1\leq k \leq K}\Vert \hat v_k - v_k\Vert & \leq    \sigma g_{new}(V)[ (1+ o_{\mathbb P}(1)) \sqrt {2\log (n)} + C_1\alpha_n]+ \frac{2\sigma \sqrt{\max\{d, 2\log (n)\}} }{\sqrt n} 
   \notag\\
   &=   \sigma g_{new}(V)(1+ o_{\mathbb P}(1)) \sqrt {2\log (n)}.
  \end{align*}
This concludes our proof.

\end{proof}

\subsection{Proof of Theorems~\ref{thm:KNN1} and \ref{thm:KNN2}. }\label{sub:thmD-SPA}

In the subsection, we provide the proofs of Theorems~\ref{thm:KNN1} and \ref{thm:KNN2}. We show the proof of Theorem~\ref{thm:KNN2} in detail and briefly present the proof of Theorems~\ref{thm:KNN1} as it is similar to that of Theorem~\ref{thm:KNN2}. 

\begin{proof} [Proof of Theorem~\ref{thm:KNN2}]
We first claim the limit of  $c_2^* = 0.9 (2e^{2-c_0})^{-1/p} \sqrt{(2/p)} (\Gamma (p/2+1))^{1/p}$. 
Note that $\Gamma(p/2+1) = (p/2)! $ if $p$ is even and $\Gamma(p/2+1) = \sqrt \pi (p+1)! /( 2^{p+1}(\frac{p+1}{2})!)  $ if $p$ is odd. Using Stirling's approximation, it is elementary to deduce that 
\[
 c_2^*  =e^{O(1/p) - (1- \log (p+1))(p+1) /2p - \log(p) /2} \to e^{-1/2}. 
 \] 
 
Define  the radius $\Delta \equiv \Delta_n= c_3 \sigma \sqrt {p}\Big( \frac{\log (n)}{n^{1-c_0}}\Big)^{1/p} $ for a constant $c_3\leq c_2$. In the sequel, we will prove that applying D-SPA to $X_1, \cdots, X_n$ with $(\Delta, N)$, we can prune out the points whose distance to the underlying true simplex are larger than the rate in the theorem, while the points around vertices are captured. 

 Denote $d(x, \mathcal{S}) $, the distance of $x$ to the simplex $\mathcal S$. Let 
\begin{align*}
\mathcal{R}_f: = \{x\in \mathbb{R}^{p}: d(x, \mathcal{S}) \geq  2\sigma \sqrt{\log (n)}\,  \}
\end{align*}
We first claim that the number of points in $\mathcal{R}_f$, denoted by $N(\mathcal{R}_f)$, is bounded with probability $1- o(1)$. By definition, we deduce 
\begin{align*}
N(\mathcal{R}_f) =  \sum_{i=1}^n \mathbf{1}(x_i \in\mathcal{R}_f) \leq \sum_{i=1}^n \mathbf{1} ( \Vert  \varepsilon_i \Vert \geq 2 \sigma \sqrt{\log n} \, )   
\end{align*}
The mean on  the RHS is given by $n \mathbb{P} (\Vert  \varepsilon_i \Vert \geq  2 \sigma \sqrt{\log n}) = n \mathbb{P} (\chi^2_{p}\geq  4\log n) \leq n e^{- 1.5\log (n)}= n^{-1/2}$. By similar computations, the order of the variance is again $ n^{-1/2}$. By Chebyshev's inequality, we conclude that $N(\mathcal{R}_f) = o_{\mathbb P}(1) $. 

In the sequel, we use the notation $\mathbb{B} (x, r)$ to represent a ball centered at $x$ with radius $r$ and denote $N(\mathbb{B} (x, r))$ the number of points falling into this ball. And we also 
denote $\mathcal S$ the true underlying simplex. 

Based on these notation, we introduce 
\begin{align*}
P:= & \mathbb P ( \exists \,\,   X_i \text{ satisfying } \sigma\sqrt{2c_0 \log (n)} \leq d(X_i, \mathcal S) \leq 2\sigma \sqrt{\log (n)} \text { cannot be pruned out } )
\end{align*}
We aim to show that $P = o(1)$. To see this, we first derive 
\begin{align*}
P & =    {n \choose N}  N \cdot \mathbb P (X_1, \cdots X_N \in \mathcal \mathcal B(X_1, \Delta) \text{ s.t. $ \sigma\sqrt{2c_0 \log (n)}  \leq d(X_1, \mathcal S) \leq 2\sigma \sqrt{\log (n)} $} \,  ) \notag\\
&\leq {n \choose N} N \cdot \int_{a_n  \leq d(x, \mathcal S) \leq b_n} f_{X_1}(x)
\mathbb P (X_2, \cdots, X_N \in \mathcal B(x, \Delta)  ) {\rm d} x\notag\\
& \leq {n \choose N} N \cdot \int_{a_n  \leq d(x, \mathcal S) \leq b_n} f_{X_1}(x)
\prod_{t=2}^N \mathbb P (X_t\in \mathcal B(x, \Delta) )  {\rm d} x
\end{align*}
where $a_n : = \sigma\sqrt{2c_0 \log (n)} $ and $b_n:= 2\sigma \sqrt{\log (n)}$ for simplicity. 
We can  compute that for any $2\leq t \leq N$,
\begin{align}\label{eq:2024031601}
\mathbb P (X_t\in \mathcal B(x, \Delta) )  &= (2\pi\sigma^2 )^{-\frac{p}{2}}\int_{\Vert y - x\Vert \leq \Delta}  \exp\{ - {\Vert y - r_t\Vert^2}/{2\sigma^2}\} {\rm d} y  \notag\\
&\leq  \frac{ (\Delta/\sigma)^p}{ 2^{p/2}\Gamma(p/2+1)}\exp \Big\{ - \frac{(\Vert x - r_t\Vert - \Delta)^2}{2\sigma^2}\Big\} \notag\\
& \leq  (\Delta/\sigma)^p C_p \exp \Big\{ - \frac{\Vert x- r_t\Vert ^2}{2(1+ \tau_n)\sigma^2}\Big\}
\end{align}
where $\tau_n := C \Delta/ \sigma\sqrt{2c_0\log(n)}$ for a large $C>0$;and we write $C_p: = 2^{1-p/2}/\Gamma(p/2+1)$. Here to obtain the last inequality, we used the definition of $\Delta$ and the derivation
\[
\frac{\Delta}{\| x- r_t\|} \leq \frac{ \Delta} {\sigma \sqrt{2c_0 \log(n)}} \leq C\tau_n \leq C \sqrt p (\log(n))^{1/p-1/2} /n^{(1-c_0)/p}= o(1) 
\]
so that 
\[
(1- {\Delta}/{\| x- r_t\|})^2\leq (1+ \tau_n)^{-1} 
\]
by choosing appropriate $C$ in the definition of $\tau_n$. Further, under the condition that $p \ll \log(n) /\log \log(n)$, one can verify that 
\[
\tau_n\ll 1/\log(n)= o(1) \,. 
\]
(\ref{eq:2024031601}), together with 
\[
f_{X_1} (x) = (2\pi\sigma^2)^{-\frac{p}{2}} \exp\{ -\Vert x - r_1\Vert^2 /(2\sigma^2)\} \leq  (2\pi\sigma^2)^{-\frac{p}{2}} \exp\{ -\Vert x - r_1\Vert^2 /(2(1+ \tau_n)\sigma^2)\} ,
\]
leads to 
\begin{align*}
P \leq    {n \choose N} N C_p^{N-1} (\Delta/\sigma)^{p(k-1)}\cdot  \int_{a_n  \leq d(x, \mathcal S) \leq b_n}(2\pi\sigma^2)^{-\frac{p}{2}} \exp  \Big\{ - \frac{\sum_{t=1}^N\Vert x- r_t\Vert ^2}{2(1+ \tau_n)\sigma^2}\Big\}{\rm d} x
\end{align*}
Also, notice that $\sum_{t=1}^N\Vert x- r_t\Vert ^2 \geq N \Vert x- \bar r\Vert^2$ where $\bar r = N^{-1} \sum_{t=1}^N r_t$. Then, 
\begin{align*}
P &\leq  {n \choose N} NC_p^{N-1}   (\Delta/\sigma)^{p(N-1)} \cdot\int_{a_n  \leq d(x, \mathcal S) \leq b_n} (2\pi\sigma^2)^{-\frac{p}{2}}\exp \Big\{ - \frac{N \Vert x- \bar r\Vert^2}{2(1+ \tau_n)\sigma^2}\Big\}{\rm d} x\notag\\
& \leq {n \choose N} N C_p^{N-1}   (\Delta/\sigma)^{p(N-1)}  \int_{\Vert x- \bar r\Vert\geq a_n} (2\pi\sigma^2)^{-\frac{p}{2}} \exp \Big\{ - \frac{N \Vert x- \bar r\Vert^2}{2(1+ \tau_n)\sigma^2}\Big\}{\rm d} x\notag\\
& \leq  {n \choose N} NC_p^{N-1}   (\Delta/\sigma)^{p(N-1)} N^{-p/2} (1+ \tau_n)^{p/2} \cdot\mathbb P (\chi^2_{p}\geq 2N c_0\log n/ (1+ \tau_n))
\end{align*}
where we used the fact that $\Vert x- \bar r\Vert \geq d(x, \mathcal{S})$ in the second step and we did change of variables so that the integral reduces to the tail probability of $\chi^2_{p}$ distribution. By Mills ratio,  the tail probability of $\chi^2_{p}$ is given by 
\begin{align*}
\mathbb P (\chi^2_{p}\geq 2Nc_0\log n/(1+ \tau_n))\leq  Cn^{-Nc_0/(1+ \tau_n)} \big(2Nc_0\log n/(1+ \tau_n) \big)^{p/2-1}, 
\end{align*}
we obtain
\begin{align*}
P \leq C {n \choose N} NC_p^{N-1}   (\Delta/\sigma)^{p(N-1)} N^{-p/2}  n^{-Nc_0/(1+ \tau_n)} (2Nc_0\log n)^{p/2-1}\, . 
\end{align*}
Using the approximation ${n \choose k} \leq C (en/k)^k$, we deduce that 
\begin{align*}
P &\leq C \left[ e (2Nc_0\log n)^{(p-2)/(2N)}C_p^{1-1/N} N^{(1-p/2)/N}  \cdot \frac{n^{1-c_0/(1+ \tau_n)} (\Delta/\sigma)^{p(1-1/N)}}{N}\right]^{N}  \notag\\
&= :C \Big[A(n,p, N) \cdot  \frac{n^{1-c_0/(1+ \tau_n)} (\Delta/\sigma)^{p(1-1/N)}}{N} \Big]^N 
\end{align*}
Now we plug in $N =  \log (n)$ and  $\Delta= c_3 \sigma \sqrt {p}\Big( \frac{\log (n)}{n^{1-c_0}}\Big)^{1/p}$ for a constant $c_3\leq c_2$ where $c_2= 0.9 (2e^{2-c_0})^{-1/p} \sqrt{(2/p)} (\Gamma (p/2+1))^{1/p} = 0.9 e^{-(2-c_0)/p} C_p^{-1/p} /\sqrt p $ with $C_p =  2^{1-p/2}/\Gamma(p/2+1)$. It is straightforward to compute that 
\begin{align*}
&\quad A(n,p, N) \cdot  \frac{n^{1-c_0/(1+\tau_n)} (\Delta/\sigma)^{p(1-1/N)}}{N} \notag\\
& \leq e^{1-(2-c_0)(1-1/\log (n))} 2^{\frac{p-2}{2\log (n)}} (c_0\log (n))^{\frac{p-2}{2\log (n)}} (0.9)^{p(1-1/\log (n))}n^{\tau_n c_0/(1+ \tau_n) } \Big( \frac{n^{1-c_0}}{\log (n)}\Big)^{1/\log (n)} \notag\\
& \leq  e^{o(1)}(0.9)^{p} < 1.01\cdot 0.9 <1
\end{align*}
under the condition that $p \ll \log(n) /\log \log (n)$, which also give rise to $\tau_n \log(n) = o(1)$. This implies $P \leq C (0.909)^{\log (n)} = o(1)$.

%
%
%

In the mean time, for each vertex $v_k$,  recall that $J_k =\{ i: r_i =v_k\}$, 
\begin{align*}
N( \mathcal{B}(v_k, \Delta/2)) \geq \sum_{i\in J_k} \mathbf{1}(x_i \in \mathcal{B}(v_k, \Delta/2)) =  \sum_{i\in J_k} \mathbf{1} ( \Vert \varepsilon_i \Vert \leq  \Delta/2) \geq mp_\Delta - C\sqrt{ m p_\Delta \log\log(n)}.
\end{align*}  
with probability $1- o(1)$, and  
\begin{align*}
p_\Delta: = \mathbb{P} (\Vert \varepsilon_i \Vert \leq  \Delta/2) =\mathbb{P} (\chi_{p}^2 \leq 4^{-1}(\Delta/\sigma)^2)\geq \frac{e^{-(\Delta/\sigma)^2/8} 2^{-p}}{2^{p/2}\Gamma (p/2+1)}(\Delta/\sigma)^p
\end{align*}
Recall the condition that $m\geq  n^{\delta} n^{1-c_0}$. It follows that 
\begin{align*}
mp_\Delta \geq  n^{\delta} \frac{e^{-(\Delta/\sigma)^2/8} 2^{-p}}{2^{p/2}\Gamma(p/2+1)}  n^{1-c_0}(\Delta/\sigma)^p  &=  n^{\delta}\frac{e^{-(\Delta/\sigma)^2/8}}{2^{p/2}\Gamma(p/2+1)}   \cdot \frac{c\log (n)}{C_p}2^{-p} (c_3/c_2)^p  \notag\\
& \geq  c  n^{\delta} 2^{-p}(c_3/c_2)^p \log (n)\gg \log (n)
\end{align*}
where  $c>0$ is some small constant. The last step is due to the fact that $n^{\delta}2^{-p} (c_3/c_2)^p = e^{\delta \log (n)- p \log (2c_2/c_3)} \gg 1$ as $2c_2/c_3\geq 2$ is a constant and $p \ll \log (n) /\log \log (n)$. 
Thus, with probability $1- o(1)$, $N( \mathcal{B}(v_k, \Delta/2)) \gg \log (n)$. Under this event, for any point $X_{i_0} \in \mathcal{B}(v_k, \Delta/2)$,  immediately $\mathcal{B}(v_k, \Delta/2) \subset \mathcal{B}(X_{i_0}, \Delta)$  and further $N( \mathcal{B}(X_{i_0}, \Delta)) \gg \log (n)$. 
Combining this, with $P= o(1)$ and $N(\mathcal{R}_f) = o_{\mathbb P}(1) $, we conclude that we can prune out all points with a distance to the simplex larger than $\sigma \sqrt{2c_0\log (n)}$ while preserve those points near vertices, with high probability. Thus we finish the claim for $\beta_{new}(X^*)$. 

The last claim follows directly from Theorem~\ref{thm:SPA} (Theorem 1 in the manuscript) under condition (\ref{cond:SPA2}). We therefore conclude the proof.  


\end{proof}

We briefly present the proof of Theorem~\ref{thm:KNN1} below. 
\begin{proof}
The proof strategy is roughly the same as that of Theorem~\ref{thm:KNN2}
When $m >c_1 n$, we take $ \Delta =    c_3 \sigma \sqrt p  \Big( \frac{\log (n)}{n^{1 -\delta_n}}\Big)^{1/p} $ where $p/\log (n) \ll\delta_n \ll1$ and $c_3\leq c_2$, then similarly we can derive that $N( \mathcal{B}(v_k, \Delta/2))\geq c \log (n) n^{\delta_n} a^{p} = c \log (n)  e^{\delta_n \log (n) - p\log (1/a)}\gg \log (n)$ where $c>0$ is a small constant and $0<a\leq 1$. This gives rise to the conclusion that with high probability, $N( \mathcal{B}(X_{i_0}, \Delta)) \gg \log (n)$ for any $X_{i_0}\in N( \mathcal{B}(v_k, \Delta/2))$.Moreover, in the same manner to the above derivations, replacing $c_0$ by $\delta_n$, we can claim again that $N(\mathcal{R}_f) = o_{\mathbb P}(1) $ and 
\begin{align*}
P &  \leq C\left( A(n,p, \log (n)) \cdot  \frac{n^{1-\delta_n/(1+\tau_n)} (\Delta/\sigma)^{p(1-1/\log (n))}}{\log (n)} \right)^{\log (n)}= o(1). 
\end{align*}
Consequently, all the claims follow from the same reasoning as the proof of Theorem~\ref{thm:KNN2}. We therefore omit the details and conclude the  proof . 
\end{proof}

\subsection{Proof of Lemma~\ref{lem:H}}

Recall that $R=n^{-1/2}[r_1-\bar r,\ldots,r_n-\bar r]$. Let $R=U_0D_0V_0$ be its singular value decomposition and let $H_0=U_0U_0'$. Denote $\epsilon=[\epsilon_1,\ldots,\epsilon_n]\in\mathbb R^{d,n}$.
We start by analyzing the convergence rate of $\|ZZ'-nR  R '-n\sigma^2I_d\|$. Recall that $\bar X=\bar r+\bar\epsilon$, where $\bar\epsilon=n^{-1}\sum_{i=1}^n\epsilon_i$. We obtain
\begin{align}\label{decompo}
Z=X_i-\bar X=r_i+\epsilon_i-\bar r-\bar\epsilon,\qquad Z=\sqrt nR +\epsilon-\bar\epsilon 1_n'.
\end{align}
Observing the fact that $R 1_n=0$, we deduce
\begin{align}\label{decomp}
&ZZ'-nR  R '-n\sigma^2I_d=(\sqrt {n}R +\epsilon-\bar\epsilon 1_n')(\sqrt {n}R +\epsilon-\bar\epsilon 1_n')'-nR  R '-n\sigma^2I_d\notag\\
&\qquad=\sqrt {n}(\epsilon-\bar\epsilon 1_n') R '+\sqrt {n}R (\epsilon-1_n\bar\epsilon')'+(\epsilon-\bar\epsilon 1_n')(\epsilon-\bar\epsilon 1_n')'-n\sigma^2I_d\notag\\
&\qquad=\sqrt {n}\epsilon R '+\sqrt {n}R \epsilon'+(\epsilon\epsilon'-n\sigma^2I_d)-n\bar\epsilon\bar\epsilon'.
\end{align}
The above equation implies that
\begin{align}\label{lemma1:2}
\|ZZ'-nR  R '-n\sigma^2I_d\|\leq 2\sqrt {n}\|\epsilon R'\|+\|\epsilon\epsilon'-n\sigma^2I_d\|+n\|\bar\epsilon\|^2.
\end{align}

We proceed to bound the three terms $\|\epsilon R'\|$, $\|\epsilon\epsilon'-n\sigma^2I_d\|$ and $n\|\bar\epsilon\|^2$ respectively. 
First, notice that $\epsilon R' \in \mathbb{R}^{d\times d}$ is a Gaussian random matrix with independent rows which follow $N(0, RR')$. By Theorem 5.39 and Remark 5.40 in \cite{vershynin}, we can deduce that with probability $1- o(1)$, 
\begin{align*}
n\Vert  R\epsilon'\epsilon R ' \Vert \leq Cn d\sigma^2s_1^2(R).
\end{align*}
This, together with the fact that $s_1(R) \leq c $ gives that  
\begin{align}\label{lemma1:3}
\sqrt n\Vert \mathcal \epsilon R'+ R\epsilon'\Vert \leq C\sigma\sqrt{nd} .
\end{align}
Second, by Bai-Yin law (\cite{bai2008limit}),  we can estimate the bound of $\Vert \mathcal E\mathcal E'- n \sigma^2I_d\Vert $ as follows.  
\begin{align}\label{J2}
\Vert \epsilon\epsilon'- n \sigma^2I_d\Vert \leq n \sigma^2(2\sqrt{d/n} +d/n) \leq \sigma^2 (2\sqrt{nd} + d),
\end{align}
with probability $1- o(1)$. Third, observe that $\bar \epsilon \sim N(0, \sigma^2/n I_d)$. We therefore obtain that with probability $1- o(1)$, $$n\|\bar\epsilon\|^2\leq \sigma^2 [d + C \sqrt{d \log  (n)}] .$$ 
By applying the condition that $\sigma=O(1)$, combining the above equation with (\ref{lemma1:2}), (\ref{lemma1:3}) and (\ref{J2}) yields that, with probability at least $1-o(1)$,
\begin{align}\label{lemma1:4}
\|ZZ'-nR  R '-n\sigma^2I_d\|&\leq 2 \sigma\sqrt{nd}  + \sigma^2 [d + C \sqrt{d \log  (n)}] + \sigma^2 (2\sqrt{nd} + d) \notag\\
& \leq C (\sigma\sqrt{nd}  + \sigma^2 d) .
\end{align}

Now, we compute the bound for $\|\widehat H-H_0\|$. Let $U^\perp,U_0^\perp\in\mathbb R^{d,d-K+1}$ such that their columns are the last $(d-K+1)$ columns of $U$ and $U_0$, respectively. It follows from direct calculations that
\begin{align*}
&\|\widehat H-H_0\|=\|U_0U_0'- U U'\|\leq\|U_0^\perp(U_0^\perp)'(U_0 U_0'-UU')\|+\|U_0U_0'(U_0 U_0'-UU')\|\\
&
=\|U_0^\perp(U_0^\perp)'UU'\|+\|U_0U_0'U^\perp (U^\perp)'\|\leq\|(U_0^\perp)'U\|+\|U_0'U^\perp\|=2\|\sin\Theta(U_0,U)\|.
\end{align*}
Notably, $U, U^{\perp}$ is also the eigen-space of $ZZ'  - n\sigma^2 I_d$. By Weyl's inequality (see, for example, \cite{HornJohnson}), 
\begin{align*}
\max_{1\leq i \leq d} \big| \lambda_i (ZZ'  - n\sigma^2 I_d) - \lambda_i (nRR') \big| \leq C \Vert ZZ'  - n\sigma^2 I_d - nRR' \Vert
\end{align*}
Under the condition that $s^2_{K-1}(R)\gg \max\{\sqrt{\sigma^2d/n}, \sigma^2d/n \}$, by Davis-Kahan Theorem (\cite{sin-theta}), we deduce that, with probability at least $1-o(1)$,
\begin{align}\label{hh0}
\|\widehat H-H_0\|&\leq2\|\sin\Theta(U_0,U)\|\leq\frac{2\|ZZ'-nR  R '-n\sigma^2I_d\|}{\lambda_{K-1} (nRR')}\notag\\
&\leq C\frac{\max\{\sqrt{\sigma^2d/n}, \sigma^2d/n \}}{s^2_{K-1}(R)}.
\end{align}
The proof is complete.

\section{Numerical simulation for Theorem \ref{thm:SPA-maintext}} \label{supp:Example}

In this short section, we want to provide a better sense of our bound derived in Theorem \ref{thm:SPA-maintext} and how it compares with the one from the orthodox SPA. To make it easier for the reader to see the difference between the two bounds, we consider toy example where we fix $(K, d) = (3,3)$ and
\[
\widetilde{V} = \{(20,20,0),(20,30,0),(30,20,0)\}
\] 
while we let 
\[
V = \widetilde{V} + a \cdot (0,0,1). 
\] 

We consider $50$ different values for $a$ ranging from $10$ to $1000$. It is not surprising to see that when $a$ is close to $0$ the bound of the orthodox SPA goes to infinity whereas as the simplex is bounded far away from the origin, the $K^{th}$ singular value will be bounded away from $0$. However, our bound still outperforms the traditional SPA bound even for very large values of $a$. Looking at two specific values of $a$ we have the following. For $a=10$,

\[
\beta_{new} = 0.03 , \qquad \beta(V) = 0.05
\] 
Moreover, as $a$ changes, the Figure \ref{fig:factor} below illustrate how much the ratio of 
\[ 
\frac{\mbox{our whole bound}}{\mbox{Gillis bound}} 
\] 
changes as the parameter $a$ changes. For example, when $a = 10$.
\[
\frac{g_{new}(V)}{g(V)} = 0.015,  
\] 
and so 
\[ 
\frac{\mbox{our whole bound}}{\mbox{Gillis bound}} = 0.009 
\] 
so we reduce the bound by $111$ . Similarly, when $a = 1000$, 
\[
\frac{g_{new}(V)}{g(V)} = 0.19 , \qquad \frac{\mbox{our whole bound}}{\mbox{Gillis bound}} = 0.105, 
\] 
so we have reduced the bound by $9.5$. 

\begin{figure}[ht]
    \centering
    \includegraphics[scale=0.5]{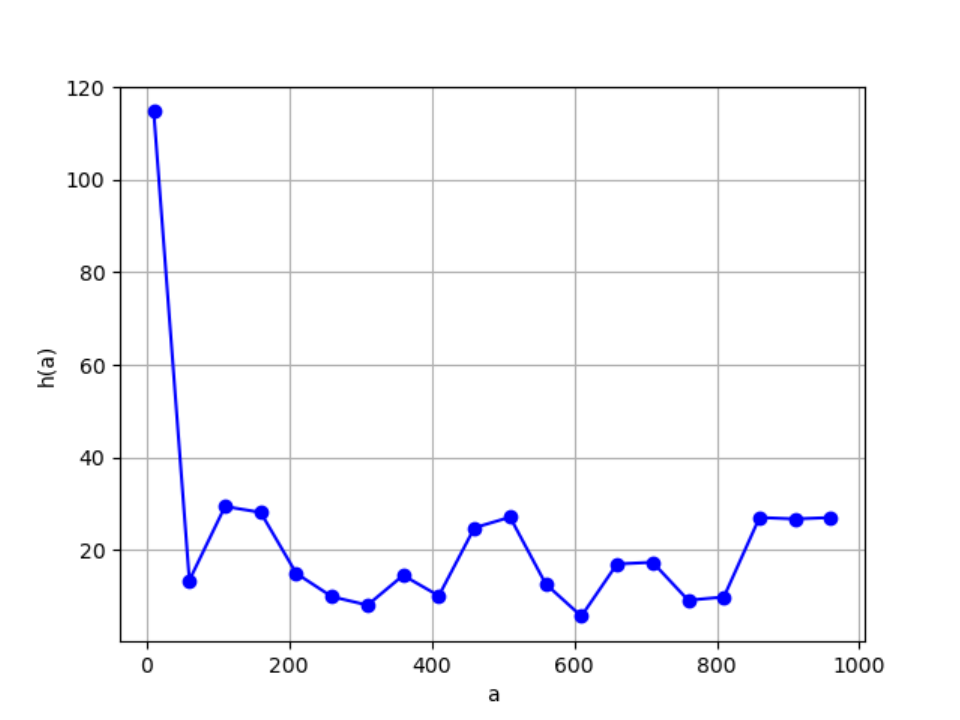}
    \caption{Factor of improvement of our bound over orthodox spa as the true simplex moves away from origin by a distance $a$.}
    \label{fig:factor}
\end{figure}

\bibliography{iclr2024_conference}

\begin{thebibliography}{30}
\providecommand{\natexlab}[1]{#1}
\providecommand{\url}[1]{\texttt{#1}}
\expandafter\ifx\csname urlstyle\endcsname\relax
  \providecommand{\doi}[1]{doi: #1}\else
  \providecommand{\doi}{doi: \begingroup \urlstyle{rm}\Url}\fi

\bibitem[Airoldi et~al.(2008)Airoldi, Blei, Fienberg, and
  Xing]{airoldi2008mixed}
Edoardo~M. Airoldi, David~M. Blei, Stephen~E. Fienberg, and Eric~P. Xing.
\newblock Mixed membership stochastic blockmodels.
\newblock \emph{J. Mach. Learn. Res.}, 9:\penalty0 1981--2014, 2008.

\bibitem[Ara{\'u}jo et~al.(2001)Ara{\'u}jo, Saldanha, and Galvao~{\protect et
  al.}]{SPA}
M.~C.~U. Ara{\'u}jo, T.~C.~B. Saldanha, and R.~K.~H. Galvao~{\protect et al.}
\newblock The successive projections algorithm for variable selection in
  spectroscopic multicomponent analysis.
\newblock \emph{Chemom. Intell. Lab. Syst.}, 57\penalty0 (2):\penalty0 65--73,
  2001.

\bibitem[Bai \& Yin(2008)Bai and Yin]{bai2008limit}
Zhi-Dong Bai and Yong-Qua Yin.
\newblock Limit of the smallest eigenvalue of a large dimensional sample
  covariance matrix.
\newblock In \emph{Advances In Statistics}, pp.\  108--127. World Scientific,
  2008.

\bibitem[Bakshi et~al.(2021)Bakshi, Bhattacharyya, Kannan, Woodruff, and
  Zhou]{bakshi2021learning}
Ainesh Bakshi, Chiranjib Bhattacharyya, Ravi Kannan, David~P Woodruff, and
  Samson Zhou.
\newblock Learning a latent simplex in input-sparsity time.
\newblock \emph{Proceedings of the International Conference on Learning
  Representations (ICLR)}, pp.\  1–11, 2021.

\bibitem[Bhattacharya et~al.(2023)Bhattacharya, Fan, and
  Hou]{bhattacharya2023inferences}
Sohom Bhattacharya, Jianqing Fan, and Jikai Hou.
\newblock Inferences on mixing probabilities and ranking in mixed-membership
  models.
\newblock \emph{arXiv:2308.14988}, 2023.

\bibitem[Bhattacharyya \& Kannan(2020)Bhattacharyya and
  Kannan]{bhattacharyya2020finding}
Chiranjib Bhattacharyya and Ravindran Kannan.
\newblock Finding a latent k--simplex in o*(k{\textperiodcentered} nnz (data))
  time via subset smoothing.
\newblock In \emph{Proceedings of the Fourteenth Annual ACM-SIAM Symposium on
  Discrete Algorithms}, pp.\  122--140. SIAM, 2020.

\bibitem[Bioucas-Dias et~al.(2012)Bioucas-Dias, Plaza, Dobigeon, Parente, Du,
  Gader, and Chanussot]{bioucas2012hyperspectral}
Jos{\'e}~M Bioucas-Dias, Antonio Plaza, Nicolas Dobigeon, Mario Parente, Qian
  Du, Paul Gader, and Jocelyn Chanussot.
\newblock Hyperspectral unmixing overview: Geometrical, statistical, and sparse
  regression-based approaches.
\newblock \emph{IEEE journal of selected topics in applied earth observations
  and remote sensing}, 5\penalty0 (2):\penalty0 354--379, 2012.

\bibitem[Brunel(2016)]{brunel2016adaptive}
Victor-Emmanuel Brunel.
\newblock Adaptive estimation of convex and polytopal density support.
\newblock \emph{Probability Theory and Related Fields}, 164\penalty0
  (1-2):\penalty0 1--16, 2016.

\bibitem[Craig(1994)]{craig1994minimum}
Maurice~D Craig.
\newblock Minimum-volume transforms for remotely sensed data.
\newblock \emph{IEEE Transactions on Geoscience and Remote Sensing},
  32\penalty0 (3):\penalty0 542--552, 1994.

\bibitem[Cutler \& Breiman(1994)Cutler and Breiman]{cutler1994archetypal}
Adele Cutler and Leo Breiman.
\newblock Archetypal analysis.
\newblock \emph{Technometrics}, 36\penalty0 (4):\penalty0 338--347, 1994.

\bibitem[Davis \& Kahan(1970)Davis and Kahan]{sin-theta}
Chandler Davis and William~Morton Kahan.
\newblock The rotation of eigenvectors by a perturbation. iii.
\newblock \emph{SIAM J. Numer. Anal.}, 7\penalty0 (1):\penalty0 1--46, 1970.

\bibitem[Gillis(2019)]{gillis2019successive}
Nicolas Gillis.
\newblock Successive projection algorithm robust to outliers.
\newblock In \emph{2019 IEEE 8th International Workshop on Computational
  Advances in Multi-Sensor Adaptive Processing (CAMSAP)}, pp.\  331--335. IEEE,
  2019.

\bibitem[Gillis \& Vavasis(2013)Gillis and Vavasis]{gillis2013fast}
Nicolas Gillis and Stephen~A Vavasis.
\newblock Fast and robust recursive algorithmsfor separable nonnegative matrix
  factorization.
\newblock \emph{IEEE transactions on pattern analysis and machine
  intelligence}, 36\penalty0 (4):\penalty0 698--714, 2013.

\bibitem[Gillis \& Vavasis(2015)Gillis and Vavasis]{gillis2015semidefinite}
Nicolas Gillis and Stephen~A Vavasis.
\newblock Semidefinite programming based preconditioning for more robust
  near-separable nonnegative matrix factorization.
\newblock \emph{SIAM Journal on Optimization}, 25\penalty0 (1):\penalty0
  677--698, 2015.

\bibitem[Hastie et~al.(2009)Hastie, Tibshirani, and Friedman]{HTF}
Trevor Hastie, Robert Tibshirani, and Jerome Friedman.
\newblock \emph{The elements of statistical learning}.
\newblock Springer, 2nd edition, 2009.

\bibitem[Horn \& Johnson(1985)Horn and Johnson]{HornJohnson}
Roger Horn and Charles Johnson.
\newblock \emph{Matrix Analysis}.
\newblock Cambridge University Press, 1985.

\bibitem[Huang et~al.(2023)Huang, Sun, and Feng]{huang2023pcabm}
Sihan Huang, Jiajin Sun, and Yang Feng.
\newblock Pcabm: Pairwise covariates-adjusted block model for community
  detection.
\newblock \emph{Journal of the American Statistical Association}, \penalty0
  (just-accepted):\penalty0 1--26, 2023.

\bibitem[Javadi \& Montanari(2020)Javadi and Montanari]{javadi2020nonnegative}
Hamid Javadi and Andrea Montanari.
\newblock Nonnegative matrix factorization via archetypal analysis.
\newblock \emph{Journal of the American Statistical Association}, 115\penalty0
  (530):\penalty0 896--907, 2020.

\bibitem[Jin et~al.(2023)Jin, Ke, and Luo]{MSCORE}
Jiashun Jin, Zheng~Tracy Ke, and Shengming Luo.
\newblock Mixed membership estimation for social networks.
\newblock \emph{J. Econom.}, https://doi.org/10.1016/j.jeconom.2022.12.003.,
  2023.

\bibitem[Ke \& Jin(2023)Ke and Jin]{SCORE-Review}
Zheng~Tracy Ke and Jiashun Jin.
\newblock The {SCORE} normalization, especially for heterogeneous network and
  text data.
\newblock \emph{Stat}, 12(1)\penalty0 (e545):\penalty0
  https://doi.org/10.1002/sta4.545, 2023.

\bibitem[Ke \& Wang(2022)Ke and Wang]{ke2017new}
Zheng~Tracy Ke and Minzhe Wang.
\newblock Using {SVD} for topic modeling.
\newblock \emph{Journal of the American Statistical Association},
  https://doi.org/10.1080/01621459.2022.2123813:\penalty0 1--16, 2022.

\bibitem[Mizutani \& Tanaka(2018)Mizutani and Tanaka]{mizutani2018efficient}
Tomohiko Mizutani and Mirai Tanaka.
\newblock Efficient preconditioning for noisy separable nonnegative matrix
  factorization problems by successive projection based low-rank
  approximations.
\newblock \emph{Machine Learning}, 107:\penalty0 643--673, 2018.

\bibitem[Nadisic et~al.(2023)Nadisic, Gillis, and Kervazo]{nadisic2023smoothed}
Nicolas Nadisic, Nicolas Gillis, and Christophe Kervazo.
\newblock Smoothed separable nonnegative matrix factorization.
\newblock \emph{Linear Algebra and its Applications}, 676:\penalty0 174--204,
  2023.

\bibitem[Rubin-Delanchy et~al.(2022)Rubin-Delanchy, Cape, Tang, and
  Priebe]{GRDPG}
Patrick Rubin-Delanchy, Joshua Cape, Minh Tang, and Carey~E Priebe.
\newblock A statistical interpretation of spectral embedding: The generalised
  random dot product graph.
\newblock \emph{Journal of the Royal Statistical Society Series B: Statistical
  Methodology}, 84\penalty0 (4):\penalty0 1446--1473, 2022.

\bibitem[Satija et~al.(2015)Satija, Farrell, Gennert, Schier, and
  Regev]{satija2015spatial}
Rahul Satija, Jeffrey~A Farrell, David Gennert, Alexander~F Schier, and Aviv
  Regev.
\newblock Spatial reconstruction of single-cell gene expression data.
\newblock \emph{Nature biotechnology}, 33\penalty0 (5):\penalty0 495--502,
  2015.

\bibitem[Stein(1966)]{stein1966note}
P~Stein.
\newblock A note on the volume of a simplex.
\newblock \emph{The American Mathematical Monthly}, 73\penalty0 (3):\penalty0
  299--301, 1966.

\bibitem[Vershynin(2010)]{vershynin}
Roman Vershynin.
\newblock Introduction to the non-asymptotic analysis of random matrices.
\newblock \emph{ArXiv.1011.3027}, 2010.

\bibitem[Winter(1999)]{winter1999n}
Michael~E Winter.
\newblock {N}-{FINDR}: An algorithm for fast autonomous spectral end-member
  determination in hyperspectral data.
\newblock In \emph{SPIE's International Symposium on Optical Science,
  Engineering, and Instrumentation}, pp.\  266--275, 1999.

\bibitem[Zhang \& Wang(2019)Zhang and Wang]{zhang2019spectral}
Anru Zhang and Mengdi Wang.
\newblock Spectral state compression of markov processes.
\newblock \emph{IEEE transactions on information theory}, 66\penalty0
  (5):\penalty0 3202--3231, 2019.

\bibitem[Zhang et~al.(2020)Zhang, Levina, and Zhu]{JiZhuMM}
Yuan Zhang, Elizaveta Levina, and Ji~Zhu.
\newblock Detecting overlapping communities in networks using spectral methods.
\newblock \emph{SIAM J. Math. Data Sci.}, 2\penalty0 (2):\penalty0 265--283,
  2020.

\end{thebibliography}
\bibliographystyle{iclr2024_conference}

\end{document}